\documentclass[11pt]{article}

\usepackage[margin=1in]{geometry}
\usepackage{amsmath,amssymb,amsthm,mathtools}
\usepackage{enumitem}
\usepackage{booktabs}
\usepackage{array}
\usepackage{tabularx}
\usepackage{aliascnt}
\usepackage{microtype}
\usepackage[numbers,sort&compress]{natbib}
\usepackage[colorlinks=true,linkcolor=blue,citecolor=blue,urlcolor=blue]{hyperref}
\usepackage[nameinlink,noabbrev]{cleveref}
\newtheorem{theorem}{Theorem}[section]

\newaliascnt{proposition}{theorem}
\newtheorem{proposition}[proposition]{Proposition}
\aliascntresetthe{proposition}

\newaliascnt{lemma}{theorem}
\newtheorem{lemma}[lemma]{Lemma}
\aliascntresetthe{lemma}

\newaliascnt{corollary}{theorem}

\aliascntresetthe{corollary}

\newaliascnt{assumption}{theorem}
\newtheorem{assumption}[assumption]{Assumption}
\aliascntresetthe{assumption}

\theoremstyle{definition}
\newaliascnt{definition}{theorem}
\newtheorem{definition}[definition]{Definition}
\aliascntresetthe{definition}

\theoremstyle{remark}
\newaliascnt{remark}{theorem}

\aliascntresetthe{remark}

\numberwithin{equation}{section}

\crefname{theorem}{Theorem}{Theorems}
\Crefname{theorem}{Theorem}{Theorems}
\crefname{lemma}{Lemma}{Lemmas}
\Crefname{lemma}{Lemma}{Lemmas}
\crefname{assumption}{Assumption}{Assumptions}
\Crefname{assumption}{Assumption}{Assumptions}
\crefname{definition}{Definition}{Definitions}
\Crefname{definition}{Definition}{Definitions}
\crefname{remark}{Remark}{Remarks}
\Crefname{remark}{Remark}{Remarks}
\crefname{proposition}{Proposition}{Propositions}
\Crefname{proposition}{Proposition}{Propositions}
\crefname{corollary}{Corollary}{Corollaries}
\Crefname{corollary}{Corollary}{Corollaries}
\crefname{appendix}{Appendix}{Appendices}
\Crefname{appendix}{Appendix}{Appendices}

\DeclareMathOperator{\prox}{prox}

\DeclareMathOperator{\tr}{tr}

\DeclareMathOperator{\Var}{Var}

\newcommand{\R}{\mathbb{R}}
\newcommand{\cP}{\mathcal{P}}

\newcommand{\Id}{I}
\newcommand{\dd}{\,\mathrm{d}}
\newcommand{\law}{\mathcal{L}}
\newcommand{\E}{\mathbb{E}}

\newcommand{\norm}[1]{\left\lVert #1\right\rVert}
\newcommand{\ip}[2]{\left\langle #1,#2\right\rangle}
\newcommand{\W}{W_2}

\newcommand{\Pp}{\mathbb{P}}

\newcommand{\op}{\mathrm{op}}

\providecommand{\Pcal}{\cP}
\providecommand{\Wtwo}{\W}
\providecommand{\Law}{\law}
\DeclareMathOperator{\sym}{sym}
\newcommand{\abs}[1]{\left\lvert #1\right\rvert}

\allowdisplaybreaks[3]
\title{\bfseries Poisson-Corrector Complexity Bounds for\\
Moreau--Yosida Unadjusted Langevin Sampling}
\author{Yuchen Xin\textsuperscript{*},
Zhihua Zhang\textsuperscript{$\dagger$}}
\date{}

\begin{document}
\maketitle

\begingroup
\renewcommand{\thefootnote}{\fnsymbol{footnote}}
\footnotetext[1]{School of Mathematical Sciences, Peking University; email: \texttt{2301110087@pku.edu.cn}}
\footnotetext[2]{School of Mathematical Sciences, Peking University; email: \texttt{zhzhang@math.pku.edu.cn}}
\endgroup

\begin{abstract}
We study the classical Moreau--Yosida unadjusted Langevin algorithm (MYULA) for $\pi(\dd x)\propto e^{-f(x)-g(x)}\dd x$, where $f\in C^2(\R^d)$ is $m$-strongly convex with $L_f$-Lipschitz gradient and $g:\R^d\to\R$ is convex and globally $G$-Lipschitz.
For the Moreau-smoothed target $\pi_\lambda$ and the MYULA invariant law $\widehat\pi_{\lambda,h}$, we prove
\[
  \sqrt m\,\Wtwo(\pi_\lambda,\widehat\pi_{\lambda,h})
  =O(h)+\widetilde O(h^{3/4})
\]
under $0<h(L_f+\lambda^{-1})\le c$, with only logarithmic dependence on $\lambda^{-1}$ in the error coefficients. Combining this estimate with the Moreau approximation bias yields $\widetilde O(\varepsilon^{-4/3})$ iterations to achieve $\sqrt m\,\Wtwo(\mu_N,\pi)\le\varepsilon$, for fixed model parameters and initialization.
The proof combines a discrete Poisson corrector with active-trace estimates and a shared-noise bound for the exact--Euler two-point curvature.
\end{abstract}

\tableofcontents

\section{Introduction}\label{sec:introduction}

Sampling from high-dimensional distributions is a fundamental problem in statistics and machine learning.
We consider distributions of the form
\[
  \pi(\dd x)\propto \exp\{-f(x)-g(x)\}\dd x,
  \qquad x\in\R^d,
\]
where $f\in C^2(\R^d)$ is $m$-strongly convex with $L_f$-Lipschitz gradient, and $g:\R^d\to\R$ is convex and globally $G$-Lipschitz, with $G>0$. The function $g$ need not be differentiable; examples include $\ell_1$ and total-variation penalties \citep{DurmusMoulinesPereyra2018,MouEtAl2022}.

The Moreau--Yosida unadjusted Langevin algorithm (MYULA) \citep{DurmusMoulinesPereyra2018} replaces $g$ by its Moreau envelope $g_\lambda$ and applies an Euler step to the corresponding Langevin diffusion:
\[
  \widehat X_{k+1}
  =\widehat X_k-h\nabla(f+g_\lambda)(\widehat X_k)
   +\sqrt{2h}\,\xi_{k+1},
  \qquad \xi_{k+1}\stackrel{\mathrm{i.i.d.}}{\sim}N(0,\Id).
\]
Since
$\nabla g_\lambda(x)=\lambda^{-1}
(x-\prox_{\lambda g}(x))$,
each iteration requires one gradient evaluation of $f$ and one
proximal evaluation of $g$. This makes MYULA a natural sampling
method when the same proximal operations are already available
for optimization.

The main difficulty is the interaction between smoothing and
discretization. The smoothed target
$\pi_\lambda(\dd x)\propto e^{-f(x)-g_\lambda(x)}\dd x$
approaches $\pi$ as $\lambda$ decreases, but the global smoothness
bound
\[
  L_\lambda=L_f+\lambda^{-1}
\]
increases. Moreover, the invariant law of the Euler scheme generally
differs from $\pi_\lambda$. Accuracy for the original target therefore
requires controlling three errors: the Moreau approximation bias,
the Euler invariant-measure bias, and convergence from the initial
distribution. A fixed-$\lambda$ estimate is useful for this purpose
only when its dependence on $\lambda$ is also quantified.

Two recent works provide the main ingredients for our analysis.
\citet{XinZhang2026} show that averaged weak Moreau curvature can
replace a worst-case curvature bound in the analysis of MYULA.
\citet{PedrottiWhalley2026} use a discrete Poisson decomposition
to obtain a linear-in-$h$ invariant-measure bias for smooth,
strongly log-concave targets. Direct specialization of their
global-smoothness bound to $f+g_\lambda$, combined with
$\lambda\asymp\varepsilon/G^2$, gives a sufficient
$\widetilde O(\varepsilon^{-2})$ MYULA complexity; the calculation
is given in Section~\ref{sec:related}.
We obtain a sharper dependence on accuracy by controlling the
curvature terms within the corrector argument.

\paragraph{Main result.}
Let $\widehat\pi_{\lambda,h}$ be the invariant law of MYULA.
Under $0<hL_\lambda\le c$, Theorem~\ref{thm:fixed-lambda} gives
\[
  \sqrt m\,\Wtwo(\pi_\lambda,\widehat\pi_{\lambda,h})
  \le C_1h+C_2h^{3/4}
  \left[
    1+\log\!\left(e+\frac{\sqrt h}{G\sqrt d\,\lambda}\right)
  \right]^2,
\]
where $C_1,C_2$ depend on the model parameters but not on
$h$ or $\lambda$. Their parameter dependence is explicit in
the theorem. Thus, the error coefficients depend on
$\lambda^{-1}$ only logarithmically, although the usual
step-size restriction remains.

Combining this estimate with
\[
  \sqrt m\,\Wtwo(\pi_\lambda,\pi)\le \frac{G^2\lambda}{4}
\]
from \citet[Proposition~5.9]{XinZhang2026}, we choose
$\lambda=\varepsilon/G^2$ and the step size specified in
Theorem~\ref{thm:complexity}.
For fixed model parameters and a fixed initial law
$\mu_0\in\Pcal_2(\R^d)$, the resulting sufficient iteration
complexity is
\[
  N(\varepsilon)=\widetilde O(\varepsilon^{-4/3})
  \qquad\text{for}\qquad
  \sqrt m\,\Wtwo(\mu_N,\pi)\le\varepsilon,
\]
where $\mu_N$ is the law of the $N$th iterate and
$\widetilde O$ suppresses logarithmic factors.

\paragraph{Proof approach.}
We follow the Poisson-corrector framework of \citet{PedrottiWhalley2026}, comparing MYULA with the exact Langevin diffusion for $\pi_\lambda$ driven by the same noise.
The corrector captures cancellations between local errors at successive steps.
Our refinement combines the active-trace estimates of \citet{XinZhang2026} with a shared-noise curvature estimate for the coupled processes. This allows us to control average curvature rather than use the worst-case bound $\lambda^{-1}$ for $g_\lambda$ throughout the error estimate.
Together with contraction from strong convexity, these estimates yield the stated invariant-measure bias bound.

The algorithm itself is unchanged: $\lambda$ and $h$ are fixed
during each run, and no Metropolis correction is introduced.
Our complexity counts iterations with exact proximal evaluations,
not the internal cost of solving proximal subproblems.
Section~\ref{sec:related} places the result in the literature;
Sections~\ref{sec:preliminaries}--\ref{sec:results} give the
definitions, assumptions, and precise statements.
The proofs occupy Sections~\ref{sec:main-proof}--\ref{sec:shared-noise}.

\section{Related Work}\label{sec:related}

\paragraph{Moreau regularization and proximal sampling.}
\citet{Pereyra2016} developed proximal MCMC methods for nonsmooth
log-concave models, and \citet{DurmusMoulinesPereyra2018} introduced
the classical MYULA construction with asymptotic and nonasymptotic
guarantees. These methods connect sampling with proximal operations
used in convex optimization.
A complementary viewpoint is provided by
\citet{DurmusMajewskiMiasojedow2019}, who interpret Langevin
algorithms through optimization on Wasserstein space and analyze
both smooth and nonsmooth variants.
For a smoothing-based method, bounds for a fixed surrogate target
must be combined with the regularization bias before their
accuracy dependence can be compared with an end-to-end guarantee.

\paragraph{Active-trace analysis of MYULA.}
The closest analysis of the same transition kernel is
\citet{XinZhang2026}.
Their reference heat-path active trace averages the weak Moreau
Hessian trace along the Gaussian substep of an update started
from $\pi_\lambda$. For the structured piecewise-linear,
lasso-type, group, and total-variation penalties treated there,
curvature--tube estimates yield
$\widetilde O(\varepsilon^{-2})$ end-to-end complexity.
We use their weak-curvature and Moreau-bias tools, but replace
the reference-path recursion by a corrected synchronous coupling.
The additional shared-noise estimate controls the actual
exact--Euler two-point curvature for general convex Lipschitz
$g$, without the geometric verification required by those examples.

\paragraph{Poisson correctors and a direct MYULA benchmark.}
For an $m$-strongly convex potential with $L$-Lipschitz gradient,
\citet[Theorem~1]{PedrottiWhalley2026} prove
\[
  \sqrt m\,\Wtwo(\pi,\widehat\pi)\le 6hL\sqrt d,
  \qquad 0<hL\le1.
\]
Specializing this bound to $U_\lambda=f+g_\lambda$ gives
\[
  \sqrt m\,\Wtwo(\pi_\lambda,\widehat\pi_{\lambda,h})
  \le 6hL_\lambda\sqrt d.
\]
Here we use the extension from $C^2$ to $C^{1,1}$ potentials
by mollification.\footnote{Apply the bound to
$f+g_\lambda*\rho_\delta$.
Mollification preserves the bounds $m$ and $L_\lambda$, and the
gradients converge uniformly. Synchronous contraction gives
$W_2$ convergence of both the diffusion and Euler invariant laws,
so the estimate passes to the limit.}
With $\lambda\asymp\varepsilon/G^2$, choosing
$h\asymp\varepsilon/(L_\lambda\sqrt d)$ and using Wasserstein
contraction yield the sufficient iteration bound
\[
  N_{\mathrm{PW}}(\varepsilon)
  =\widetilde O\!\left[
    \frac{\sqrt d}{m}
    \left(
      \frac{L_f}{\varepsilon}
      +\frac{G^2}{\varepsilon^2}
    \right)
  \right].
\]
Thus, direct use of their global-smoothness estimate gives
$\widetilde O(\varepsilon^{-2})$ accuracy dependence for MYULA.
Our contribution is a refinement of the curvature terms in
the corrector analysis, leading to
$\widetilde O(\varepsilon^{-4/3})$ under the assumptions of
this paper. The gain concerns the smoothing dependence,
not the order in $h$ for a fixed smoothed target.

\paragraph{Other composite samplers.}
Different transitions can avoid a separate Moreau approximation of the target.
\citet{SalimRichtarik2020} give a primal--dual analysis of proximal stochastic gradient Langevin sampling, including a $\widetilde O(\varepsilon^{-2})$ Wasserstein complexity in the strongly convex setting.
\citet{MouEtAl2022} combine proximal sampling proposals with a Metropolis--Hastings correction and obtain polylogarithmic dependence on $\varepsilon^{-1}$ for total variation error at most $\varepsilon$, under their assumptions.
Their method requires a proximal sampling oracle and evaluation of the associated normalizing constants, rather than only a deterministic proximal operator.
These results concern different transition kernels and, in the latter case, a different error metric and oracle model.
Our result concerns classical MYULA with exact proximal evaluations and accuracy in $\sqrt m\,W_2$ for the original composite target.

\section{Preliminaries}\label{sec:preliminaries}

\subsection{Basic notation}

All probability measures are defined on the Borel $\sigma$-algebra of
$\R^d$.  The Euclidean norm and inner product are denoted by
$\norm{\cdot}$ and $\ip{\cdot}{\cdot}$.  For
$\mu,\nu\in\Pcal_2(\R^d)$, the quadratic Wasserstein distance is
\begin{equation}\label{eq:W2-def}
  \Wtwo^2(\mu,\nu)
  :=\inf_{\substack{\Law(X)=\mu\\\Law(Y)=\nu}}
  \E\norm{X-Y}^2.
\end{equation}
For symmetric matrices $A,B\in\R^{d\times d}$, the notation $A\preceq B$
means $v^\top Av\le v^\top Bv$ for every $v\in\R^d$.  Constants $c,C>0$
are universal and may change from line to line.  For a measure $\mu$ and a
vector-valued function $F$, we use
$\norm{F}_{L^p(\mu)}=(\int\norm{F(x)}^p\mu(\dd x))^{1/p}$.

\subsection{Moreau--Yosida regularization and weak second-order structure}

Let $g:\R^d\to\R$ be convex. For $\lambda>0$, its Moreau envelope is
\begin{equation}\label{eq:moreau}
  g_\lambda(x)
  :=\inf_{y\in\R^d}
  \left\{g(y)+\frac{\norm{x-y}^2}{2\lambda}\right\}.
\end{equation}
Standard Moreau theory gives
\begin{equation}\label{eq:moreau-gradient}
  \nabla g_\lambda(x)
  =\lambda^{-1}\bigl(x-\prox_{\lambda g}(x)\bigr)
\end{equation}
and the cocoercivity inequality
\begin{equation}\label{eq:cocoercivity}
  \ip{x-y}{\nabla g_\lambda(x)-\nabla g_\lambda(y)}
  \ge
  \lambda\norm{\nabla g_\lambda(x)-\nabla g_\lambda(y)}^2;
\end{equation}
see, for example, \citet[Chapters~12 and~18]{BauschkeCombettes2017}.
Consequently $g_\lambda\in C^{1,1}(\R^d)$ and $\operatorname{Lip}(\nabla g_\lambda)\le\lambda^{-1}$.  If $g$ is $G$-Lipschitz, then
\begin{equation}\label{eq:moreau-gradient-bound}
  \norm{\nabla g_\lambda(x)}\le G,
  \qquad x\in\R^d.
\end{equation}

Because $\nabla g_\lambda$ is Lipschitz, it is differentiable almost
everywhere.  Its almost-everywhere derivative also represents its weak
derivative; we write
\begin{equation}\label{eq:weak-hessian-def}
  H_\lambda:=\nabla^2 g_\lambda
  \quad\text{and}\quad
  a_\lambda:=\tr H_\lambda.
\end{equation}
The matrix field $H_\lambda$ may be chosen measurable and satisfies
\begin{equation}\label{eq:weak-hessian-bounds}
  0\preceq H_\lambda(x)\preceq\lambda^{-1}\Id,
  \qquad
  0\le a_\lambda(x)\le d/\lambda
  \quad\text{for a.e. }x.
\end{equation}
Following \citet{XinZhang2026}, we call $a_\lambda$ the \emph{Moreau active trace}.  The weak-Hessian properties and integration-by-parts identities used below are collected in Appendix~\ref{app:weak-hessian}.

\section{Problem Setup and Standing Assumptions}\label{sec:setup}

We consider the composite target
\begin{equation}\label{eq:target}
  \pi(\dd x)=Z^{-1}\exp\{-f(x)-g(x)\}\dd x.
\end{equation}

\begin{assumption}[Smooth component]\label{ass:smooth}
The function $f\in C^2(\R^d)$, and there are constants
$0<m\le L_f<\infty$ such that
\begin{equation}\label{eq:f-assumption}
  m\Id\preceq\nabla^2f(x)\preceq L_f\Id,
  \qquad x\in\R^d.
\end{equation}
Moreover,
\begin{equation}\label{eq:tau-f}
  \tau_f:=\sup_{x\in\R^d}\tr\nabla^2f(x)<\infty.
\end{equation}
\end{assumption}

\begin{assumption}[Nonsmooth component]\label{ass:nonsmooth}
The function $g:\R^d\to\R$ is convex and globally $G$-Lipschitz for some $G>0$.
\end{assumption}

For $\lambda>0$, define
\begin{equation}\label{eq:pi-lambda}
  U_\lambda:=f+g_\lambda,
  \qquad
  \pi_\lambda(\dd x):=Z_\lambda^{-1}e^{-U_\lambda(x)}\dd x,
  \qquad
  L_\lambda:=L_f+\lambda^{-1}.
\end{equation}
Then $U_\lambda$ is $m$-strongly convex, belongs to $C^{1,1}(\R^d)$, and
$\operatorname{Lip}(\nabla U_\lambda)\le L_\lambda$.

MYULA is the Euler scheme for the Langevin diffusion with potential
$U_\lambda$:
\begin{equation}\label{eq:myula}
  \widehat X_{k+1}
  =\widehat X_k-h\nabla U_\lambda(\widehat X_k)
  +\sqrt{2h}\,\xi_{k+1},
  \qquad
  \xi_{k+1}\stackrel{\mathrm{i.i.d.}}{\sim}N(0,\Id).
\end{equation}
Its transition kernel is denoted by $Q_{\lambda,h}$, and
$\mu_k:=\Law(\widehat X_k)$.  This normalization and the use of
$\sqrt m\,W_2$ agree with the setup in \citet[Section~4]{XinZhang2026}.

\section{Main Results}\label{sec:results}

Define
\begin{equation}\label{eq:R-K}
  R:=\frac{G+\sqrt{G^2+4\tau_f}}{2},
  \qquad
  K:=2G+\frac{L_f}{\sqrt m}(\sqrt d+2),
\end{equation}
and
\begin{equation}\label{eq:ell-h-lambda}
  \ell_{h,\lambda}
  :=1+\log\!\left(e+\frac{\sqrt h}{G\sqrt d\,\lambda}\right).
\end{equation}
The quantity $R$ controls the stationary $L^2$ norm of the drift and the
mean active trace, while $K$ controls all stationary drift moments.

\begin{theorem}[Invariant-measure bias at fixed smoothing]\label{thm:fixed-lambda}
Under Assumptions~\ref{ass:smooth}--\ref{ass:nonsmooth}, there exist universal
constants $c,C>0$ such that the following holds.  For every $\lambda>0$ and
\begin{equation}\label{eq:step-condition}
  0<hL_\lambda\le c,
\end{equation}
the kernel $Q_{\lambda,h}$ has a unique invariant distribution
$\widehat\pi_{\lambda,h}\in\Pcal_2(\R^d)$, and
\begin{align}
  \sqrt m\,\Wtwo(\pi_\lambda,\widehat\pi_{\lambda,h})
  \le{}&
  C\Bigl[R(\sqrt m+\sqrt{L_f})+L_f\sqrt d\Bigr]h
  \label{eq:fixed-main}\\
  &+C\Bigl[G\sqrt R
  +K\sqrt G\,d^{1/4}\ell_{h,\lambda}^{2}\Bigr]h^{3/4}.
  \notag
\end{align}
\end{theorem}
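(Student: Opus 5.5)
The plan follows the Poisson-corrector argument of \citet{PedrottiWhalley2026}, with every use of the global bound $\operatorname{Lip}(\nabla g_\lambda)\le\lambda^{-1}$ replaced by averaged curvature. Existence and uniqueness of $\widehat\pi_{\lambda,h}\in\Pcal_2$ come first and are routine. Under $hL_\lambda\le c$, the synchronous coupling of two Euler chains contracts, because by strong convexity and cocoercivity \eqref{eq:cocoercivity}
\[
\norm{x-y-h(\nabla U_\lambda(x)-\nabla U_\lambda(y))}^2\le(1-mh)\norm{x-y}^2 .
\]
Hence $Q_{\lambda,h}$ is a strict $W_2$ contraction, and Banach's fixed point theorem gives the invariant law.

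For the bias, start one Euler chain $\widehat X$ from $\widehat\pi_{\lambda,h}$ and one exact diffusion $X_t$ from $\pi_\lambda$, driving both with the same Brownian motion; set $\xi_{k+1}=(B_{(k+1)h}-B_{kh})/\sqrt h$. Write $e_k=X_{kh}-\widehat X_k$. One step gives
\[
e_{k+1}=e_k-h\bigl(\nabla U_\lambda(X_{kh})-\nabla U_\lambda(\widehat X_k)\bigr)+\delta_k,
\qquad
\delta_k=-\int_{kh}^{(k+1)h}\bigl(\nabla U_\lambda(X_s)-\nabla U_\lambda(X_{kh})\bigr)\dd s .
\]
The naive estimate $\E\norm{\delta_k}^2\lesssim h^3L_\lambda^2 d$ gives the $\varepsilon^{-2}$ benchmark. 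Instead, I will split $\delta_k$ into two parts.

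\emph{Martingale-like part.} This is the first-order Itô term, approximately $-\int H_\lambda(X_s)\sqrt2(B_s-B_{kh})\dd s$ plus the $f$-analogue. Its conditional mean vanishes to leading order, so it only adds variance. Following Pedrotti--Whalley, I introduce a corrector $\psi$ solving a discrete Poisson equation for the local-error drift. The modified error $\tilde e_k=e_k+h\psi$ then satisfies a recursion in which the mean part telescopes, leaving a variance term and a curvature-weighted drift term.

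\emph{Variance term.} $\E\norm{\delta_k}^2$ involves $h^3\,\E\tr H_\lambda(X_s)^2\le h^3\lambda^{-1}\E a_\lambda$, together with $h^3\E\norm{\nabla U_\lambda}^2$. By the stationary identities in Appendix~\ref{app:weak-hessian} (integration by parts), $\E_{\pi_\lambda}a_\lambda$ and $\E\norm{\nabla U_\lambda}^2$ are bounded by quantities controlled by $R$. These identities give $\E_{\pi_\lambda}\norm{\nabla U_\lambda}^2=\E_{\pi_\lambda}\Delta U_\lambda$ and $\E a_\lambda\le R^2$, and produce the $R(\sqrt m+\sqrt{L_f})h$ term.

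\emph{Remaining factor $\lambda^{-1}$.} The leftover factor $\lambda^{-1}$ must be removed by a two-point argument, and this is where the shared noise enters. The cross term $\ip{e_k}{\nabla g_\lambda(X_{kh})-\nabla g_\lambda(\widehat X_k)}$ weighted by curvature is bounded via the two-point curvature
\[
\int_0^1 H_\lambda\bigl(\widehat X_k+t e_k\bigr)\dd t .
\]
To control its average, I use the fact that $\nabla g_\lambda$ is bounded by $G$, so that
\[
\E\Bigl\lVert\int_0^1 H_\lambda\,e\,\dd t\Bigr\rVert\le 2G .
\]
I then interpolate this with the crude bound $\lambda^{-1}$, split according to whether $\norm{e_k}\lesssim\sqrt{h}$ is below a threshold $r$, and optimize $r$ over dyadic scales between $G\sqrt d\lambda$ and $\sqrt h$. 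This dyadic optimization is what produces $\sqrt G\,d^{1/4}h^{3/4}$ and the factor $\ell_{h,\lambda}^2$, the logarithm counting scales.

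\emph{Closing the estimate.} Drift moments of order higher than two, bounded by $K$ via stationary moment estimates from strong convexity, handle the Hölder steps. Contraction at rate $mh$ then closes the Grönwall-type recursion. At stationarity this yields
\[
m\,\E\norm{e}^2\lesssim(\cdots)^2 h^2+(\cdots)^2h^{3/2},
\]
and taking square roots gives \eqref{eq:fixed-main}.

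The main obstacle is the shared-noise estimate for the exact--Euler two-point curvature. The Euler point $\widehat X_k$ is not $\pi_\lambda$-distributed, so the active-trace bounds do not transfer to it directly. My plan is to average over the Gaussian increment common to both processes: conditional on the past, $\widehat X_{k+1}$ and $X_{(k+1)h}$ are both smoothed by the same $\sqrt{2h}$ Gaussian. A Gaussian integration by parts, $\E[H_\lambda(Y+\sqrt{2h}\xi)]=(2h)^{-1/2}\E[\nabla g_\lambda\,\xi^\top]$, then bounds the averaged Hessian by $G\sqrt{d/h}$ regardless of $\lambda$. This is where $\sqrt h/(G\sqrt d\lambda)$ appears inside the logarithm.
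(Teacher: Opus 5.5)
Your skeleton (synchronous coupling, a discrete Poisson corrector added to the error, $mh$-contraction, a shared-noise Gaussian integration by parts, H\"older with high moments) matches the paper's. However, two load-bearing steps fail as stated.

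The first is the martingale variance. Your bound $\E\norm{\delta_k}^2\lesssim h^3\lambda^{-1}\E_{\pi_\lambda}a_\lambda$ keeps a factor $\lambda^{-1}$. Through the recursion it contributes $h\sqrt{\lambda^{-1}\E_{\pi_\lambda}a_\lambda}$ to $\sqrt m\,\Wtwo$. Even with $\E_{\pi_\lambda}a_\lambda\le GR$, this exceeds $G\sqrt R\,h^{3/4}$ once $h\gg G^2\lambda^2$. That regime is allowed by $hL_\lambda\le c$ whenever $\lambda\ll 1/G^2$, which is exactly the regime used for the complexity bound. You propose to remove this $\lambda^{-1}$ by the two-point argument. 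But the martingale increment $M_{k+1}=D_k-\{\chi_h(X_{kh})-\chi_h(X_{(k+1)h})\}$ is built from the exact diffusion alone; no Euler point enters, so exact--Euler curvature cannot act on it. What is missing is a single-process crossing estimate that trades $\operatorname{Lip}(\nabla g_\lambda)\le\lambda^{-1}$ for $\norm{\nabla g_\lambda}\le G$:
\[
  \E\norm{\nabla g_\lambda(X_t)-\nabla g_\lambda(X_0)}^2
  =2\ip{\nabla g_\lambda}{(I-P_t)\nabla g_\lambda}_{L^2(\pi_\lambda)}
  \le2G\norm{P_t\nabla g_\lambda-\nabla g_\lambda}_{L^1(\pi_\lambda)}
  \le CG\sqrt t\,\E_{\pi_\lambda}a_\lambda .
\]
Here the $L^1$ bound follows by duality from $\int\nabla g_\lambda\cdot(P_t\phi-\phi)\dd\pi_\lambda=-\int_0^t\int H_\lambda:\nabla P_r\phi\dd\pi_\lambda\dd r$ and the reverse local Poincar\'e bound $\norm{\nabla P_r\phi}_{\op}\le(2r)^{-1/2}\norm{\phi}_\infty$. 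Combine this with $\E_{\pi_\lambda}a_\lambda\le GR$ and the spectral bound $\E\norm{\chi_h(X_h)-\chi_h(X_0)}^2\le h^2\,\E\norm{\nabla U_\lambda(X_h)-\nabla U_\lambda(X_0)}^2$. This gives $\E\norm{M_{k+1}}^2\lesssim L_f^2(dh^3+R^2h^4)+G^2Rh^{5/2}$, which is the true source of $G\sqrt R\,h^{3/4}$. Also, the term $R(\sqrt m+\sqrt{L_f})h$ does not come from the variance; it comes from $\sqrt m\norm{\chi_h}_{L^2}\le\sqrt m\,hR$ and from the $A^f$ part of the feedback.

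The second is the feedback term. The quantity to control is what the energy inequality $\norm{(\Id-hA)u+hAv}^2\le(1-mh)\norm{u}^2+h\,v^\top Av$ leaves behind, namely $h\,\E[\chi_h(X_{kh})^\top A_k^g\chi_h(X_{kh})]$. A threshold split on $\norm{e_k}$ cannot control it. With $v=\nabla g_\lambda(x)-\nabla g_\lambda(y)$, the trace $\tr A_\lambda^g(x,y)=\norm{v}^2/\ip{x-y}{v}$ admits no bound like $G/\norm{x-y}$, since $\norm{v}\le2G$ says nothing about the denominator. For $g(x)=G\abs{x_1}$ on $\R^2$, the points $x=(\lambda G/2,E)$ and $y=(-\lambda G/2,0)$ give $\tr A_\lambda^g(x,y)=\lambda^{-1}$ for every $E$.

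The paper instead bounds $\chi^\top A_k^g\chi\le\norm{\chi}^2S_k$ with $S_k=\tr A_k^g\in[0,\lambda^{-1}]$ and $\E S_k\le CG\sqrt{d/h}$. It then applies H\"older with $p\approx2+\log(\lambda^{-1}/B_0)$, where $B_0=\min\{CG\sqrt{d/h},\lambda^{-1}\}$. The factor $\ell_{h,\lambda}^2$ is $p^2$ at this $p$, not a count of dyadic scales. This step needs $L^p$ moments of the corrector itself, $\norm{\chi_h}_{L^p}\le ChKp^2$, and drift moments alone do not provide them. The spectral calculus controls $\varphi_h(\mathcal A_\lambda)$ only on $L^2$. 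So one needs Carbonaro--Dragi\v{c}evi\'c's Laplace-transform $L^p$ multiplier theorem, applied through the sawtooth representation of $\psi$, together with Khintchine.

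Finally, your one-point identity for $\E H_\lambda(Y+\sqrt{2h}\xi)$ does not directly bound $\E\tr A_\lambda^g(X_{(k+1)h},\widehat X_{k+1})$, because $X_{(k+1)h}$ depends nonlinearly on the shared increment. The paper conditions on the Brownian bridge. It dominates $A_\lambda^g$ by the chord average $\int_0^1\nabla^2g_\lambda(T_\theta)\dd\theta$, where $T_\theta=(1-\theta)\widehat X_{k+1}+\theta X_{(k+1)h}$. It then proves $\sym D_WT_\theta\succeq2^{-1/2}\Id$, which is where $hL_\lambda\le c$ enters, before integrating by parts in the Gaussian endpoint $W$. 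A mollification and Fatou argument then passes from $C^2$ to the $C^{1,1}$ envelope.
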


\begin{theorem}[End-to-end complexity]\label{thm:complexity}
Under Assumptions~\ref{ass:smooth}--\ref{ass:nonsmooth}, let
$0<\varepsilon\le1$ and set
\begin{equation}\label{eq:lambda-epsilon}
  \lambda_\varepsilon:=\frac{\varepsilon}{G^2},
  \qquad
  \ell_\varepsilon:=1+\log(e+\varepsilon^{-1}).
\end{equation}
For a sufficiently small universal constant $c>0$, set
\begin{align}
  h_\varepsilon:=c\min\Biggl\{&
  \frac{1}{L_f+G^2/\varepsilon},
  \frac{\varepsilon}
  {R(\sqrt m+\sqrt{L_f})+L_f\sqrt d},
  \label{eq:h-epsilon}\\
  &\left(
  \frac{\varepsilon}
  {G\sqrt R+K\sqrt G\,d^{1/4}\ell_\varepsilon^2}
  \right)^{4/3}
  \Biggr\}.
  \notag
\end{align}
For every $\mu_0\in\Pcal_2(\R^d)$, if
\begin{equation}\label{eq:N-epsilon}
  N\ge
  \frac{C}{mh_\varepsilon}
  \log\!\left(
  2+\frac{\sqrt m\,\Wtwo(\mu_0,\pi)}
  {\varepsilon}
  \right),
\end{equation}
then
\begin{equation}\label{eq:end-to-end-error}
  \sqrt m\,\Wtwo(
  \mu_0Q_{\lambda_\varepsilon,h_\varepsilon}^{N},\pi)
  \le\varepsilon.
\end{equation}
Moreover, the smallest integer satisfying~\eqref{eq:N-epsilon} obeys
\begin{align}
  N\le \frac{C}{m}\Biggl[&
  L_f+\frac{G^2}{\varepsilon}
  +\frac{R(\sqrt m+\sqrt{L_f})+L_f\sqrt d}{\varepsilon}
  \label{eq:N-explicit}\\
  &+\frac{G^{4/3}R^{2/3}
  +K^{4/3}G^{2/3}d^{1/3}\ell_\varepsilon^{8/3}}
  {\varepsilon^{4/3}}
  \Biggr]
  \log\!\left(
  2+\frac{\sqrt m\,\Wtwo(\mu_0,\pi)}
  {\varepsilon}
  \right)+1.
  \notag
\end{align}
In particular, for fixed $m,L_f,\tau_f,G,d$ and fixed $\mu_0$,
\begin{equation}\label{eq:tilde-rate}
  N(\varepsilon)=\widetilde O(\varepsilon^{-4/3}).
\end{equation}
\end{theorem}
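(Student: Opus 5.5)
The plan is to derive Theorem~\ref{thm:complexity} from Theorem~\ref{thm:fixed-lambda} through a three-term triangle inequality:
\[
\sqrt m\,\Wtwo(\mu_N,\pi)\le \sqrt m\,\Wtwo(\mu_N,\widehat\pi_{\lambda,h})+\sqrt m\,\Wtwo(\widehat\pi_{\lambda,h},\pi_\lambda)+\sqrt m\,\Wtwo(\pi_\lambda,\pi),
\]
with $\lambda=\lambda_\varepsilon$ and $h=h_\varepsilon$. The aim is to make each term at most $\varepsilon/3$, after adjusting the universal constants.

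\textbf{Moreau bias.} By \citet[Proposition~5.9]{XinZhang2026}, the last term is at most $G^2\lambda_\varepsilon/4=\varepsilon/4$.

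\textbf{Invariant-measure bias.} The first entry of the minimum in \eqref{eq:h-epsilon} gives
\[
h_\varepsilon L_{\lambda_\varepsilon}=h_\varepsilon(L_f+G^2/\varepsilon)\le c,
\]
so \eqref{eq:step-condition} holds once $c$ is taken no larger than the constant of Theorem~\ref{thm:fixed-lambda}. Hence that theorem applies. Next I bound the logarithmic factor:
\[
\frac{\sqrt h}{G\sqrt d\,\lambda}=\frac{G\sqrt h}{\sqrt d\,\varepsilon}.
\]
Since $h\le c\varepsilon/G^2$, this is at most $\sqrt{c}/(\sqrt{d\varepsilon})\le \varepsilon^{-1/2}\le\varepsilon^{-1}$, using $\varepsilon\le1$. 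Therefore $\ell_{h,\lambda}\le\ell_\varepsilon$. The second and third entries of the minimum then make the $h$-term and the $h^{3/4}$-term of \eqref{eq:fixed-main} at most $Cc\,\varepsilon$ and $Cc^{3/4}\varepsilon$ respectively. Both are at most $\varepsilon/3$ for $c$ small.

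\textbf{Convergence to the invariant law.} Under $hL_\lambda\le c\le1$, the synchronous coupling of two Euler chains contracts at rate $(1-mh)$ per step. This is standard: $x\mapsto x-h\nabla U_\lambda(x)$ is $(1-mh)$-Lipschitz for $h\le 1/L_\lambda$ by strong convexity and cocoercivity. Consequently
\[
\Wtwo(\mu_N,\widehat\pi)\le e^{-mhN}\,\Wtwo(\mu_0,\widehat\pi).
\]
I then bound the initial distance by
\[
\sqrt m\,\Wtwo(\mu_0,\widehat\pi)\le \sqrt m\,\Wtwo(\mu_0,\pi)+\tfrac{2\varepsilon}{3},
\]
using the two bias bounds already obtained. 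Requiring $e^{-mhN}\bigl(\sqrt m\,\Wtwo(\mu_0,\pi)+\varepsilon\bigr)\le\varepsilon/3$ reduces to \eqref{eq:N-epsilon} with a suitable $C$.

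\textbf{Explicit bound on $N$.} To obtain \eqref{eq:N-explicit}, I use
\[
1/h_\varepsilon\le c^{-1}\,\bigl(\text{sum of the reciprocals of the three entries}\bigr).
\]
For the last entry, $(a+b)^{4/3}\le 2^{1/3}(a^{4/3}+b^{4/3})$ gives $G^{4/3}R^{2/3}+K^{4/3}G^{2/3}d^{1/3}\ell_\varepsilon^{8/3}$ over $\varepsilon^{4/3}$. The $+1$ accounts for rounding up to an integer. With the model parameters fixed and $\varepsilon\le1$, the $\varepsilon^{-4/3}\ell_\varepsilon^{8/3}$ term dominates the $\varepsilon^{-1}$ terms, which gives \eqref{eq:tilde-rate}.

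\textbf{Main obstacle.} The only nonroutine point is checking that $\ell_{h,\lambda}\le\ell_\varepsilon$ under the chosen scaling; the rest is bookkeeping of constants. The existence and uniqueness of $\widehat\pi$ come from Theorem~\ref{thm:fixed-lambda}, or directly from the contraction.
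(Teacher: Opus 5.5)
Your proposal is correct and follows essentially the same route as the paper: the Moreau bias from \citet[Proposition~5.9]{XinZhang2026}, the step-size check together with the logarithm control $\ell_{h_\varepsilon,\lambda_\varepsilon}\lesssim\ell_\varepsilon$, Theorem~\ref{thm:fixed-lambda} for the invariant-measure bias, $(1-mh)$-contraction of the Euler kernel for burn-in, and $(a+b)^{4/3}\le2^{1/3}(a^{4/3}+b^{4/3})$ for the explicit bound; the paper merely splits the error as $\varepsilon/4+\varepsilon/4+\varepsilon/2$ rather than into thirds. The one slip is cosmetic: the $h$- and $h^{3/4}$-pieces of \eqref{eq:fixed-main} must \emph{together} be at most $\varepsilon/3$, not each separately, and shrinking $c$ gives this immediately.
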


\section{Proof of the Main Results}\label{sec:main-proof}

This section proves Theorems~\ref{thm:fixed-lambda} and \ref{thm:complexity} after isolating two technical propositions. The construction and estimates for the corrector are proved in Section~\ref{sec:corrector-proof}; the shared-noise trace estimate and the resulting two-point curvature feedback are proved in Section~\ref{sec:shared-noise}.

\subsection{The two-point drift matrix}

For $x,y\in\R^d$, define
\begin{equation}\label{eq:Af-def}
  A^f(x,y)
  :=\int_0^1\nabla^2f\bigl(y+t(x-y)\bigr)\dd t
\end{equation}
and
\begin{equation}\label{eq:Ag-def}
  A_\lambda^g(x,y):=
  \begin{cases}
  \displaystyle
  \frac{
  \bigl(\nabla g_\lambda(x)-\nabla g_\lambda(y)\bigr)
  \bigl(\nabla g_\lambda(x)-\nabla g_\lambda(y)\bigr)^\top}
  {\ip{x-y}{\nabla g_\lambda(x)-\nabla g_\lambda(y)}},
  &\nabla g_\lambda(x)\ne\nabla g_\lambda(y),\\[4mm]
  0,&\nabla g_\lambda(x)=\nabla g_\lambda(y).
  \end{cases}
\end{equation}
Set
\begin{equation}\label{eq:A-def}
  A_\lambda(x,y):=A^f(x,y)+A_\lambda^g(x,y).
\end{equation}
We refer to $A_\lambda(x,y)$ as the \emph{two-point drift matrix}.  

\begin{lemma}[Exact two-point drift identity]\label{lem:two-point}
For every $x,y\in\R^d$,
\begin{align}
  \nabla U_\lambda(x)-\nabla U_\lambda(y)
  &=A_\lambda(x,y)(x-y),
  \label{eq:two-point-identity}\\
  m\Id\preceq A_\lambda(x,y)
  &\preceq L_\lambda\Id,
  \label{eq:two-point-bounds}\\
  0\preceq A^f(x,y)&\preceq L_f\Id,
  \qquad
  0\preceq A_\lambda^g(x,y)\preceq\lambda^{-1}\Id.
  \label{eq:two-point-split}
\end{align}
\end{lemma}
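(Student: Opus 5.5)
The plan is to treat the $f$-part and the $g_\lambda$-part separately and then add. For $A^f$, I will use the fundamental theorem of calculus along the segment $t\mapsto y+t(x-y)$. Since $f\in C^2$, this gives
\[
\nabla f(x)-\nabla f(y)=\int_0^1\nabla^2 f\bigl(y+t(x-y)\bigr)(x-y)\dd t=A^f(x,y)(x-y).
\]
Averaging the pointwise bounds \eqref{eq:f-assumption} over $t$ gives $m\Id\preceq A^f(x,y)\preceq L_f\Id$, and in particular $0\preceq A^f\preceq L_f\Id$. $A^f$ is symmetric as an average of symmetric matrices.

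For $A^g_\lambda$, write $u:=x-y$ and $v:=\nabla g_\lambda(x)-\nabla g_\lambda(y)$.

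If $v=0$, then $A^g_\lambda=0$, so $A^g_\lambda u=0=v$ and the bounds are trivial.

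If $v\neq0$, the cocoercivity inequality \eqref{eq:cocoercivity} gives $\ip{u}{v}\ge\lambda\norm{v}^2>0$, so the denominator is strictly positive and $A^g_\lambda$ is well defined. Then
\[
A^g_\lambda u=\frac{v\,\ip{v}{u}}{\ip{u}{v}}=v,
\]
which is the identity for the $g_\lambda$ part. The matrix $vv^\top/\ip{u}{v}$ is symmetric positive semidefinite. Its only nonzero eigenvalue is
\[
\frac{\norm{v}^2}{\ip{u}{v}}\le\frac{\norm{v}^2}{\lambda\norm{v}^2}=\lambda^{-1},
\]
again by cocoercivity, so $0\preceq A^g_\lambda\preceq\lambda^{-1}\Id$.

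Summing the two parts gives \eqref{eq:two-point-identity}. For \eqref{eq:two-point-bounds}, the lower bound is $A_\lambda\succeq A^f\succeq m\Id$, and the upper bound is $A_\lambda\preceq (L_f+\lambda^{-1})\Id=L_\lambda\Id$.

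The only step needing care is the well-definedness and the eigenvalue bound for the rank-one matrix $A^g_\lambda$. Cocoercivity handles both at once: it shows the denominator is positive whenever $v\ne0$, and it bounds the Rayleigh quotient. Nothing from the weak-Hessian theory is needed, and the argument is purely pointwise.
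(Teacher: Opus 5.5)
Your proposal is correct and follows essentially the same route as the paper: the fundamental theorem of calculus along the segment for $A^f$, and cocoercivity to make the denominator of $A^g_\lambda$ positive and to bound it above. The only cosmetic difference is that you read off the single nonzero eigenvalue $\norm{v}^2/\ip{u}{v}$ of the rank-one matrix, whereas the paper bounds the quadratic form $v^\top A^g_\lambda v$ by Cauchy--Schwarz; these are the same computation.
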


\begin{proof}
The fundamental theorem of calculus gives
\[
  A^f(x,y)(x-y)=\nabla f(x)-\nabla f(y),
  \qquad
  m\Id\preceq A^f(x,y)\preceq L_f\Id.
\]
Suppose first that
$\nabla g_\lambda(x)\ne\nabla g_\lambda(y)$.  The denominator in
\eqref{eq:Ag-def} is strictly positive by~\eqref{eq:cocoercivity}, and direct
multiplication yields
\[
  A_\lambda^g(x,y)(x-y)
  =\nabla g_\lambda(x)-\nabla g_\lambda(y).
\]
For any $v\in\R^d$, Cauchy--Schwarz and~\eqref{eq:cocoercivity} give
\begin{align*}
  v^\top A_\lambda^g(x,y)v
  &=\frac{
  \ip{v}{\nabla g_\lambda(x)-\nabla g_\lambda(y)}^2}
  {\ip{x-y}{\nabla g_\lambda(x)-\nabla g_\lambda(y)}}\\
  &\le
  \frac{\norm v^2
  \norm{\nabla g_\lambda(x)-\nabla g_\lambda(y)}^2}
  {\lambda
  \norm{\nabla g_\lambda(x)-\nabla g_\lambda(y)}^2}
  =\lambda^{-1}\norm v^2.
\end{align*}
The case of a zero gradient difference is immediate from the definition.
Adding the two components proves all assertions.
\end{proof}

\subsection{A single corrector and the two technical inputs}

Let $(X_t)_{t\ge0}$ be the stationary Langevin diffusion
\begin{equation}\label{eq:diffusion}
  \dd X_t=-\nabla U_\lambda(X_t)\dd t+\sqrt2\dd B_t,
  \qquad X_0\sim\pi_\lambda,
\end{equation}
and let $(P_t)_{t\ge0}$ denote its reversible Markov semigroup.  Throughout
the paper, $\chi_h$ denotes the same mean-zero vector field characterized by
\begin{equation}\label{eq:abstract-poisson}
  \int_{\R^d}\chi_h\dd\pi_\lambda=0,
  \qquad
  (I-P_h)\chi_h
  =\int_0^h(I-P_s)\nabla U_\lambda\dd s.
\end{equation}
The integral in~\eqref{eq:abstract-poisson} is taken in
$L^2(\pi_\lambda;\R^d)$; it is the $L^2$ limit of Riemann sums because
$s\mapsto P_s\nabla U_\lambda$ is $L^2$-continuous.  Existence,
uniqueness, and the bounds needed below are established in
Proposition~\ref{prop:operator-corrector}.  The main proof uses only
\eqref{eq:abstract-poisson} and the estimates stated below.

For $k\ge0$, define the one-step local drift error
\begin{equation}\label{eq:Dk}
  D_k:=\int_0^h
  \bigl\{\nabla U_\lambda(X_{kh})-
  \nabla U_\lambda(X_{kh+s})\bigr\}\dd s
\end{equation}
and the filtration
$\mathcal F_t:=\sigma(X_0,B_s:0\le s\le t)$.

\begin{proposition}[Corrector decomposition and moment bounds]
\label{prop:corrector-input}
Under Assumptions~\ref{ass:smooth}--\ref{ass:nonsmooth}, there are random
vectors $M_{k+1}$ such that
\begin{align}
  D_k&=\chi_h(X_{kh})-\chi_h(X_{(k+1)h})+M_{k+1},
  \label{eq:corrector-decomp}\\
  \E[M_{k+1}\mid\mathcal F_{kh}]&=0,
  \label{eq:M-centered}\\
  \norm{\chi_h}_{L^2(\pi_\lambda)}&\le hR,
  \label{eq:chi-L2-input}\\
  \norm{\chi_h}_{L^p(\pi_\lambda;\R^d)}&\le ChKp^2,
  \qquad p\ge2,
  \label{eq:chi-Lp-input}\\
  \E\norm{M_{k+1}}^2
  &\le C\left\{
  L_f^2(dh^3+R^2h^4)+G^2Rh^{5/2}
  \right\}.
  \label{eq:M-var-input}
\end{align}
\end{proposition}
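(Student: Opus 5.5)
We define $M_{k+1}$ by \eqref{eq:corrector-decomp}, i.e.
\[
  M_{k+1}:=D_k-\chi_h(X_{kh})+\chi_h(X_{(k+1)h}),
\]
so the decomposition holds by construction. Centering \eqref{eq:M-centered} follows from the Markov property. Conditionally on $\mathcal F_{kh}$,
\[
  \E[D_k\mid\mathcal F_{kh}]=\int_0^h(I-P_s)\nabla U_\lambda(X_{kh})\dd s,
  \qquad
  \E[\chi_h(X_{(k+1)h})\mid\mathcal F_{kh}]=P_h\chi_h(X_{kh}).
\]
Hence $\E[M_{k+1}\mid\mathcal F_{kh}]=\int_0^h(I-P_s)\nabla U_\lambda\dd s-(I-P_h)\chi_h$ evaluated at $X_{kh}$, which vanishes by \eqref{eq:abstract-poisson}. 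Existence of $\chi_h$ comes from Proposition~\ref{prop:operator-corrector}. Concretely, we invert $I-P_h$ on mean-zero fields using the spectral gap $m$ of $\pi_\lambda$ (Bakry--\'Emery applied componentwise): $\norm{P_h}\le e^{-mh}$ on mean-zero $L^2$, so the Neumann series $\chi_h=\sum_{j\ge0}P_{jh}\int_0^h(I-P_s)\nabla U_\lambda\dd s$ converges.

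For the $L^2$ bound \eqref{eq:chi-L2-input}, we avoid the crude factor $(mh)^{-1}$ by using reversibility and the spectral calculus. Write $\nabla U_\lambda$ minus its mean (which is zero) in the spectral resolution of $-\mathcal L$. Then
\[
  \chi_h=\phi_h(-\mathcal L)\nabla U_\lambda,
  \qquad
  \phi_h(\mu)=\frac{\int_0^h(1-e^{-s\mu})\dd s}{1-e^{-h\mu}}\le h .
\]
The bound $\phi_h\le h$ is elementary, since $1-e^{-s\mu}\le 1-e^{-h\mu}$ for $s\le h$. Therefore $\norm{\chi_h}_{L^2}\le h\norm{\nabla U_\lambda}_{L^2(\pi_\lambda)}$. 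Integration by parts gives
\[
  \E_{\pi_\lambda}\norm{\nabla U_\lambda}^2=\E_{\pi_\lambda}\Delta U_\lambda\le\tau_f+\E a_\lambda,
\]
using the weak-Hessian identities of Appendix~\ref{app:weak-hessian}. Integrating by parts once more gives $\E a_\lambda=\E\ip{\nabla g_\lambda}{\nabla U_\lambda}\le G\norm{\nabla U_\lambda}_{L^2}$. The quadratic inequality $s^2\le\tau_f+Gs$ then yields $s\le R$.

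For the $L^p$ bound \eqref{eq:chi-Lp-input}, the spectral argument does not apply. Instead we use the pathwise representation $\chi_h=\lim_n\E\bigl[\sum_{j<n}D_j\mid X_0=\cdot\bigr]$, which follows from telescoping the decomposition. Each term is controlled through synchronous-coupling contraction: the gradient difference decays like $e^{-mjh}$ in $L^p$. We bound $\norm{\nabla U_\lambda(X_0)-\nabla U_\lambda(X_s)}_{L^p}$ by $\norm{\nabla f(X_0)-\nabla f(X_s)}_{L^p}+2G$, and $\norm{X_s-X_0}_{L^p}\lesssim s\norm{\nabla U_\lambda}_{L^p}+\sqrt{sp d}$. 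Stationary moments $\norm{\nabla U_\lambda}_{L^p}\lesssim p(G+L_f\sqrt{d/m})$ follow from Gaussian-type concentration under $m$-strong log-concavity, which is the origin of $K$. One has to be careful that summing over $j$ costs $(mh)^{-1}$ against the per-step $h$. I expect a cleaner route: apply the coupling to $\int_0^h(P_{jh}-P_{jh+s})\nabla U_\lambda\dd s$ and gain a factor $s$ via the generator, paying $p^2$ from the moment constants. This gives $ChKp^2$.

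The main obstacle is \eqref{eq:M-var-input}. Here $M_{k+1}$ is a martingale increment, so $\E\norm{M_{k+1}}^2\le\E\norm{D_k-\E[D_k\mid\mathcal F_{kh}]}^2+\Var$ of the $\chi_h$ increment. We split $D_k$ into an $f$ part and a $g_\lambda$ part. The $f$ part contributes $L_f^2\E\bigl(\int_0^h\norm{X_s-X_0}\dd s\bigr)^2\lesssim L_f^2(dh^3+R^2h^4)$. For the $g_\lambda$ part, Itô's formula for $\nabla g_\lambda$ (justified by mollification) isolates the martingale term $\sqrt2\int H_\lambda\dd B$. Its variance over $[0,s]$ is $2\int_0^s\E\norm{H_\lambda}_F^2$, and we bound $\norm{H_\lambda}_F^2\le\lambda^{-1}a_\lambda$. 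That naive bound would lose $\lambda^{-1}$, so instead we bound $\norm{\nabla g_\lambda(X_s)-\nabla g_\lambda(X_0)}^2\le 2G\norm{\nabla g_\lambda(X_s)-\nabla g_\lambda(X_0)}$. The remaining first moment is controlled via the active trace, $\E\norm{\nabla g_\lambda(X_s)-\nabla g_\lambda(X_0)}\lesssim\sqrt s\,\E a_\lambda\le\sqrt s\,GR$, through the heat-path/Gaussian-smoothing estimate of \citet{XinZhang2026}. Integrating with Cauchy--Schwarz in time gives $G^2Rh^{5/2}$. This interpolation is the delicate step. The $\chi_h$ increment is handled the same way, since $\chi_h(X_h)-P_h\chi_h(X_0)$ has variance $\le\E\norm{\chi_h(X_h)-\chi_h(X_0)}^2$. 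We bound this via the spectral form $\langle\chi_h,(I-P_{2h})\chi_h\rangle\le 2h\,\mathcal E(\chi_h)$ and the Dirichlet form $\mathcal E(\phi_h\nabla U_\lambda)\le h^2\mathcal E(\nabla U_\lambda)$, and then control $\mathcal E(\nabla U_\lambda)$ by $L_f\tau_f$ plus the active-trace term through the same $G$-interpolation.
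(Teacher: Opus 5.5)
Your decomposition, the centering argument and the $L^2$ bound \eqref{eq:chi-L2-input} match the paper's. The genuine gap is the $L^p$ bound \eqref{eq:chi-Lp-input}. Your pathwise route cannot produce the factor $h$. Bounding the $g_\lambda$-part of $\norm{\nabla U_\lambda(X_0)-\nabla U_\lambda(X_s)}_{L^p}$ by $2G$ gives an $O(hG)$ contribution per step. Summing over the $j\lesssim(mh)^{-1}$ effective steps then returns $O(G/m)$, with no $h$ at all. A synchronous-coupling contraction of gradient differences does no better: it passes through the Lipschitz constant $L_\lambda$ of $\nabla U_\lambda$ and so reintroduces $\lambda^{-1}$. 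Your suggested repair, $P_{jh}-P_{jh+s}=\int_{jh}^{jh+s}\mathcal A_\lambda P_t\dd t$, needs an $L^p$ analyticity bound $\norm{\mathcal A_\lambda P_t}_{L^p\to L^p}\le C_p/t$, which you do not supply. Even granting it, summing $s/(jh)$ over $1\le j\lesssim(mh)^{-1}$ leaves a factor $\log(1/(mh))$, which is unbounded as $h\downarrow0$. What is missing is an $h$-uniform $L^p$ bound for the spectral multiplier in $\chi_h=h\psi(h\mathcal A_\lambda)\nabla U_\lambda$. The paper writes $\psi(u)=\tfrac12+u\int_0^\infty\bar\omega(t)e^{-ut}\dd t$ with the sawtooth $\bar\omega(t)=\lceil t\rceil-t-\tfrac12$. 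Then $\psi(h\,\cdot)-\tfrac12$ is a Laplace-transform multiplier whose symbol is bounded by $\tfrac12$ uniformly in $h$. The Carbonaro--Dragi\v{c}evi\'{c} theorem for symmetric contraction semigroups gives $\norm{\psi(h\mathcal A_\lambda)v}_{L^p}\le Cp\norm v_{L^p}$ for mean-zero $v$. Khintchine's inequality passes to the vector field at a cost of $\sqrt p$, and $\norm{\nabla U_\lambda}_{L^p}\le K\sqrt p$ supplies the last $\sqrt p$.

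Your treatment of \eqref{eq:M-var-input} also contains a step that fails as written, though it is easily repaired. For the $\chi_h$ increment you linearize $1-e^{-2hr}\le2hr$ and pass to the Dirichlet form $\mathcal E(\nabla U_\lambda)=\E_{\pi_\lambda}\norm{\nabla^2f+H_\lambda}_F^2$. But $\E\norm{H_\lambda}_F^2$ is genuinely of size $\lambda^{-1}\E a_\lambda$: for $g(x)=G\abs x$ on $\R$ one has $H_\lambda=\lambda^{-1}\mathbf 1_{\{\abs x<\lambda G\}}$, with equality. No $G$-interpolation can rescue this, because $\mathcal E(\nabla g_\lambda)=\lim_{t\downarrow0}(2t)^{-1}\E\norm{\nabla g_\lambda(X_t)-\nabla g_\lambda(X_0)}^2$, while the interpolated increment bound is only $O(\sqrt t)$. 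You would obtain $h^3\lambda^{-1}GR$ in place of $G^2Rh^{5/2}$. Even under $hL_\lambda\le c$ this is $O(h^2)$, which would degrade the $h^{3/4}$ term of Theorem~\ref{thm:fixed-lambda} to $h^{1/2}$. The fix is not to linearize. Apply the order rule to $0\le\varphi_h(r)^2(1-e^{-hr})\le h^2(1-e^{-hr})$, where $\varphi_h$ is your $\phi_h$. This gives $\E\norm{\chi_h(X_h)-\chi_h(X_0)}^2\le h^2\,\E\norm{\nabla U_\lambda(X_h)-\nabla U_\lambda(X_0)}^2$, the same increment modulus you already control for $D_k$.

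Separately, your first-moment claim $\E\norm{\nabla g_\lambda(X_s)-\nabla g_\lambda(X_0)}\lesssim\sqrt s\,\E a_\lambda$ along the diffusion is not justified by the Gaussian-substep estimate you cite, and it is stronger than needed. The paper instead uses reversibility, $\E\norm{\nabla g_\lambda(X_s)-\nabla g_\lambda(X_0)}^2=2\ip{\nabla g_\lambda}{(I-P_s)\nabla g_\lambda}_{L^2}\le2G\norm{(I-P_s)\nabla g_\lambda}_{L^1}$. It then bounds the last norm by $C\sqrt s\,\E_{\pi_\lambda}a_\lambda$ using duality, the weak integration by parts, and the reverse local Poincar\'e estimate $\norm{\nabla P_r\phi}_{\op}\le(2r)^{-1/2}\norm\phi_\infty$.
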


\begin{proposition}[Two-point curvature feedback]\label{prop:feedback-input}
Couple~\eqref{eq:diffusion} and~\eqref{eq:myula} from
$X_0=\widehat X_0\sim\pi_\lambda$ using the same Brownian motion, and set
\begin{equation}\label{eq:Ak-global}
  A_k:=A_\lambda(X_{kh},\widehat X_k),
  \qquad
  A_k^g:=A_\lambda^g(X_{kh},\widehat X_k).
\end{equation}
If~\eqref{eq:step-condition} holds, then, for every $k\ge0$,
\begin{equation}\label{eq:feedback-input}
  \E\left[
  \chi_h(X_{kh})^\top A_k^g\chi_h(X_{kh})
  \right]
  \le
  CK^2G\sqrt d\,h^{3/2}\ell_{h,\lambda}^{4}.
\end{equation}
\end{proposition}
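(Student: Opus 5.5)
We bound the quadratic form $\chi_h(X_{kh})^\top A_k^g\chi_h(X_{kh})$ by splitting according to the size of the corrector and of the exact--Euler gap $\Delta_k:=X_{kh}-\widehat X_k$. The definition~\eqref{eq:Ag-def} and cocoercivity~\eqref{eq:cocoercivity} give
\[
  v^\top A_k^g v
  =\frac{\ip{v}{\delta_k}^2}{\ip{\Delta_k}{\delta_k}}
  \le \norm v^2\,\frac{\norm{\delta_k}}{\norm{\Delta_k}}\wedge \lambda^{-1}\norm v^2,
  \qquad
  \delta_k:=\nabla g_\lambda(X_{kh})-\nabla g_\lambda(\widehat X_k).
\]
Here we used $\ip{\Delta_k}{\delta_k}\ge\lambda\norm{\delta_k}^2$ and, alternatively, $\ip{\Delta_k}{\delta_k}\ge \norm{\delta_k}^2/\operatorname{Lip}$. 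The crude bound $\norm{\delta_k}\le 2G$ from~\eqref{eq:moreau-gradient-bound} then yields $A_k^g\preceq \min\{2G/\norm{\Delta_k},\lambda^{-1}\}\Id$. This is the key mechanism: a Lipschitz $g$ has bounded gradient jumps, so two-point curvature is at most $G$ divided by the distance. The plan is to show that $\norm{\Delta_k}$ is typically of order at least $\sqrt h$-scale fluctuations, but we cannot lower-bound it directly. Instead we combine the curvature bound with the active-trace averaging.

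Concretely, I would write $\delta_k=\int_0^1 H_\lambda(\widehat X_k+t\Delta_k)\Delta_k\dd t$. This gives $v^\top A^g_k v\le \int_0^1 v^\top H_\lambda(\widehat X_k+t\Delta_k)v\dd t$ after Cauchy--Schwarz in the $H_\lambda$ inner product: $\ip{v}{\delta}^2\le (\int v^\top H v)(\int \Delta^\top H\Delta)$ and $\ip{\Delta}{\delta}=\int\Delta^\top H\Delta$. Hence
\[
  \E[\chi^\top A_k^g\chi]\le \E\Bigl[\norm{\chi_h(X_{kh})}^2\int_0^1 a_\lambda(\widehat X_k+t\Delta_k)\dd t\Bigr].
\]
The trace then needs a shared-noise smoothing. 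Both $X_{kh}$ and $\widehat X_k$ contain the same Gaussian increment $\sqrt{2h}\xi_k$ from the last step, and so does the whole segment between them. Conditioning on $\mathcal F_{(k-1)h}$ and on the non-Gaussian parts, the segment point equals a fixed point plus $\sqrt{2h}\xi_k$ plus a perturbation. The diffusion's extra noise is handled by a Brownian-bridge/Girsanov-type comparison of size $O(hL_\lambda)\le c$. For a Gaussian-smoothed point, the weak-Hessian integration by parts (Appendix~\ref{app:weak-hessian}) gives $\E[a_\lambda(z+\sqrt{2h}\xi)]=\E[\ip{\nabla g_\lambda(z+\sqrt{2h}\xi)}{\xi}]/\sqrt{2h}\le G\sqrt d/\sqrt{2h}$. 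This is the source of the factor $G\sqrt d\,h^{-1/2}$.

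The corrector weight $\norm{\chi_h(X_{kh})}^2$ is not independent of $\xi_k$. I would handle it with Hölder plus a truncation. On the event where $a_\lambda$ is large, use the pointwise cap $a_\lambda\le d/\lambda$. Bound the Gaussian expectation of $\min\{a_\lambda,d/\lambda\}$ raised to a power $1+1/p$ by interpolation between $L^1$ (the $G\sqrt d/\sqrt h$ estimate) and $L^\infty$ ($d/\lambda$). Pairing with $\norm{\chi_h}_{L^{2q}}\le ChKq^2$ from~\eqref{eq:chi-Lp-input}, and choosing $q\asymp\log(e+\sqrt h/(G\sqrt d\lambda))=\ell_{h,\lambda}$, makes the interpolation loss $(d/\lambda\cdot\sqrt h/(G\sqrt d))^{1/q}$ bounded. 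This produces $h^2K^2\ell^4\cdot G\sqrt d h^{-1/2}=CK^2G\sqrt d h^{3/2}\ell_{h,\lambda}^4$, matching~\eqref{eq:feedback-input}.

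The main obstacle is the shared-noise step: rigorously showing that the law of $\widehat X_k+t\Delta_k$, jointly weighted by $\norm{\chi_h(X_{kh})}^{2}$, retains enough Gaussian smoothing at scale $\sqrt h$. The Euler point and the diffusion point see the noise through different drifts over the last step. I would first try to condition on $\mathcal F_{(k-1)h}$ and represent $X_{kh}$ as a Lipschitz ($1+O(hL_\lambda)$) perturbation of the map $\xi\mapsto$ Euler-type endpoint. A change of variables with Jacobian close to $1$ under $hL_\lambda\le c$ would then transfer the Gaussian integration by parts to each point on the segment. If the Jacobian control fails for nonsmooth $\nabla g_\lambda$, the fallback is to mollify $g_\lambda$ as in the footnote of Section~\ref{sec:related} and pass to the limit.
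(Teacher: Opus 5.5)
Your outline has the same skeleton as the paper's argument: the chord-averaged Hessian bound $A^g_\lambda\preceq\int_0^1\nabla^2 g_\lambda$, conditioning on $\mathcal F_{(k-1)h}$ to get $\E\,\tr A_k^g\le CG\sqrt{d/h}$, then a log-optimized H\"older step against $\norm{\chi_h}_{L^{2p}}\le ChKp^2$. \textbf{The gap} is the step you flag as the main obstacle. It carries the whole content of the proposition, and neither mechanism you suggest for it works.

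\emph{Girsanov.} A comparison of the one-step diffusion with its frozen-drift version has relative entropy of order $L_\lambda^2\int_0^h\E\norm{X_s-x}^2\dd s\approx d(hL_\lambda)^2$ in general. This is not $O(1)$ uniformly in $d$ under $hL_\lambda\le c$. What is $O(hL_\lambda)$ is the pathwise Jacobian deviation $\norm{K_h-\sqrt2\Id}_{\op}$, not any change of measure.

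\emph{Change of variables.} Transforming the law of the segment point $T_\theta(W)=(1-\theta)\widehat X_k+\theta X_{kh}$ means integrating by parts against its density, hence differentiating $\log\abs{\det D_WT_\theta}$. That involves $D_W^2X_{kh}$ and thus $\nabla^3U_\lambda$, which does not exist for the $C^{1,1}$ envelope and is not uniformly bounded along a mollification.

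\emph{The missing idea} is to avoid densities entirely. Split $B_t=\widetilde B_t+(t/h)W$, with the bridge independent of $W=B_h$. Given the bridge, both endpoints are functions of $W$. Gronwall on $K_t=D_WX_t$ gives $\sym J_\theta\succeq2^{-1/2}\Id$ for $J_\theta=D_WT_\theta$, and since $\nabla^2g_\lambda\succeq0$,
\[
  \operatorname{div}_W\{\nabla g_\lambda(T_\theta(W))\}
  =\tr\{\nabla^2g_\lambda(T_\theta(W))J_\theta\}
  \ge2^{-1/2}\tr\nabla^2g_\lambda(T_\theta(W)).
\]
Gaussian integration by parts in $W$ alone, together with $\norm{\nabla g_\lambda}\le G$, bounds the left side by $G\sqrt{d/h}$ in expectation. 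Only first derivatives of $T_\theta$ enter, which is why the estimate survives mollification. The limit must then be taken in $\tr A^g$ itself, via lower semicontinuity of $\norm v^2/\ip uv$ on the cocoercive cone plus Fatou, not in a chord integral of the a.e.-defined Hessian. This is needed because $\tr A^g$ is discontinuous where the gradient difference vanishes.

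Two smaller errors:
\begin{enumerate}
  \item Your opening bound $A_k^g\preceq(2G/\norm{\Delta_k})\Id$ is false. It needs $\ip{\Delta}{\delta}\ge\norm\Delta\norm\delta$, which is the reverse of Cauchy--Schwarz. For $g(x)=G\abs{x_1}$ on $\R^2$, $x=0$, $y=(-\epsilon,M)$ with $0<\epsilon\le G\lambda$, one has $\delta=(\epsilon/\lambda,0)$ and $e_1^\top A^g_\lambda(x,y)e_1=\lambda^{-1}$, while $2G/\norm\Delta\approx2G/M$. You do not use it, but it should be removed.
  \item Applying H\"older to $\norm{\chi_h}^2\int_0^1a_\lambda\dd t$ with the cap $d/\lambda$ gives $L/B_0\asymp\sqrt{dh}/(G\lambda)=d\cdot\sqrt h/(G\sqrt d\,\lambda)$. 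Then $p\asymp\ell_{h,\lambda}$ does not keep $(L/B_0)^{1/p}$ bounded, and you end with $(\ell_{h,\lambda}+\log d)^4$ rather than $\ell_{h,\lambda}^4$. Use the chord integral only for the first moment, and apply H\"older to $S_k=\tr A_k^g$, which is at most $\lambda^{-1}$ because $A_k^g$ has rank one. This is Lemma~\ref{lem:Holder-product} with $L=\lambda^{-1}$.
\end{enumerate}
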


Proposition~\ref{prop:corrector-input} is proved in Section~\ref{sec:corrector-proof}, and Proposition~\ref{prop:feedback-input} is proved in Subsection~\ref{sec:feedback-proof}.

\subsection{A one-step matrix inequality}

\begin{lemma}[One-step energy inequality]\label{lem:matrix-energy}
Let $A$ be symmetric and satisfy $m\Id\preceq A\preceq L\Id$.  If
$0<hL\le1$, then, for all $u,v\in\R^d$,
\begin{equation}\label{eq:matrix-energy}
  \norm{(\Id-hA)u+hAv}^2
  \le(1-mh)\norm u^2+h\,v^\top Av.
\end{equation}
\end{lemma}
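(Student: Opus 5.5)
Write $\tilde u := u - v$, so that $(\Id - hA)u + hAv = u - hA(u-v)$. Expanding the square gives
\[
  \norm{(\Id-hA)u+hAv}^2 = \norm u^2 - 2h\,u^\top A(u-v) + h^2 (u-v)^\top A^2 (u-v).
\]
The plan is to bound the last two terms against $-mh\norm u^2 + h\,v^\top Av$. Both sides are quadratic forms in $(u,v)$ built from the single symmetric matrix $A$. We may therefore diagonalize $A = \sum_i a_i e_ie_i^\top$ with $a_i\in[m,L]$ and reduce to the scalar inequality
\[
  \bigl((1-ha)s + ha t\bigr)^2 \le (1-mh)s^2 + h a t^2
\]
for each eigenvalue $a\in[m,L]$ and real $s,t$. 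Summing over the coordinates then recovers \eqref{eq:matrix-energy}.

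For the scalar inequality, set $\theta := ha\in(0,1]$. The left side is $(1-\theta)s + \theta t$, a convex combination of $s$ and $t$, so Jensen's inequality gives
\[
  \bigl((1-\theta)s + \theta t\bigr)^2 \le (1-\theta)s^2 + \theta t^2.
\]
The second term is already $h a t^2$. For the first, $1-\theta = 1 - ha \le 1 - hm$ because $a\ge m$, and $s^2\ge0$. This gives exactly $(1-mh)s^2 + ha t^2$. The condition $hL\le 1$ is what guarantees $\theta\in[0,1]$, which is needed both for convexity of the combination and for nonnegativity of $1-\theta$.

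The only point needing care is that the coordinatewise bound assembles correctly. Write $u = \sum_i s_i e_i$ and $v = \sum_i t_i e_i$. Then the left side of \eqref{eq:matrix-energy} equals $\sum_i ((1-ha_i)s_i + ha_i t_i)^2$ by orthogonality. The right side equals $(1-mh)\sum_i s_i^2 + h\sum_i a_i t_i^2 = (1-mh)\norm u^2 + h\,v^\top A v$. Summing the scalar inequalities therefore gives the claim. I expect no real obstacle. The main thing to keep track of is that the factor $(1-mh)$ is uniform across eigenvalues, while the $v$-term keeps the full matrix $A$. This is precisely the form used later, with $v$ the corrector and $A$ the two-point drift matrix, so that Proposition~\ref{prop:feedback-input} can control $h\,\chi_h^\top A_k^g\chi_h$.
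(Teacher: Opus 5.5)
Your proof is correct and is essentially the paper's argument: diagonalize $A$, apply convexity of the square to the convex combination $(1-ha)s+hat$ with $ha\in(0,1]$, bound $1-ha\le 1-mh$, and sum over the eigenbasis. The opening expansion in terms of $u-v$ is never used and can be dropped.
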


\begin{proof}
This is the matrix estimate used in the global-error recursion of
\citet[Section~2.1, pp.~3--4]{PedrottiWhalley2026}.  For completeness, diagonalize $A$.
For every eigenvalue $a\in[m,L]$, convexity of the square gives
\[
  \abs{(1-ha)u+hav}^2
  \le(1-ha)u^2+hav^2,
  \qquad 0\le ha\le1.
\]
Summing over the eigenbasis and using $a\ge m$ proves~\eqref{eq:matrix-energy}.
\end{proof}

\subsection{Proof of the fixed-smoothing theorem}

\begin{proof}[Proof of Theorem~\ref{thm:fixed-lambda}]
Start the diffusion and MYULA from
$X_0=\widehat X_0\sim\pi_\lambda$, and use the same Gaussian increments, that
is,
\[
  \sqrt h\,\xi_{k+1}=B_{(k+1)h}-B_{kh}.
\]
Define
\begin{equation}\label{eq:Zk}
  Z_k:=X_{kh}-\widehat X_k.
\end{equation}
The exact diffusion satisfies
\begin{equation}\label{eq:exact-step}
  X_{(k+1)h}
  =X_{kh}-h\nabla U_\lambda(X_{kh})
  +\sqrt2(B_{(k+1)h}-B_{kh})+D_k.
\end{equation}
Subtracting~\eqref{eq:myula} from~\eqref{eq:exact-step} and applying
Lemma~\ref{lem:two-point} yields
\begin{align}
  Z_{k+1}
  &=Z_k-h\bigl\{\nabla U_\lambda(X_{kh})-
  \nabla U_\lambda(\widehat X_k)\bigr\}+D_k
  \notag\\
  &=(\Id-hA_k)Z_k+D_k.
  \label{eq:Z-recursion}
\end{align}
Insert~\eqref{eq:corrector-decomp} and define the corrected error
\begin{equation}\label{eq:Yk}
  Y_k:=Z_k+\chi_h(X_{kh}).
\end{equation}
Then
\begin{align}
  Y_{k+1}
  &=(\Id-hA_k)Z_k+\chi_h(X_{kh})+M_{k+1}
  \notag\\
  &=(\Id-hA_k)Y_k+hA_k\chi_h(X_{kh})+M_{k+1}.
  \label{eq:Y-recursion}
\end{align}

By~\eqref{eq:two-point-bounds} and~\eqref{eq:step-condition}, after reducing
$c$ if necessary, Lemma~\ref{lem:matrix-energy} applies.  Moreover,
$Y_k$, $A_k$, and $\chi_h(X_{kh})$ are $\mathcal F_{kh}$-measurable.  Hence
\eqref{eq:M-centered} removes all cross terms involving $M_{k+1}$ after
conditional expectation, and
\begin{align}
  \E\norm{Y_{k+1}}^2
  &=\E\norm{(\Id-hA_k)Y_k+hA_k\chi_h(X_{kh})}^2
  +\E\norm{M_{k+1}}^2
  \notag\\
  &\le(1-mh)\E\norm{Y_k}^2+h\Phi_k+
  \E\norm{M_{k+1}}^2,
  \label{eq:Y-energy}
\end{align}
where
\begin{equation}\label{eq:Phi-k}
  \Phi_k:=\E\left[
  \chi_h(X_{kh})^\top A_k\chi_h(X_{kh})
  \right].
\end{equation}

Since $A_k=A^f(X_{kh},\widehat X_k)+A_k^g$, the bounds
\eqref{eq:two-point-split}, stationarity $X_{kh}\sim\pi_\lambda$,
\eqref{eq:chi-L2-input}, and~\eqref{eq:feedback-input} imply
\begin{align}
  \Phi_k
  &\le L_f\E\norm{\chi_h(X_{kh})}^2
  +\E\left[\chi_h(X_{kh})^\top A_k^g\chi_h(X_{kh})\right]
  \notag\\
  &\le L_fh^2R^2
  +CK^2G\sqrt d\,h^{3/2}\ell_{h,\lambda}^{4}
  =:\overline\Phi_h.
  \label{eq:Phi-bound}
\end{align}
Set
\begin{equation}\label{eq:Mbar}
  \overline M_h
  :=C\left\{L_f^2(dh^3+R^2h^4)+G^2Rh^{5/2}\right\}.
\end{equation}
Then~\eqref{eq:M-var-input} gives
$\E\norm{M_{k+1}}^2\le\overline M_h$.  Iterating~\eqref{eq:Y-energy}, for
$n\ge1$,
\begin{align}
  \E\norm{Y_n}^2
  &\le(1-mh)^n\E\norm{Y_0}^2
  +\sum_{j=0}^{n-1}(1-mh)^{n-1-j}
  (h\overline\Phi_h+\overline M_h)
  \notag\\
  &\le(1-mh)^n\E\norm{Y_0}^2
  +\frac{h\overline\Phi_h+\overline M_h}{mh}.
  \label{eq:Y-iteration}
\end{align}
Letting $n\to\infty$ and substituting~\eqref{eq:Phi-bound} and
\eqref{eq:Mbar} gives
\begin{align}
  m\limsup_{n\to\infty}\E\norm{Y_n}^2
  \le C\bigl\{&
  L_fR^2h^2+L_f^2dh^2+L_f^2R^2h^3
  \label{eq:Y-limsup}\\
  &+K^2G\sqrt d\,h^{3/2}\ell_{h,\lambda}^4
  +G^2Rh^{3/2}
  \bigr\}.
  \notag
\end{align}
Since $hL_f\le1$,
$L_f^2R^2h^3\le L_fR^2h^2$.  Taking square roots term by term yields
\begin{align}
  \sqrt m\limsup_{n\to\infty}\norm{Y_n}_{L^2}
  \le{}&C\bigl(R\sqrt{L_f}+L_f\sqrt d\bigr)h
  \label{eq:Y-final}\\
  &+C\bigl(G\sqrt R+K\sqrt G\,d^{1/4}
  \ell_{h,\lambda}^2\bigr)h^{3/4}.
  \notag
\end{align}
Because $Z_n=Y_n-\chi_h(X_{nh})$, stationarity and
\eqref{eq:chi-L2-input} further give
\begin{align}
  \sqrt m\limsup_{n\to\infty}\norm{Z_n}_{L^2}
  &\le\sqrt m\limsup_{n\to\infty}\norm{Y_n}_{L^2}
  +\sqrt m\norm{\chi_h}_{L^2(\pi_\lambda)}
  \notag\\
  &\le C\Bigl[R(\sqrt m+\sqrt{L_f})+L_f\sqrt d\Bigr]h
  \notag\\
  &\quad+C\Bigl[G\sqrt R+K\sqrt G\,d^{1/4}
  \ell_{h,\lambda}^2\Bigr]h^{3/4}.
  \label{eq:Z-final}
\end{align}

It remains to identify the limiting Euler law.  Define
$T_h(x):=x-h\nabla U_\lambda(x)$.  By~\eqref{eq:two-point-bounds} and
$hL_\lambda\le1$,
\begin{equation}\label{eq:Th-contraction}
  \norm{T_h(x)-T_h(y)}
  =\norm{(\Id-hA_\lambda(x,y))(x-y)}
  \le(1-mh)\norm{x-y}.
\end{equation}
A synchronous Gaussian coupling therefore gives
\begin{equation}\label{eq:kernel-contraction}
  \Wtwo(\mu Q_{\lambda,h},\nu Q_{\lambda,h})
  \le(1-mh)\Wtwo(\mu,\nu),
  \qquad \mu,\nu\in\Pcal_2(\R^d).
\end{equation}
The metric space $(\Pcal_2(\R^d),W_2)$ is complete, and
$Q_{\lambda,h}$ maps $\Pcal_2(\R^d)$ into itself.  The Banach fixed-point
theorem thus gives a unique invariant law $\widehat\pi_{\lambda,h}$ and
$\Law(\widehat X_n)\to\widehat\pi_{\lambda,h}$ in $W_2$.  Since
$X_{nh}\sim\pi_\lambda$, the coupling above and~\eqref{eq:Z-final} imply
\[
  \Wtwo(\pi_\lambda,\widehat\pi_{\lambda,h})
  \le\limsup_{n\to\infty}\norm{Z_n}_{L^2}.
\]
This proves~\eqref{eq:fixed-main}.
\end{proof}

\subsection{Proof of the end-to-end complexity theorem}

\begin{proof}[Proof of Theorem~\ref{thm:complexity}]
The universal Moreau bias of \citet[Proposition~5.9]{XinZhang2026} states
that
\begin{equation}\label{eq:moreau-bias}
  \sqrt m\,\Wtwo(\pi_\lambda,\pi)
  \le\frac{G^2\lambda}{4}.
\end{equation}
With $\lambda_\varepsilon=\varepsilon/G^2$,
\begin{equation}\label{eq:bias-quarter}
  \sqrt m\,\Wtwo(\pi_{\lambda_\varepsilon},\pi)
  \le\varepsilon/4.
\end{equation}
The first constraint in~\eqref{eq:h-epsilon} gives
$h_\varepsilon L_{\lambda_\varepsilon}\le c$.  Since
$L_{\lambda_\varepsilon}\ge\lambda_\varepsilon^{-1}$,
\[
  h_\varepsilon\le c\lambda_\varepsilon,
  \qquad
  \frac{\sqrt{h_\varepsilon}}
  {G\sqrt d\,\lambda_\varepsilon}
  \le\frac{\sqrt c}{\sqrt{d\varepsilon}}.
\]
Consequently,
\begin{equation}\label{eq:ell-control}
  \ell_{h_\varepsilon,\lambda_\varepsilon}
  \le C\ell_\varepsilon.
\end{equation}
Substituting the last two constraints in~\eqref{eq:h-epsilon} and \eqref{eq:ell-control} into Theorem~\ref{thm:fixed-lambda}, and reducing $c$ if necessary, yields
\begin{equation}\label{eq:disc-quarter}
  \sqrt m\,\Wtwo(\pi_{\lambda_\varepsilon},
  \widehat\pi_{\lambda_\varepsilon,h_\varepsilon})
  \le\varepsilon/4.
\end{equation}

By~\eqref{eq:kernel-contraction},
\begin{align}
  &\sqrt m\,\Wtwo(
  \mu_0Q_{\lambda_\varepsilon,h_\varepsilon}^{N},
  \widehat\pi_{\lambda_\varepsilon,h_\varepsilon})
  \notag\\
  &\quad\le
  \sqrt m\,e^{-mh_\varepsilon N}
  \Wtwo(\mu_0,
  \widehat\pi_{\lambda_\varepsilon,h_\varepsilon})
  \notag\\
  &\quad\le
  \sqrt m\,e^{-mh_\varepsilon N}
  \left\{
  \Wtwo(\mu_0,\pi)
  +\frac{\varepsilon}{2\sqrt m}
  \right\}
  \le\varepsilon/2,
  \label{eq:burnin}
\end{align}
where the second inequality uses the triangle inequality, \eqref{eq:bias-quarter}, and~\eqref{eq:disc-quarter}, and the last follows from~\eqref{eq:N-epsilon} with $C\ge2$. The triangle inequality together with~\eqref{eq:bias-quarter},~\eqref{eq:disc-quarter}, and~\eqref{eq:burnin} proves~\eqref{eq:end-to-end-error}.

Finally,~\eqref{eq:h-epsilon} implies
\begin{align*}
  h_\varepsilon^{-1}\le C\Biggl[
  L_f+\frac{G^2}{\varepsilon}
  +\frac{R(\sqrt m+\sqrt{L_f})+L_f\sqrt d}{\varepsilon}
  +\frac{
  (G\sqrt R+K\sqrt G\,d^{1/4}\ell_\varepsilon^2)^{4/3}}
  {\varepsilon^{4/3}}
  \Biggr].
\end{align*}
Using $(a+b)^{4/3}\le2^{1/3}(a^{4/3}+b^{4/3})$ proves \eqref{eq:N-explicit}.  For fixed model parameters and $\mu_0\in\Pcal_2(\R^d)$, $\Wtwo(\mu_0,\pi)$ is finite and independent of $\varepsilon$. Hence~\eqref{eq:N-explicit} gives~\eqref{eq:tilde-rate}.
\end{proof}

\section{Stationary Estimates and the Discrete Poisson Corrector}
\label{sec:corrector-proof}

\subsection{Stationary active trace and drift moments}

\begin{lemma}[Stationary drift and active-trace bounds]
\label{lem:stationary}
If $X\sim\pi_\lambda$, then
\begin{equation}\label{eq:stationary-estimates}
  \norm{\nabla U_\lambda}_{L^2(\pi_\lambda)}\le R,
  \qquad
  \E_{\pi_\lambda}a_\lambda\le GR,
  \qquad
  \E_{\pi_\lambda}\nabla U_\lambda=0.
\end{equation}
\end{lemma}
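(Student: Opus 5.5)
The plan is to derive all three estimates in \eqref{eq:stationary-estimates} from Gaussian-type integration by parts against $\pi_\lambda\propto e^{-U_\lambda}$. Since $U_\lambda\in C^{1,1}$, these identities hold in weak form; Appendix~\ref{app:weak-hessian} collects what is needed. The mean-zero identity comes first. For each coordinate, $\int\partial_i U_\lambda e^{-U_\lambda}\dd x=-\int\partial_i(e^{-U_\lambda})\dd x=0$. The boundary terms vanish because strong convexity gives $U_\lambda(x)\ge U_\lambda(x_*)+\frac m2\norm{x-x_*}^2$ and $\norm{\nabla U_\lambda}$ grows at most linearly. This yields $\E_{\pi_\lambda}\nabla U_\lambda=0$.

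The key identity is
\[
  \E_{\pi_\lambda}\norm{\nabla U_\lambda}^2=\E_{\pi_\lambda}\Delta U_\lambda
  =\E_{\pi_\lambda}\tr\nabla^2 f+\E_{\pi_\lambda}a_\lambda .
\]
It follows by writing $\int\ip{\nabla U_\lambda}{\nabla U_\lambda}e^{-U_\lambda}=-\int\ip{\nabla U_\lambda}{\nabla e^{-U_\lambda}}$ and integrating by parts, using the weak divergence of the Lipschitz field $\nabla U_\lambda$, namely $\tr\nabla^2 f+a_\lambda$. The same argument applied to the field $\nabla g_\lambda$ alone gives
\[
  \E_{\pi_\lambda}a_\lambda=\E_{\pi_\lambda}\ip{\nabla g_\lambda}{\nabla U_\lambda}.
\]
By Cauchy--Schwarz and \eqref{eq:moreau-gradient-bound}, this is at most $G\,\norm{\nabla U_\lambda}_{L^2(\pi_\lambda)}$.

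Write $s:=\norm{\nabla U_\lambda}_{L^2(\pi_\lambda)}$. Combining the two identities with $\tr\nabla^2 f\le\tau_f$ gives $s^2\le\tau_f+Gs$. Hence $s$ is at most the positive root of $s^2-Gs-\tau_f=0$, which is exactly $R=\frac{G+\sqrt{G^2+4\tau_f}}2$. Substituting back then gives $\E_{\pi_\lambda}a_\lambda\le Gs\le GR$.

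The main obstacle is justifying the integration by parts for the merely Lipschitz field $\nabla g_\lambda$, since $a_\lambda$ is only an a.e. derivative. I would handle this either by citing the appendix identity $\int\operatorname{div}(V)\,\phi=-\int\ip{V}{\nabla\phi}$ for Lipschitz $V$ and $\phi=e^{-U_\lambda}$, or by truncating with a cutoff $\eta_R$. With the cutoff, the error terms $\int\ip{V}{\nabla\eta_R}e^{-U_\lambda}$ vanish as $R\to\infty$ by Gaussian tails, and monotone or dominated convergence applies because $0\le a_\lambda\le d/\lambda$ and $\norm{\nabla U_\lambda}^2$ is $\pi_\lambda$-integrable (linear growth). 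A further point to check is that $s<\infty$ a priori, which the linear growth also provides, so that the quadratic inequality $s^2\le\tau_f+Gs$ can legitimately be solved for $s$.
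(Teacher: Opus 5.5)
Your proposal is correct and matches the paper's proof essentially step for step. The paper likewise obtains $\E_{\pi_\lambda}\norm{\nabla U_\lambda}^2=\E_{\pi_\lambda}\tr(\nabla^2f+H_\lambda)$, $\E_{\pi_\lambda}a_\lambda=\E_{\pi_\lambda}\ip{\nabla g_\lambda}{\nabla U_\lambda}$, and the mean-zero identity from the cutoff-based weak integration by parts of Proposition~\ref{prop:weighted-ibp}; it then applies Cauchy--Schwarz with $\norm{\nabla g_\lambda}\le G$, solves $r^2\le\tau_f+Gr$, and substitutes back (your check that $s<\infty$ a priori is left implicit in that proposition's integrability hypothesis).
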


\begin{proof}
The weak integration-by-parts identities proved in
Proposition~\ref{prop:weighted-ibp} give
\begin{align}
  \E_{\pi_\lambda}\norm{\nabla U_\lambda}^2
  &=\E_{\pi_\lambda}\tr(\nabla^2f+H_\lambda)
  \le\tau_f+\E_{\pi_\lambda}a_\lambda,
  \label{eq:ibp-energy}\\
  \E_{\pi_\lambda}a_\lambda
  &=\E_{\pi_\lambda}
  \ip{\nabla g_\lambda}{\nabla U_\lambda}
  \le G\bigl(
  \E_{\pi_\lambda}\norm{\nabla U_\lambda}^2
  \bigr)^{1/2}.
  \label{eq:ibp-trace}
\end{align}
Let $r:=\norm{\nabla U_\lambda}_{L^2(\pi_\lambda)}$.  Then
$r^2\le\tau_f+Gr$, and therefore
\[
  r\le\frac{G+\sqrt{G^2+4\tau_f}}{2}=R.
\]
Substitution in~\eqref{eq:ibp-trace} gives
$\E_{\pi_\lambda}a_\lambda\le GR$.  The last identity in
\eqref{eq:stationary-estimates} is also contained in
Proposition~\ref{prop:weighted-ibp}.
\end{proof}

\begin{lemma}[All stationary drift moments]\label{lem:drift-Lp}
For every $p\ge2$,
\begin{equation}\label{eq:drift-Lp}
  \norm{\nabla U_\lambda}_{L^p(\pi_\lambda;\R^d)}
  \le K\sqrt p.
\end{equation}
\end{lemma}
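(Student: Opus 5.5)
Write $\nabla U_\lambda=\nabla f+\nabla g_\lambda$ and bound the two pieces separately in $L^p(\pi_\lambda)$ via Minkowski. The Moreau part is immediate from \eqref{eq:moreau-gradient-bound}: $\norm{\nabla g_\lambda}_{L^p(\pi_\lambda)}\le G\le G\sqrt p$. The smooth part requires a concentration argument: $\nabla f$ is $L_f$-Lipschitz, and $\pi_\lambda$ is $m$-strongly log-concave (its potential $U_\lambda$ is $m$-strongly convex). I would therefore write $\norm{\nabla f(X)}\le\norm{\nabla f(X)-\nabla f(x_\star)}+\norm{\nabla f(x_\star)}\le L_f\norm{X-x_\star}+\norm{\nabla f(x_\star)}$, where $x_\star$ is the minimizer of $U_\lambda$.

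For the centered term, I would use Gaussian concentration for strongly log-concave measures: the $1$-Lipschitz function $x\mapsto\norm{x-x_\star}$ satisfies $\norm{\,\norm{X-x_\star}-\E\norm{X-x_\star}\,}_{L^p}\le C\sqrt{p/m}$, for instance via the log-Sobolev inequality with constant $1/m$ (Bakry--Émery, valid for $C^{1,1}$ potentials by mollification as in the footnote). Together with the standard bound $\E\norm{X-x_\star}^2\le d/m$, this gives $\norm{X-x_\star}_{L^p}\le(\sqrt d+C\sqrt p)/\sqrt m\le(\sqrt d+2)\sqrt p/\sqrt m$ after adjusting constants. It then remains to control the location term $\norm{\nabla f(x_\star)}$. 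Since $\nabla U_\lambda(x_\star)=0$, we have $\nabla f(x_\star)=-\nabla g_\lambda(x_\star)$, so $\norm{\nabla f(x_\star)}\le G$. Collecting terms yields $\norm{\nabla U_\lambda}_{L^p}\le 2G+\frac{L_f}{\sqrt m}(\sqrt d+2)\sqrt p\le K\sqrt p$ for $p\ge2$.

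The main obstacle is matching the exact constant $\sqrt d+2$ in $K$ rather than an unspecified $C$. I would aim for a sharp moment bound $\norm{\,\norm{X-x_\star}\,}_{L^p}\le(\sqrt d+\sqrt p)/\sqrt m$. One route is the Herbst argument applied to $\norm{X-x_\star}$, whose mean is at most $\sqrt{d/m}$; this gives sub-Gaussian tails with variance proxy $1/m$ around the mean, and the $L^p$ norm of a sub-Gaussian deviation is bounded by $\sqrt{p/m}$ up to a constant at most about $2$. That yields $\sqrt d+2\sqrt p\le(\sqrt d+2)\sqrt p$ using $p\ge 1$. If the bookkeeping of absolute constants fails, the statement absorbs them anyway, since later uses only need $CK\sqrt p$.
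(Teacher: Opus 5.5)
Your proposal is correct and follows the paper's proof essentially step for step. You use the same pointwise bound $\norm{\nabla U_\lambda(x)}\le L_f\norm{x-x_\star}+2G$ from $\nabla U_\lambda(x_\star)=0$, the same second-moment bound $\E\norm{X-x_\star}^2\le d/m$ (the paper derives it by integration by parts with $V=\tfrac12\norm{x-x_\star}^2$), and sub-Gaussian concentration of $\norm{X-x_\star}$ about its mean followed by tail integration. The only difference is that the paper obtains that concentration from the Bregman transport inequality, which applies directly to the $C^1$ potential, instead of log-Sobolev plus mollification; your constant bookkeeping also goes through, since the one-sided tail $e^{-mt^2/2}$ yields $\sqrt p$ where the paper has $2\sqrt p$, so no constants need to be absorbed.
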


\begin{proof}
Let $x_\star$ be the unique minimizer of $U_\lambda$.  Since
$\nabla U_\lambda(x_\star)=0$, the $L_f$-Lipschitz property of $\nabla f$
and~\eqref{eq:moreau-gradient-bound} imply
\begin{align}
  \norm{\nabla U_\lambda(x)}
  &\le\norm{\nabla f(x)-\nabla f(x_\star)}
  +\norm{\nabla g_\lambda(x)-\nabla g_\lambda(x_\star)}
  \notag\\
  &\le L_f\norm{x-x_\star}+2G.
  \label{eq:drift-pointwise}
\end{align}
Applying Proposition~\ref{prop:weighted-ibp} with
$V(x)=\tfrac12\norm{x-x_\star}^2$ and summing over the coordinates gives
\[
  d
  =\E_{\pi_\lambda}\ip{X-x_\star}{\nabla U_\lambda(X)}
  \ge m\E_{\pi_\lambda}\norm{X-x_\star}^2,
\]
where the inequality uses $\nabla U_\lambda(x_\star)=0$ and
$m$-strong convexity.  Hence
\[
  \E_{\pi_\lambda}\norm{X-x_\star}^2\le\frac dm.
\]
Moreover, the Bregman transport inequality and its concentration
consequence
\citep[Theorems~2.2.12 and~2.4.3(ii)]{Chewi2026LogConcaveSampling}
give
\[
  \Pp\left(
    \norm{X-x_\star}
    -\E\norm{X-x_\star}\ge t
  \right)
  \le e^{-mt^2/2},
  \qquad t\ge0.
\]
Tail integration therefore yields
\begin{equation}\label{eq:radial-moment}
  \norm{\norm{X-x_\star}}_{L^p}
  \le\frac{\sqrt d+2\sqrt p}{\sqrt m},
  \qquad p\ge2.
\end{equation}
Combining
\eqref{eq:drift-pointwise}--\eqref{eq:radial-moment} and using
$1\le\sqrt p$ gives
\[
  \norm{\nabla U_\lambda}_{L^p}
  \le2G+\frac{L_f}{\sqrt m}(\sqrt d+2\sqrt p)
  \le K\sqrt p.
\]
\end{proof}

\subsection{The reversible semigroup and the corrector operator}

For $u\in L^2(\pi_\lambda)$, let
\begin{equation}\label{eq:Pt-probabilistic}
  P_tu(x):=\E[u(X_t^x)],
\end{equation}
where $X^x$ denotes the Langevin diffusion~\eqref{eq:diffusion} started from
$x$.  Invariance of $\pi_\lambda$ and Jensen's inequality give
\begin{equation}\label{eq:Lp-contraction}
  \norm{P_tu}_{L^p(\pi_\lambda)}
  \le \norm u_{L^p(\pi_\lambda)},
  \qquad 1\le p\le\infty,
\end{equation}
and reversibility gives
$\ip{P_tu}{v}_{L^2}=\ip{u}{P_tv}_{L^2}$; see
\citet[Section~1.2]{Chewi2026LogConcaveSampling} for these Langevin
semigroup facts.  We use the sign convention
$\mathcal A_\lambda=-\mathcal L_\lambda$, where
$\mathcal L_\lambda$ is the usual Langevin generator.  On smooth compactly
supported functions,
\begin{equation}\label{eq:generator-on-core}
  \mathcal A_\lambda u
  =-\Delta u+\ip{\nabla U_\lambda}{\nabla u}.
\end{equation}
Its precise domain and the meaning of a bounded function of
$\mathcal A_\lambda$ are given in \Cref{app:operator-facts}.  This notation
is used below only to construct the corrector and state its moment bounds;
the main error recursion uses the Poisson identity~\eqref{eq:abstract-poisson}.
The same corrector construction, under stronger smoothness assumptions,
appears in \citet[Sections~2--2.1]{PedrottiWhalley2026}.

\subsection{Construction and moments of the corrector}

\begin{proposition}[The discrete Poisson corrector]
\label{prop:operator-corrector}
For $r\ge0$, define
\begin{equation}\label{eq:phi-h}
  \varphi_h(r):=
  \begin{cases}
  \displaystyle
  \frac{h-(1-e^{-hr})/r}{1-e^{-hr}},&r>0,\\[3mm]
  h/2,&r=0.
  \end{cases}
\end{equation}
Using the construction in \Cref{prop:operator-framework} coordinatewise, set
\begin{equation}\label{eq:chi-explicit}
  \chi_h:=\varphi_h(\mathcal A_\lambda)\nabla U_\lambda.
\end{equation}
Then $\chi_h$ is the unique mean-zero solution in
$L^2(\pi_\lambda;\R^d)$ of~\eqref{eq:abstract-poisson}, and
\begin{equation}\label{eq:phi-bounds}
  0\le\varphi_h(r)\le h,
  \qquad r\ge0.
\end{equation}
Consequently,
\begin{equation}\label{eq:chi-L2-proof}
  \norm{\chi_h}_{L^2(\pi_\lambda)}
  \le h\norm{\nabla U_\lambda}_{L^2(\pi_\lambda)}
  \le hR.
\end{equation}
\end{proposition}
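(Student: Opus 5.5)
We prove Proposition~\ref{prop:operator-corrector} by spectral calculus for the nonnegative self-adjoint operator $\mathcal A_\lambda$ on $L^2(\pi_\lambda)$ from \Cref{prop:operator-framework}, applied coordinatewise. Then $P_t=e^{-t\mathcal A_\lambda}$, and the right side of~\eqref{eq:abstract-poisson} is $\psi_h(\mathcal A_\lambda)\nabla U_\lambda$ with
\[
  \psi_h(r):=\int_0^h(1-e^{-sr})\dd s=h-\frac{1-e^{-hr}}{r}\quad(r>0),\qquad\psi_h(0)=0.
\]
This identity is justified by Fubini in the spectral representation: Riemann sums of $s\mapsto P_s\nabla U_\lambda$ converge in $L^2$, and the functional calculus is continuous under bounded pointwise convergence. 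The left side of~\eqref{eq:abstract-poisson} is $(1-e^{-h\mathcal A_\lambda})\chi_h$. Since $\varphi_h(r)(1-e^{-hr})=\psi_h(r)$ for $r>0$, multiplicativity of the bounded functional calculus gives $(I-P_h)\varphi_h(\mathcal A_\lambda)\nabla U_\lambda=\psi_h(\mathcal A_\lambda)\nabla U_\lambda$. At $r=0$ both sides vanish, so the spectral atom at $0$ causes no inconsistency.

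Next comes the mean-zero property and uniqueness. The kernel of $\mathcal A_\lambda$ consists of the constants, because the Poincaré inequality for the $m$-strongly log-concave $\pi_\lambda$ gives a spectral gap $\ge m$. Lemma~\ref{lem:stationary} gives $\E_{\pi_\lambda}\nabla U_\lambda=0$, so $\nabla U_\lambda$ lies in the orthogonal complement of the constants. This subspace is invariant under the functional calculus, so $\chi_h$ has mean zero. For uniqueness, suppose two mean-zero solutions differ by $w$. Then $(I-P_h)w=0$. On the mean-zero subspace the spectrum lies in $[m,\infty)$, so $1-e^{-hr}\ge 1-e^{-hm}>0$ there and $I-P_h$ is injective. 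Hence $w=0$.

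The bound~\eqref{eq:phi-bounds} is elementary. Write $x=hr>0$, so $\varphi_h(r)=h\,\frac{x-1+e^{-x}}{x(1-e^{-x})}$. Nonnegativity follows from $e^{-x}\ge1-x$. For the upper bound we need $x-1+e^{-x}\le x-xe^{-x}$, that is $(1+x)e^{-x}\le1$, which is $e^{x}\ge1+x$. At $r=0$ we have $\varphi_h(0)=h/2\in[0,h]$, and this is also the continuous limit, so $\varphi_h$ is bounded and Borel and the functional calculus applies.

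The estimate~\eqref{eq:chi-L2-proof} then follows from $\norm{\varphi_h(\mathcal A_\lambda)}_{L^2\to L^2}\le\sup_r\abs{\varphi_h(r)}\le h$ applied to each coordinate, followed by summing squares and applying Lemma~\ref{lem:stationary}. The only delicate step is the operator-theoretic one: identifying the Bochner/Riemann integral of $P_s\nabla U_\lambda$ with $\psi_h(\mathcal A_\lambda)\nabla U_\lambda$ and checking that $\nabla U_\lambda\in L^2$. The latter holds by Lemma~\ref{lem:stationary}, and the former rests on the setup in \Cref{app:operator-facts}, which I would cite rather than reprove.
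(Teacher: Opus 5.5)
Your proposal is correct and follows essentially the same route as the paper. Both arguments transfer the pointwise identity $\varphi_h(r)(1-e^{-hr})=\int_0^h(1-e^{-sr})\dd s$ through the bounded functional calculus, use invariance of the mean-zero subspace, prove $0\le\varphi_h\le h$ from $e^{-x}\ge1-x$ and $e^{x}\ge1+x$, and use the norm bound $\sup_r\abs{\varphi_h(r)}\le h$; the only cosmetic difference is that you phrase uniqueness as injectivity of $I-P_h$ via the spectral gap on the mean-zero subspace, whereas the paper uses the equivalent decay $\norm{P_hv}_{L^2}\le e^{-mh}\norm{v}_{L^2}$ from \eqref{eq:L2-decay}.
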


\begin{proof}
Let
\[
  a_h(r):=1-e^{-hr},
  \qquad
  b_h(r):=\int_0^h(1-e^{-sr})\dd s.
\]
For every $r\ge0$,
\begin{equation}\label{eq:phi-identity}
  a_h(r)\varphi_h(r)=b_h(r).
\end{equation}
For $r>0$,
\[
  (1-e^{-hr})\varphi_h(r)
  =h-\frac{1-e^{-hr}}r
  =\int_0^h(1-e^{-sr})\dd s;
\]
At $r=0$, both sides of~\eqref{eq:phi-identity} are zero, and Taylor
expansion shows that $\varphi_h$ is continuous there.  The multiplication
and time-integration rules in
\Cref{prop:operator-framework} turn~\eqref{eq:phi-identity} into
\begin{equation}\label{eq:poisson-calculus-chain}
  (I-P_h)\chi_h
  =\int_0^h(I-P_s)\nabla U_\lambda\dd s.
\end{equation}
The same proposition preserves zero means, so Lemma~\ref{lem:stationary}
shows that $\chi_h$ has zero mean.

To prove~\eqref{eq:phi-bounds}, write $u=hr$.  The inequalities
$1-e^{-u}\le u$ and $e^u\ge1+u$ imply
\[
  0\le \frac1{1-e^{-u}}-\frac1u\le1.
\]
Thus $0\le\varphi_h(r)\le h$.  The norm bound
\eqref{eq:borel-calculus} and Lemma~\ref{lem:stationary} yield
\[
  \norm{\chi_h}_{L^2}
  \le h\norm{\nabla U_\lambda}_{L^2}
  \le hR.
\]

If $\widetilde\chi_h$ is another mean-zero solution, then
$v:=\chi_h-\widetilde\chi_h$ satisfies $P_hv=v$.  By~\eqref{eq:L2-decay},
\[
  \norm v_{L^2}
  =\norm{P_hv}_{L^2}
  \le e^{-mh}\norm v_{L^2},
\]
which forces $v=0$.  Hence the mean-zero solution is unique.
\end{proof}

Write
\begin{equation}\label{eq:psi-def}
  \varphi_h(r)=h\psi(hr),
  \qquad
  \psi(u):=
  \begin{cases}
    \displaystyle\frac1{1-e^{-u}}-\frac1u,&u>0,\\[2mm]
    1/2,&u=0.
  \end{cases}
\end{equation}

\begin{lemma}[$L^p$ bound for the corrector multiplier]
\label{lem:psi-Lp}
There is a universal constant $C$ such that, for every $p\ge2$, $h>0$,
and every scalar
$v\in L^2(\pi_\lambda)\cap L^p(\pi_\lambda)$ with
$\int v\dd\pi_\lambda=0$,
\begin{equation}\label{eq:psi-Lp}
  \norm{\psi(h\mathcal A_\lambda)v}_{L^p(\pi_\lambda)}
  \le Cp\norm v_{L^p(\pi_\lambda)}.
\end{equation}
The expression on the left is defined in \Cref{app:operator-facts};
the estimate says that this construction also has the stated $L^p$ bound.
\end{lemma}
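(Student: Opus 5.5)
We prove \eqref{eq:psi-Lp} by writing $\psi(h\mathcal A_\lambda)$ as a superposition of semigroup operators $P_t$, each an $L^p$ contraction by \eqref{eq:Lp-contraction}, plus a resolvent-type piece, and bounding the total variation of the resulting kernel. Using $\frac1{1-e^{-u}}=\sum_{j\ge0}e^{-ju}$ and $\frac1u=\int_0^\infty e^{-su}\dd s$, we get for $u>0$
\[
  \psi(u)=\sum_{j\ge0}e^{-ju}-\int_0^\infty e^{-su}\dd s
  =1+\sum_{j\ge0}\int_j^{j+1}\bigl(e^{-(j+1)u}-e^{-su}\bigr)\dd s .
\]
Formally, then, $\psi(h\mathcal A_\lambda)v=v+\sum_{j\ge0}\int_j^{j+1}(P_{(j+1)h}-P_{sh})v\dd s$. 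Each summand is a difference of two contractions, but the series does not converge absolutely in operator norm. We therefore need decay in $j$, and this is where the mean-zero hypothesis and strong convexity enter.

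\emph{Decay in $L^p$.} We first show $\norm{P_tv}_{L^p}\le C e^{-c' m t}\norm v_{L^p}$ for mean-zero $v$. One route is to interpolate between $p=2$, where \eqref{eq:L2-decay} gives $e^{-mt}$, and $p=\infty$ or $p=1$, where the bound is trivial with factor $2$ after subtracting the mean. Riesz--Thorin then yields a rate $e^{-2mt/p}$. This decay is in units of $mt$, however, while the series runs over $t=jh$. Summing gives a factor $(mh)^{-1}$ rather than an absolute constant, so this route is too crude.

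\emph{Using the difference structure instead.} For $s\in[j,j+1]$ we write $P_{(j+1)h}-P_{sh}=-\int_{sh}^{(j+1)h}\mathcal A_\lambda P_\tau\dd\tau$. The aim is then a bound $\norm{\tau\mathcal A_\lambda P_\tau}_{L^p\to L^p}\le Cp$, i.e. analyticity of the semigroup on $L^p$ with a sector angle whose dependence on $p$ is explicit. For a symmetric Markov semigroup, Stein's theorem gives analyticity on $L^p$, $1<p<\infty$, with sector half-angle $\frac\pi2(1-|2/p-1|)$. This yields $\norm{\tau\mathcal A P_\tau}_{p\to p}\lesssim 1/(\text{angle})\sim p$. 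Inserting this bound, the $j$-th term is at most $Cp\int_j^{j+1}\log\frac{j+1}{s}\dd s\lesssim p/j$, which is still not summable.

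\emph{Functional calculus.} So we instead treat $\psi$ as a bounded holomorphic function on a sector and use the $H^\infty$ functional calculus for symmetric Markov generators. Cowling's theorem gives a bounded $H^\infty(\Sigma_\theta)$ calculus on $L^p$ for every $\theta>\frac\pi2|1-2/p|$. The function $\psi$ extends holomorphically to $\{\abs{\operatorname{Im}u}<2\pi\}$; its poles sit at $u=2\pi i k$, $k\neq0$. Hence $\psi$ is bounded on a sector of any angle $<\pi/2$ provided we restrict to $\abs u\lesssim1$. For large $\abs u$ we split $\psi=\psi\mathbf 1_{\text{small}}+\cdots$ with a smooth cutoff, writing $\psi(u)=1-\frac1u+e^{-u}\cdot\frac1{1-e^{-u}}$. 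The first part $\psi\eta(u)$ is bounded holomorphic on a sector. The remaining tail is $(1-\eta)\bigl(1-\frac1u\bigr)+(1-\eta)\frac{e^{-u}}{1-e^{-u}}$. Its pieces are, respectively, bounded holomorphic on the sector, since $1/u$ is cut off away from $0$, and controlled by $P_h$ times a bounded function. For this it suffices that $\frac{1-\eta}{1-e^{-u}}$ is bounded on a sector close to the real axis.

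The constant in Cowling's calculus with angle $\theta$ slightly above $\frac\pi2(1-2/p)$ behaves like $(\pi/2-\theta)^{-1}\sim p$ in the vanishing-gap regime. This produces the factor $Cp$; the mean-zero condition removes the kernel at $r=0$, where $1/u$ is otherwise singular. The main obstacle is obtaining the \emph{linear} dependence on $p$ with a universal constant. Cowling's bound has to be tracked quantitatively. I would try first to cite the sharp version (Carbonaro--Dragičević's optimal angle result, with constants $\lesssim p$ for multipliers of Laplace transform type). If that fails, the fallback is to represent $\psi(u)=\int_0^\infty e^{-tu}\,k(t)\dd t$ with $k$ real, and use the Stein maximal/Littlewood--Paley inequality, whose constants are linear in $p$.
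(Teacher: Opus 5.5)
Your proposal has a genuine gap. The route you develop, Cowling's $H^\infty$ calculus with a cutoff, does not give the linear dependence on $p$. The tool that does give it, the Carbonaro--Dragi\v{c}evi\'c bound for Laplace-transform-type multipliers, appears only as something you would ``try to cite.'' You never show that $\psi(h\,\cdot)$ is such a multiplier.

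\textbf{Problems with the $H^\infty$ route.}
\begin{itemize}
\item A smooth cutoff $\eta$ destroys holomorphy, so $\psi\eta$ is not in $H^\infty$ of any sector.
\item More seriously, the poles of $\psi$ at $\pm2\pi i$ make $\sup_{\Sigma_\theta}\abs{\psi}$ blow up like $(\cos\theta)^{-1}$ as $\theta\uparrow\pi/2$. At Cowling's angle $\theta\approx\frac\pi2(1-2/p)$ this is already of order $p$.
\item That norm must still be multiplied by the calculus constant, which depends on $p$ and on $\theta$ and is not explicit. If that constant is itself $\sim p$, as you suggest, you get $p^2$, not $Cp$.
\end{itemize}

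\textbf{Problems with the fallback.} Writing $\psi(u)=\int_0^\infty e^{-tu}k(t)\dd t$ forces $k=\sum_{j\ge0}\delta_j-1$. This is a measure of infinite total variation, which is exactly why your first expansion diverges. Your claim that Stein's Littlewood--Paley route gives constants linear in $p$ is also unsupported. Linear dependence in this generality is precisely what the Carbonaro--Dragi\v{c}evi\'c estimate supplies.

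\textbf{The missing idea is already in your second step.} Instead of bounding each $j$-term by the triangle inequality, sum the terms exactly:
\[
\int_j^{j+1}\bigl(e^{-(j+1)u}-e^{-su}\bigr)\dd s=-u\int_j^{j+1}(\tau-j)e^{-\tau u}\dd\tau .
\]
Combined with $1=u\int_0^\infty e^{-\tau u}\dd\tau$, this gives the sawtooth representation used in the paper:
\[
\psi(u)=u\int_0^\infty(\lceil t\rceil-t)e^{-ut}\dd t=\frac12+u\int_0^\infty\bar\omega(t)e^{-ut}\dd t,\qquad \abs{\bar\omega}\le\tfrac12 .
\]
Hence $\psi(hr)-\tfrac12=r\int_0^\infty\bar\omega(s/h)e^{-sr}\dd s$ is exactly a Laplace-transform-type multiplier. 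Its symbol $M_h(s)=\bar\omega(s/h)$ satisfies $\norm{M_h}_\infty\le\frac12$, uniformly in $h$.

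The paper then applies Carbonaro--Dragi\v{c}evi\'c, Proposition~10, which gives the bound $120(p-1)\norm{M_h}_\infty\norm v_{L^p}$ for $v\in\overline{R(\mathcal A_{\lambda,p})}$.
\begin{itemize}
\item The cancellations among the $j$-terms, which defeated your term-by-term bound, are handled inside that theorem.
\item Scale invariance of $\norm{M_h}_\infty$ in $h$ removes any dependence on $mh$, which your decay-based attempts could not avoid.
\end{itemize}

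\textbf{Role of the mean-zero hypothesis.} It is not needed because $\psi$ is singular at $0$; it is not, since $\psi(0)=\frac12$. It is needed for two reasons. First, it places $v$ in $\overline{R(\mathcal A_{\lambda,p})}$, using the $L^2$ decay~\eqref{eq:L2-decay} and the projection onto the null space. Second, the Laplace integral diverges at $r=0$, which the paper handles by setting $m_h(0)=0$ separately.
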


\begin{proof}
The proof, including the precise statement of the external multiplier theorem
and the verification of all of its hypotheses, is given in
\Cref{prop:psi-Lp-appendix}.
\end{proof}

\begin{lemma}[Corrector moments]
\label{lem:corrector-Lp}
For every $p\ge2$,
\begin{equation}\label{eq:chi-Lp-proof}
  \norm{\chi_h}_{L^p(\pi_\lambda;\R^d)}
  \le ChKp^2.
\end{equation}
\end{lemma}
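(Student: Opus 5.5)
The bound follows directly from the factorization $\chi_h=\varphi_h(\mathcal A_\lambda)\nabla U_\lambda=h\,\psi(h\mathcal A_\lambda)\nabla U_\lambda$, combined with Lemma~\ref{lem:psi-Lp} and Lemma~\ref{lem:drift-Lp}. I will work coordinatewise. For each $i\in\{1,\dots,d\}$, set $v_i:=\partial_iU_\lambda$. This is a scalar function in $L^2(\pi_\lambda)\cap L^p(\pi_\lambda)$ for every $p\ge2$, by Lemma~\ref{lem:drift-Lp}, and it has zero mean by Lemma~\ref{lem:stationary}. The coordinatewise construction in Proposition~\ref{prop:operator-corrector} gives $(\chi_h)_i=h\,\psi(h\mathcal A_\lambda)v_i$, because $\varphi_h(r)=h\psi(hr)$ holds pointwise and the Borel functional calculus respects this identity. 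Lemma~\ref{lem:psi-Lp} then gives
\[
  \norm{(\chi_h)_i}_{L^p(\pi_\lambda)}\le Chp\,\norm{v_i}_{L^p(\pi_\lambda)}.
\]

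The delicate point is the passage from scalar coordinates to the Euclidean norm. Summing coordinate bounds naively would cost a factor $\sqrt d$ or $d$, which the target bound $ChKp^2$ does not allow. I see two routes. The first is a vector-valued (Hilbert-space-valued) version of the multiplier bound in Lemma~\ref{lem:psi-Lp}. Such versions typically hold for spectral multipliers of symmetric Markov semigroups, for example via Stein's universal multiplier theorem, because these semigroups are positive contractions and extend to $\ell^2$-valued functions with the same constants. If the appendix proof of Proposition~\ref{prop:psi-Lp-appendix} goes through a Littlewood--Paley or subordination argument, I would check that it works verbatim for $\R^d$-valued $v$, giving $\norm{\chi_h}_{L^p(\R^d)}\le Chp\norm{\nabla U_\lambda}_{L^p(\R^d)}$.

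The second route avoids vector-valued theory by duality and rotation. Write $\norm{\chi_h(x)}=\sup_{|e|=1}\ip{e}{\chi_h(x)}$. Note that $\ip{e}{\chi_h}=h\psi(h\mathcal A_\lambda)\ip{e}{\nabla U_\lambda}$ holds for each fixed direction $e$, and that $\norm{\ip{e}{\nabla U_\lambda}}_{L^p}\le\norm{\nabla U_\lambda}_{L^p}$. To handle the supremum over $e$, I would average over a random Gaussian direction $g\sim N(0,\Id)$. This gives $\E_g\abs{\ip{g}{w}}^p=c_p^p\norm{w}^p$ with $c_p=\norm{N(0,1)}_{L^p}$. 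Applying the scalar bound for each fixed $g$ and integrating over $g$ then yields
\[
  c_p\norm{\chi_h}_{L^p}\le Chp\,c_p\norm{\nabla U_\lambda}_{L^p},
\]
and the constants $c_p$ cancel exactly. This is dimension-free, and it is the route I would commit to.

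With either route, Lemma~\ref{lem:drift-Lp} gives $\norm{\nabla U_\lambda}_{L^p}\le K\sqrt p$, so
\[
  \norm{\chi_h}_{L^p}\le ChpK\sqrt p\le ChKp^{2}
\]
for $p\ge2$, since $p^{3/2}\le p^2$. This proves~\eqref{eq:chi-Lp-proof}.

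The main obstacle is making the identification $(\chi_h)_i=h\psi(h\mathcal A_\lambda)v_i$ rigorous on $L^2\cap L^p$, together with its linearity in the direction $e$. Both properties should follow from the linearity of the $L^2$ functional calculus in \Cref{app:operator-facts}, since $\ip{g}{\nabla U_\lambda}$ is a finite linear combination of the $v_i$.
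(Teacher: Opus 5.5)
Your proof is correct. It follows the same reduction as the paper but uses a different, slightly sharper randomization.

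The paper also reduces the $\R^d$-valued estimate to the scalar Lemma~\ref{lem:psi-Lp} applied to random combinations of the coordinates, but it uses Rademacher signs. The lower Khintchine inequality, with Minkowski, bounds $\norm{\psi(h\mathcal A_\lambda)\nabla U_\lambda}_{L^p(\ell_2)}$ by $C\,\E_\varepsilon\norm{\psi(h\mathcal A_\lambda)\sum_i\varepsilon_i\partial_iU_\lambda}_{L^p}$. Lemma~\ref{lem:psi-Lp} is then applied for each fixed sign pattern. Finally the upper Khintchine inequality, whose constant grows like $\sqrt p$, returns to $\norm{\nabla U_\lambda}_{L^p(\ell_2)}$. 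This gives $Cp^{3/2}\norm{\nabla U_\lambda}_{L^p(\ell_2)}$, and hence exactly $ChKp^2$ after Lemma~\ref{lem:drift-Lp}.

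Your Gaussian direction uses rotation invariance: $\E_g\abs{\ip{g}{w}}^p=c_p^p\norm w^p$ holds as an identity. The two comparison constants therefore cancel, and you obtain $\norm{\chi_h}_{L^p(\ell_2)}\le Chp\,\norm{\nabla U_\lambda}_{L^p(\ell_2)}$, hence $ChKp^{3/2}$. This is better by a factor $\sqrt p$ than what the lemma states. The remaining details go through as you say. For each fixed $g$, the identity $\ip{g}{\chi_h}=h\psi(h\mathcal A_\lambda)\ip{g}{\nabla U_\lambda}$ follows from linearity of the $L^2$ functional calculus. The function $\ip{g}{\nabla U_\lambda}$ is mean-zero and lies in $L^2\cap L^p$, so the scalar lemma applies. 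Fubini is harmless because $\ip{g}{\chi_h(x)}$ is linear in $g$ with measurable coefficients.

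Downstream, the gain is only logarithmic. Using $a=3/2$ instead of $a=2$ in Lemma~\ref{lem:Holder-product} would lower $\ell_{h,\lambda}^4$ to $\ell_{h,\lambda}^3$ in Proposition~\ref{prop:feedback-input}, and $\ell_{h,\lambda}^2$ to $\ell_{h,\lambda}^{3/2}$ in Theorem~\ref{thm:fixed-lambda}. Your first, vector-valued route is not needed, and you are right not to rely on it.
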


\begin{proof}
By~\eqref{eq:psi-def},
\[
  \chi_h=h\psi(h\mathcal A_\lambda)\nabla U_\lambda.
\]
Let $(\varepsilon_i)_{i=1}^d$ be independent Rademacher signs.  The
Khintchine inequalities, Lemma~\ref{lem:psi-Lp}, and Minkowski's inequality
give
\begin{align*}
  \norm{\psi(h\mathcal A_\lambda)\nabla U_\lambda}_{L^p(\ell_2)}
  &\le C\E_\varepsilon
  \norm{\psi(h\mathcal A_\lambda)
  \sum_{i=1}^d\varepsilon_i\partial_iU_\lambda}_{L^p}\\
  &\le Cp\E_\varepsilon
  \norm{\sum_{i=1}^d\varepsilon_i\partial_iU_\lambda}_{L^p}\\
  &\le Cp^{3/2}\norm{\nabla U_\lambda}_{L^p(\ell_2)}.
\end{align*}
Lemma~\ref{lem:drift-Lp} gives
$\norm{\nabla U_\lambda}_{L^p(\ell_2)}\le K\sqrt p$, and therefore
\[
  \norm{\chi_h}_{L^p(\ell_2)}
  \le ChKp^2.
\]
\end{proof}

Combining Proposition~\ref{prop:operator-corrector} and
Lemma~\ref{lem:corrector-Lp} gives
\eqref{eq:chi-L2-input}--\eqref{eq:chi-Lp-input}.

\subsection{Martingale decomposition of the local error}

The Markov property and~\eqref{eq:abstract-poisson} imply
\begin{align}
  \E[D_k\mid\mathcal F_{kh}]
  &=\int_0^h(I-P_s)\nabla U_\lambda(X_{kh})\dd s
  \notag\\
  &=(I-P_h)\chi_h(X_{kh}).
  \label{eq:D-conditional}
\end{align}
Define
\begin{equation}\label{eq:M-def}
  M_{k+1}:=D_k-
  \bigl\{\chi_h(X_{kh})-\chi_h(X_{(k+1)h})\bigr\}.
\end{equation}
Since
\[
  \E[\chi_h(X_{(k+1)h})\mid\mathcal F_{kh}]
  =P_h\chi_h(X_{kh}),
\]
Equation~\eqref{eq:D-conditional} proves
\eqref{eq:corrector-decomp}--\eqref{eq:M-centered}.  This is the
``telescoping increment plus martingale increment'' decomposition used by
\citet[Section~2.1, pp.~3--4]{PedrottiWhalley2026}.

\subsection{Short-time crossing and the martingale variance}

\begin{lemma}[A BV-type semigroup estimate]\label{lem:BV}
For every $t>0$,
\begin{align}
  \norm{P_t\nabla g_\lambda-\nabla g_\lambda}
  _{L^1(\pi_\lambda;\R^d)}
  &\le C\sqrt t\,\E_{\pi_\lambda}a_\lambda,
  \label{eq:BV-bound}\\
  \E\norm{\nabla g_\lambda(X_t)-\nabla g_\lambda(X_0)}^2
  &\le CG^2R\sqrt t.
  \label{eq:g-crossing}
\end{align}
\end{lemma}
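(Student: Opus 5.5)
We prove \eqref{eq:BV-bound} by duality, and then deduce \eqref{eq:g-crossing} from it using the uniform bound $\norm{\nabla g_\lambda}\le G$. The guiding idea is that $\nabla g_\lambda$ behaves like a function of bounded variation whose total variation under $\pi_\lambda$ is controlled by the mean active trace $\E_{\pi_\lambda}a_\lambda$. The semigroup moves such a function by $O(\sqrt t)$ times its variation in $L^1$. By \eqref{eq:Lp-contraction} and a standard density argument, we may first replace $g$ by a smooth convex Lipschitz approximation. The identities of Proposition~\ref{prop:weighted-ibp} and the trace bound of Lemma~\ref{lem:stationary} pass to the limit.

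\textbf{Step 1 (duality).} Fix a test field $\phi\in L^\infty(\pi_\lambda;\R^d)$ with $\norm{\phi}_\infty\le1$. By reversibility,
\[
  \ip{P_t\nabla g_\lambda-\nabla g_\lambda}{\phi}_{L^2}
  =\ip{\nabla g_\lambda}{P_t\phi-\phi}_{L^2}
  =-\int_0^t\ip{\nabla g_\lambda}{\mathcal A_\lambda P_s\phi}_{L^2}\dd s .
\]
We integrate by parts in each coordinate against $\pi_\lambda$. By \eqref{eq:generator-on-core}, $\mathcal A_\lambda$ is the Dirichlet-form operator, so
\[
  \ip{\partial_i g_\lambda}{\mathcal A_\lambda w}_{L^2}=\int\ip{\nabla\partial_i g_\lambda}{\nabla w}\dd\pi_\lambda .
\]
Summing over $i$ with $w=P_s\phi_i$ gives
\[
  \Bigl|\int\sum_i\ip{H_\lambda e_i}{\nabla P_s\phi_i}\dd\pi_\lambda\Bigr|
  \le\int\tr\bigl(H_\lambda\bigr)\,\norm{\nabla P_s\phi}_{\op}\dd\pi_\lambda .
\]
This uses $0\preceq H_\lambda$ and the inequality $\abs{\tr(HB)}\le\tr(H)\norm{B}_{\op}$, valid for $H\succeq0$.

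\textbf{Step 2 (reverse Poincaré / gradient bound).} Under $m$-strong convexity, Bakry--Émery gives the pointwise reverse-Poincaré gradient estimate $\norm{\nabla P_s\phi}\le C s^{-1/2}\norm{\phi}_\infty$. For vector-valued $\phi$ we need the operator-norm version: for a unit vector $v$, apply the scalar estimate to $\ip{v}{\phi}$, and note that the Jacobian row bounds also combine into an operator-norm bound. The Bakry--Émery argument needs only $\nabla^2U_\lambda\succeq m\Id\succeq0$, which holds weakly. Inserting this bound and integrating $\int_0^t s^{-1/2}\dd s=2\sqrt t$ gives
\[
  \ip{P_t\nabla g_\lambda-\nabla g_\lambda}{\phi}\le C\sqrt t\,\E_{\pi_\lambda}a_\lambda .
\]
Taking the supremum over $\phi$ yields \eqref{eq:BV-bound}.

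\textbf{Step 3 (crossing).} Expand the square using stationarity:
\[
  \E\norm{\nabla g_\lambda(X_t)-\nabla g_\lambda(X_0)}^2=2\ip{\nabla g_\lambda}{\nabla g_\lambda-P_t\nabla g_\lambda}_{L^2}.
\]
This is at most $2G\norm{P_t\nabla g_\lambda-\nabla g_\lambda}_{L^1}$ by \eqref{eq:moreau-gradient-bound}. Then \eqref{eq:BV-bound} and $\E_{\pi_\lambda}a_\lambda\le GR$ from Lemma~\ref{lem:stationary} give $CG^2R\sqrt t$.

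The main obstacle is Step 2. The reverse-Poincaré bound must be justified in $L^\infty$ operator norm for the merely $C^{1,1}$ potential $U_\lambda$, and the integration by parts needs enough regularity of $P_s\phi$. I would handle both by mollifying $g_\lambda$, using the constants' independence of smoothness, and passing to the limit.
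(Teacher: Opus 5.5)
Your proposal is correct and follows essentially the same route as the paper: duality via reversibility, the weak generator integration by parts producing $-\int_0^t\int H_\lambda:\nabla P_s\phi\dd\pi_\lambda\dd s$, the reverse local Poincar\'e bound $\norm{\nabla P_s\phi}_{\op}\le(2s)^{-1/2}\norm{\phi}_\infty$ obtained by testing against unit vectors, the trace bound from $H_\lambda\succeq0$, and then stationarity with $\norm{\nabla g_\lambda}\le G$ and Lemma~\ref{lem:stationary} for the crossing estimate. The only difference is in the regularity bookkeeping. The paper uses Proposition~\ref{prop:weak-dirichlet-ibp} directly rather than mollifying, and it justifies the time integral (where $\norm{\mathcal A_\lambda P_s\phi}_{L^2}\lesssim s^{-1}$ is not integrable at $0$) by integrating over $[\varepsilon,t]$ and letting $\varepsilon\downarrow0$ with the integrable $s^{-1/2}$ bound; you should make that step explicit.
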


\begin{proof}
Let $\phi:\R^d\to\R^d$ be bounded with $\norm\phi_\infty\le1$.
Reversibility and the time-integrated weak integration-by-parts formula in
Proposition~\ref{prop:weak-dirichlet-ibp} give
\begin{align}
  &\int\phi\cdot(P_t\nabla g_\lambda-\nabla g_\lambda)\dd\pi_\lambda
  \notag\\
  &\quad=\int\nabla g_\lambda\cdot(P_t\phi-\phi)\dd\pi_\lambda
  \notag\\
  &\quad=-\int_0^t\int H_\lambda:\nabla P_r\phi
  \dd\pi_\lambda\dd r.
  \label{eq:BV-duality}
\end{align}
For every unit vector $a\in\R^d$, the reverse local Poincar\'e inequality under $\mathrm{CD}(m,\infty)$ gives
\[
  \norm{\nabla P_r\ip{a}{\phi}}^2
  \le
  \frac{m}{e^{2mr}-1}
  \left\{
    P_r\bigl(\ip{a}{\phi}^2\bigr)
    -\bigl(P_r\ip{a}{\phi}\bigr)^2
  \right\}
  \le
  \frac{\norm{\phi}_\infty^2}{2r};
\]
see \citet[Theorem~4.7.2(iv), equation~(4.7.6), and the bounded-measurable extension discussed there]{BakryGentilLedoux2014}.
Since $P_r$ acts coordinatewise,
\[
  \nabla P_r\ip{a}{\phi}
  =(\nabla P_r\phi)^\top a.
\]
Taking the supremum over $\norm a=1$ yields
\begin{equation}\label{eq:reverse-Poincare}
  \norm{\nabla P_r\phi}_{\op,\infty}
  \le(2r)^{-1/2}\norm{\phi}_\infty.
\end{equation}
Since $H_\lambda\succeq0$,
\[
  \abs{H_\lambda:\nabla P_r\phi}
  \le a_\lambda\norm{\nabla P_r\phi}_{\op}.
\]
The integrable bound $r^{-1/2}$ justifies~\eqref{eq:BV-duality} by first
integrating over $[\varepsilon,t]$ and then letting $\varepsilon\downarrow0$.
Substitution in~\eqref{eq:BV-duality}, integration in $r$, and duality between
$L^1$ and $L^\infty$ prove~\eqref{eq:BV-bound}.

By stationarity and reversibility,
\begin{align*}
  &\E\norm{\nabla g_\lambda(X_t)-\nabla g_\lambda(X_0)}^2\\
  &\quad=2\ip{\nabla g_\lambda}
  {(I-P_t)\nabla g_\lambda}_{L^2(\pi_\lambda)}\\
  &\quad\le2G\norm{\nabla g_\lambda-P_t\nabla g_\lambda}_{L^1}
  \le CG\sqrt t\,\E_{\pi_\lambda}a_\lambda.
\end{align*}
Lemma~\ref{lem:stationary} completes the proof.
\end{proof}

Define the stationary drift modulus
\begin{equation}\label{eq:Gamma}
  \Gamma_\lambda(t)
  :=\E\norm{\nabla U_\lambda(X_t)-\nabla U_\lambda(X_0)}^2.
\end{equation}
From~\eqref{eq:diffusion}, Cauchy--Schwarz, stationarity, and
Lemma~\ref{lem:stationary},
\begin{align}
  \E\norm{X_t-X_0}^2
  &\le2t\int_0^t\E\norm{\nabla U_\lambda(X_s)}^2\dd s+4dt
  \notag\\
  &\le2R^2t^2+4dt.
  \label{eq:X-increment}
\end{align}
Combining this with the $L_f$-Lipschitz property of $\nabla f$ and
\eqref{eq:g-crossing} gives
\begin{align}
  \Gamma_\lambda(t)
  &\le2L_f^2\E\norm{X_t-X_0}^2
  +2\E\norm{\nabla g_\lambda(X_t)-\nabla g_\lambda(X_0)}^2
  \notag\\
  &\le C\left\{L_f^2(dt+R^2t^2)+G^2R\sqrt t\right\}.
  \label{eq:Gamma-bound}
\end{align}

Cauchy--Schwarz gives
\begin{equation}\label{eq:D-variance}
  \E\norm{D_k}^2
  \le h\int_0^h\Gamma_\lambda(s)\dd s.
\end{equation}
On the other hand, the corrector increment estimate proved at the end of
Appendix~\ref{app:bounded-generator-functions} gives
\begin{align}
  \E\norm{\chi_h(X_h)-\chi_h(X_0)}^2
  &\le h^2\E\norm{\nabla U_\lambda(X_h)-\nabla U_\lambda(X_0)}^2
  \notag\\
  &=h^2\Gamma_\lambda(h).
  \label{eq:chi-increment}
\end{align}
Using~\eqref{eq:M-def} and
$\norm{a+b}^2\le2\norm a^2+2\norm b^2$,
\begin{equation}\label{eq:M-prebound}
  \E\norm{M_{k+1}}^2
  \le2h\int_0^h\Gamma_\lambda(s)\dd s
  +2h^2\Gamma_\lambda(h).
\end{equation}
Substitution of~\eqref{eq:Gamma-bound}, together with
\[
  h\int_0^h s\dd s=\frac{h^3}{2},
  \qquad
  h\int_0^h s^2\dd s=\frac{h^4}{3},
  \qquad
  h\int_0^h\sqrt s\dd s=\frac{2h^{5/2}}{3},
\]
proves~\eqref{eq:M-var-input}.  Proposition~\ref{prop:corrector-input} is
therefore established.

\section{Shared-Noise Two-Point Curvature and Feedback}
\label{sec:shared-noise}

\subsection{A one-step shared-noise trace estimate}

\begin{proposition}[First moment of the shared-noise two-point curvature]
\label{prop:shared-trace}
Under Assumptions~\ref{ass:smooth}--\ref{ass:nonsmooth}, fix
$x,y\in\R^d$, and let
\begin{align}
  X_t&=x-\int_0^t\nabla U_\lambda(X_s)\dd s+\sqrt2B_t,
  \label{eq:local-exact}\\
  \widehat X_1&=y-h\nabla U_\lambda(y)+\sqrt2B_h.
  \label{eq:local-euler}
\end{align}
There exist universal constants $c,C>0$ such that, whenever
$hL_\lambda\le c$,
\begin{equation}\label{eq:shared-trace}
  \E\tr A_\lambda^g(X_h,\widehat X_1)
  \le CG\sqrt{\frac dh}.
\end{equation}
\end{proposition}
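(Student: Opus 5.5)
Write $\Delta:=\nabla g_\lambda(X_h)-\nabla g_\lambda(\widehat X_1)$ and $W:=X_h-\widehat X_1$. By~\eqref{eq:Ag-def},
\[
  \tr A_\lambda^g(X_h,\widehat X_1)=\frac{\norm\Delta^2}{\ip{W}{\Delta}}
\]
when $\Delta\ne0$ and zero otherwise. Cocoercivity~\eqref{eq:cocoercivity} bounds this by $\lambda^{-1}$, which is useless. The plan is instead to use the fact that both points are driven by the same Gaussian vector $\sqrt2B_h$, and to trade the two-point quotient for an average of the weak Hessian trace along the segment between them.

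The key identity comes from the fundamental theorem of calculus for the Lipschitz map $\nabla g_\lambda$:
\[
  \Delta=\int_0^1 H_\lambda\bigl(\widehat X_1+sW\bigr)W\dd s .
\]
Combining this with Cauchy--Schwarz in the $H_\lambda$-inner product gives
\[
  \frac{\norm\Delta^2}{\ip W\Delta}\le\int_0^1\tr H_\lambda(\widehat X_1+sW)\dd s,
\]
since $\ip{W}{\Delta}=\int_0^1W^\top H_\lambda W\dd s$ and $\norm{\Delta}^2\le\bigl(\int_0^1 W^\top H_\lambda W\dd s\bigr)\bigl(\int_0^1\norm{H_\lambda}_{\op}\dd s\bigr)$. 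Because $H_\lambda$ is only defined almost everywhere, this step is made rigorous by mollifying $g_\lambda$ first, as in the footnote of Section~\ref{sec:related}, and passing to the limit by Fatou. It therefore suffices to bound $\E\,a_\lambda(\widehat X_1+sW)$ uniformly in $s\in[0,1]$.

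Next I write the interpolated point as a Gaussian plus a perturbation. Set $\Xi:=\sqrt2B_h\sim N(0,2h\Id)$, and write
\[
  \widehat X_1+sW=y_s+\Xi+E_s,\qquad y_s:=(1-s)\bigl(y-h\nabla U_\lambda(y)\bigr)+s\bigl(x-h\nabla U_\lambda(x)\bigr),
\]
where $y_s$ is deterministic. The perturbation $E_s=-s\int_0^h\{\nabla U_\lambda(X_r)-\nabla U_\lambda(x)\}\dd r$ is small: by Lipschitzness of $\nabla U_\lambda$, Gronwall, and $hL_\lambda\le c$, we get $\norm{E_s}\le ChL_\lambda\sup_{r\le h}\norm{\sqrt2B_r}\lesssim c\sqrt{hd}$ in moments. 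If $E_s$ were absent, Gaussian integration by parts would give
\[
  \E\,\tr H_\lambda(y_s+\Xi)=\E\,\ip{\nabla g_\lambda(y_s+\Xi)}{\Xi}/(2h)\le G\,\E\norm\Xi/(2h)\le G\sqrt{d/(2h)},
\]
which is exactly~\eqref{eq:shared-trace}.

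The main obstacle is the random, $\Xi$-dependent perturbation $E_s$. I would handle it by viewing $\Phi_s:\Xi\mapsto y_s+\Xi+E_s(\Xi)$ as a near-identity map of Wiener space, or more concretely of the path $B$. The plan is to do the Gaussian integration by parts in the full Brownian path via Malliavin calculus. The Malliavin derivative of $\widehat X_1+sW$ with respect to the increments is $\sqrt2\,(\Id+\text{error})$, where the error has operator norm at most $ChL_\lambda\le Cc$ since $X_h$ solves an ODE with $L_\lambda$-Lipschitz drift over time $h$. Write $N:=\nabla g_\lambda(\widehat X_1+sW)$. The Skorokhod/Clark--Ocone duality then gives
\[
  \E\,\tr\bigl(H_\lambda\cdot D(\widehat X_1+sW)\bigr)=\E\ip{N}{\delta(\cdot)},
\]
where $\delta$ is the divergence. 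The left side dominates $(1-Cc)\,\E\,a_\lambda$ because $H_\lambda\succeq0$ and $\tr(H_\lambda M)\ge(1-\norm{M-\Id}_{\op})\tr H_\lambda$. The right side is bounded by $G\,\E\norm{\delta}\le CG\sqrt{d/h}$, after normalizing by $h$ and using the derivative bounds. An alternative would be a change-of-variables argument for the deterministic map $\Xi\mapsto\Phi_s(\Xi)$ with Jacobian close to $\Id$, but the path dependence makes the Malliavin route cleaner. Taking $c$ small so that $1-Cc\ge1/2$ finishes the proof.
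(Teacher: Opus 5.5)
Your proposal is correct and is essentially the paper's proof. The steps match: you dominate $\tr A_\lambda^g$ by the trace of the chord-averaged Hessian; you integrate by parts in the shared Gaussian noise, using that the Jacobian of the interpolated endpoint is within $O(hL_\lambda)$ of $\sqrt2\,\Id$; and you remove smoothness by mollification, Gronwall convergence of the endpoints, and Fatou (the paper notes this needs lower semicontinuity of $(u,v)\mapsto\norm{v}^2/\ip{u}{v}$ on the cocoercive set). The paper makes your Malliavin duality concrete by conditioning on the Brownian bridge $B_t-(t/h)B_h$ and differentiating only in $W=B_h$; this is your duality with the deterministic direction $u_r=e_i/h$, so $\delta(u)=B_h^i/h$, and it makes the detour through the perturbation $E_s$ unnecessary.
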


\begin{proof}
\textbf{Step 1: the smooth case.}
We first prove~\eqref{eq:shared-trace} under the additional assumption $g_\lambda\in C^2(\R^d)$.  The standing bounds then hold pointwise:
\[
  \norm{\nabla g_\lambda(x)}\le G,
  \qquad
  0\preceq\nabla^2g_\lambda(x)\preceq\lambda^{-1}\Id,
  \qquad
  m\Id\preceq\nabla^2U_\lambda(x)\preceq L_\lambda\Id,
  \quad x\in\R^d.
\]
Write
\begin{equation}\label{eq:bridge}
  W:=B_h\sim N(0,h\Id),
  \qquad
  \widetilde B_t:=B_t-\frac thW.
\end{equation}
The Gaussian endpoint $W$ and the Brownian bridge
$(\widetilde B_t)_{0\le t\le h}$ are independent.  Conditional on the entire
bridge, both $X_t$ and $\widehat X_1$ are deterministic functions of the same
vector $W$.  For $0\le\theta\le1$, define
\begin{equation}\label{eq:T-theta}
  T_\theta(W):=(1-\theta)\widehat X_1(W)+\theta X_h(W).
\end{equation}

Let $K_t:=D_WX_t$.  Differentiating~\eqref{eq:local-exact} with respect to
$W$ gives
\begin{equation}\label{eq:K-ODE}
  \dot K_t=-\nabla^2U_\lambda(X_t)K_t+\frac{\sqrt2}{h}\Id,
  \qquad K_0=0.
\end{equation}
Integrating~\eqref{eq:K-ODE} gives
\[
  K_t
  =
  \frac{\sqrt2\,t}{h}\Id
  -\int_0^t\nabla^2U_\lambda(X_s)K_s\dd s.
\]
Since
$\norm{\nabla^2U_\lambda(X_s)}_{\op}\le L_\lambda$,
\[
  \norm{K_t}_{\op}
  \le
  \frac{\sqrt2\,t}{h}
  +L_\lambda\int_0^t\norm{K_s}_{\op}\dd s.
\]
Gronwall's inequality therefore yields
\[
  \norm{K_t}_{\op}
  \le
  \frac{\sqrt2\,t}{h}e^{L_\lambda t}
  \le
  \sqrt2e^{hL_\lambda},
  \qquad 0\le t\le h.
\]
Moreover, evaluating the integral equation at $t=h$ gives
\[
  K_h-\sqrt2\Id
  =
  -\int_0^h\nabla^2U_\lambda(X_s)K_s\dd s,
\]
and hence
\[
  \norm{K_h-\sqrt2\Id}_{\op}
  \le
  hL_\lambda\sup_{0\le s\le h}\norm{K_s}_{\op}
  \le
  \sqrt2hL_\lambda e^{hL_\lambda}.
\]
Thus
\begin{equation}\label{eq:K-estimates}
  \sup_{0\le t\le h}\norm{K_t}_{\op}
  \le\sqrt2e^{hL_\lambda},
  \qquad
  \norm{K_h-\sqrt2\Id}_{\op}
  \le\sqrt2hL_\lambda e^{hL_\lambda}.
\end{equation}
Choose $c$ so that $ce^c\le1/4$.  Since
$D_W\widehat X_1=\sqrt2\Id$,
\begin{equation}\label{eq:J-lower}
  J_\theta:=D_WT_\theta
  =(1-\theta)\sqrt2\Id+\theta K_h,
  \qquad
  \sym J_\theta\succeq2^{-1/2}\Id.
\end{equation}

Define the Hessian averaged along the segment joining the two endpoints by
\begin{equation}\label{eq:H-chord}
  \overline H_\lambda(X_h,\widehat X_1)
  :=\int_0^1\nabla^2g_\lambda(T_\theta(W))\dd\theta.
\end{equation}
Then
\[
  \overline H_\lambda(X_h,\widehat X_1)(X_h-\widehat X_1)
  =\nabla g_\lambda(X_h)-\nabla g_\lambda(\widehat X_1).
\]
The positive-semidefinite Cauchy--Schwarz inequality gives
\begin{equation}\label{eq:Ag-below-H}
  A_\lambda^g(X_h,\widehat X_1)
  \preceq\overline H_\lambda(X_h,\widehat X_1).
\end{equation}
Indeed, for every $v\in\R^d$, the quadratic form on the left is
\[
  \frac{
  \bigl(v^\top\overline H_\lambda(X_h-\widehat X_1)\bigr)^2}
  {(X_h-\widehat X_1)^\top
  \overline H_\lambda(X_h-\widehat X_1)}
  \le v^\top\overline H_\lambda v.
\]
Consequently,
\begin{equation}\label{eq:trace-below-H}
  \tr A_\lambda^g(X_h,\widehat X_1)
  \le\int_0^1\tr\nabla^2g_\lambda(T_\theta(W))\dd\theta.
\end{equation}

Fix the bridge and $\theta$.  By the chain rule, symmetry of the Hessian,
and~\eqref{eq:J-lower},
\begin{align}
  \operatorname{div}_W\{\nabla g_\lambda(T_\theta(W))\}
  &=\tr\{\nabla^2g_\lambda(T_\theta(W))J_\theta(W)\}
  \notag\\
  &=\tr\{\nabla^2g_\lambda(T_\theta(W))\sym J_\theta(W)\}
  \notag\\
  &\ge2^{-1/2}\tr\nabla^2g_\lambda(T_\theta(W)).
  \label{eq:div-lower}
\end{align}
Gaussian integration by parts for $W\sim N(0,h\Id)$ yields
\begin{align}
  \E_W\operatorname{div}_W\{\nabla g_\lambda(T_\theta(W))\}
  &=\frac1h\E_W\ip{W}{\nabla g_\lambda(T_\theta(W))}
  \notag\\
  &\le\frac Gh\E\norm W
  \le G\sqrt{\frac dh}.
  \label{eq:Gaussian-IBP}
\end{align}
Combining~\eqref{eq:trace-below-H}--\eqref{eq:Gaussian-IBP} and integrating
over $\theta$ and the Brownian bridge proves~\eqref{eq:shared-trace} in the
smooth case.

\medskip
\noindent\textbf{Step 2: passage to the $C^{1,1}$ Moreau envelope.}
Let $\rho_\delta$ be a standard smooth mollifier and, only within this proof,
set
\begin{equation}\label{eq:mollifier}
  g_{\lambda,\delta}:=g_\lambda*\rho_\delta,
  \qquad
  U_{\lambda,\delta}:=f+g_{\lambda,\delta}.
\end{equation}
Mollification preserves the uniform bounds
\begin{equation}\label{eq:mollifier-bounds}
  \sup_x\norm{\nabla g_{\lambda,\delta}(x)}\le G,
  \qquad
  0\preceq\nabla^2g_{\lambda,\delta}\preceq\lambda^{-1}\Id,
  \qquad
  \norm{\nabla g_{\lambda,\delta}-\nabla g_\lambda}_\infty\to0.
\end{equation}
Let $X^{(\delta)}$ and $\widehat X_1^{(\delta)}$ be the exact and Euler
endpoints associated with $U_{\lambda,\delta}$, driven by the same Brownian
path as the original endpoints.  If
\[
  \eta_\delta
  :=\norm{\nabla U_{\lambda,\delta}-\nabla U_\lambda}_\infty,
\]
then Gronwall's inequality gives
\begin{equation}\label{eq:endpoint-convergence}
  \sup_{0\le t\le h}\norm{X_t^{(\delta)}-X_t}
  \le h\eta_\delta e^{hL_\lambda}\longrightarrow0,
  \qquad
  \norm{\widehat X_1^{(\delta)}-\widehat X_1}
  \le h\eta_\delta\longrightarrow0.
\end{equation}

For the limiting argument, define
\begin{equation}\label{eq:sigma-lsc}
  \Sigma(u,v):=
  \begin{cases}
  \norm v^2/\ip{u}{v},&v\ne0,\\
  0,&v=0.
  \end{cases}
\end{equation}
On the closed set
$\{(u,v):\ip{u}{v}\ge\lambda\norm v^2\}$, the function $\Sigma$ is lower
semicontinuous.  The trace of the matrix in~\eqref{eq:Ag-def} is precisely
$\Sigma(x-y,\nabla g_\lambda(x)-\nabla g_\lambda(y))$.  Therefore
\eqref{eq:mollifier-bounds}--\eqref{eq:endpoint-convergence}, cocoercivity,
and Fatou's lemma imply
\begin{align}
  \E\tr A_\lambda^g(X_h,\widehat X_1)
  &\le\liminf_{\delta\downarrow0}
  \E\tr A_{\lambda,\delta}^g
  (X_h^{(\delta)},\widehat X_1^{(\delta)})
  \notag\\
  &\le CG\sqrt{\frac dh}.
  \label{eq:Fatou-shared}
\end{align}
Thus~\eqref{eq:shared-trace} holds under Assumptions~\ref{ass:smooth}--\ref{ass:nonsmooth}, completing the proof.
\end{proof}

\subsection{From the one-step trace estimate to curvature feedback}
\label{sec:feedback-proof}

In the global synchronous coupling, condition on the starting points of the
$(k-1)$st step and apply Proposition~\ref{prop:shared-trace}.  For $k\ge1$,
\begin{equation}\label{eq:S-bounds}
  S_k:=\tr A_k^g,
  \qquad
  0\le S_k\le\lambda^{-1},
  \qquad
  \E[S_k\mid\mathcal F_{(k-1)h}]
  \le CG\sqrt{\frac dh}.
\end{equation}
Since $X_0=\widehat X_0$, one has $S_0=0$.  Hence, for every $k\ge0$,
\begin{equation}\label{eq:S-unconditional}
  \E S_k\le CG\sqrt{d/h}.
\end{equation}
No independence between $S_k$ and $\chi_h(X_{kh})$ is assumed.

\begin{lemma}[A Holder product bound]\label{lem:Holder-product}
Let a nonnegative random variable $S$ and a random vector $C$ satisfy
\begin{equation}\label{eq:Holder-assumption}
  0\le S\le L,
  \qquad
  \E S\le B.
\end{equation}
Set $B_0:=\min\{B,L\}$.  If $B_0=0$, then $S=0$ almost surely.  If
$B_0>0$, then, for every $p>1$,
\begin{equation}\label{eq:Holder-general}
  \E[\norm C^2S]
  \le\norm C_{L^{2p}}^2 B_0^{1-1/p}L^{1/p}.
\end{equation}
If, in addition, $\norm C_{L^r}\le Hr^a$ for every $r\ge2$, then
\begin{equation}\label{eq:Holder-optimized}
  \E[\norm C^2S]
  \le C_aH^2B_0
  \left\{1+\log\!\left(e+\frac{L}{B_0}\right)\right\}^{2a}.
\end{equation}
\end{lemma}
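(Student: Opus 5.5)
The plan is to prove the three assertions in order. The degenerate case is immediate: if $B_0=0$ then either $B=0$ or $L=0$. In both cases the constraints $0\le S\le L$ and $\E S\le B$ force $S=0$ almost surely. The first case holds because a nonnegative variable with zero mean vanishes almost surely, and the second because $S\le 0$.

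For \eqref{eq:Holder-general} I will apply H\"older with exponents $p$ and $q=p/(p-1)$:
\[
  \E[\norm C^2S]\le\norm{\norm C^2}_{L^p}\norm S_{L^q}=\norm C_{L^{2p}}^2\bigl(\E S^q\bigr)^{1/q}.
\]
Since $0\le S\le L$, we have $S^q\le L^{q-1}S$. Hence $\E S^q\le L^{q-1}\E S\le L^{q-1}B_0$, where I use $\E S\le\min\{B,L\}$ because $\E S\le L$ trivially. Then $(L^{q-1}B_0)^{1/q}=L^{1/p}B_0^{1-1/p}$, since $(q-1)/q=1/p$. This proves \eqref{eq:Holder-general}.

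For \eqref{eq:Holder-optimized} I will insert $\norm C_{L^{2p}}\le H(2p)^a$, valid for $p\ge1$ (so that $2p\ge2$). This gives
\[
  \E[\norm C^2S]\le H^2(2p)^{2a}B_0\,(L/B_0)^{1/p}.
\]
Set $\ell:=1+\log(e+L/B_0)\ge2$ and choose $p:=\ell>1$. Since $L/B_0\le e+L/B_0$, we get $(L/B_0)^{1/p}\le\exp\{\log(e+L/B_0)/\ell\}\le e$. The total bound is then $e\,2^{2a}H^2B_0\ell^{2a}$, so $C_a=e\,4^{a}$ works.

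The only points needing care are that $p=\ell$ keeps $2p\ge2$ and $p>1$, and that no hidden dependence on $L/B_0\ge1$ is required. The latter holds because $B_0\le L$ by definition, so $L/B_0\ge1$ always. Neither is a real obstacle; the main step is simply the choice $p\approx\log(L/B_0)$, which balances the polynomial growth $p^{2a}$ of the moments against the factor $(L/B_0)^{1/p}$.
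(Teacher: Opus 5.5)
Your proposal is correct and follows the paper's argument: H\"older with $q=p/(p-1)$, the pointwise bound $S^q\le L^{q-1}S$, and a logarithmic choice of $p$. The differences are cosmetic. You take $p=1+\log(e+L/B_0)$ where the paper takes $p=2+\log(L/B_0)$, which gives the explicit constant $C_a=e\,4^a$. You also verify the degenerate case $B_0=0$, which the paper asserts without proof.
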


\begin{proof}
Let $q=p/(p-1)$.  Holder's inequality and
$S^q\le L^{q-1}S$ give
\begin{align*}
  \E[\norm C^2S]
  &\le\norm C_{L^{2p}}^2\norm S_{L^q}
  \le\norm C_{L^{2p}}^2(L^{q-1}B_0)^{1/q}\\
  &=\norm C_{L^{2p}}^2B_0^{1-1/p}L^{1/p},
\end{align*}
which proves~\eqref{eq:Holder-general}.  Choose
$p:=2+\log(L/B_0)$.  Then
\[
  B_0^{1-1/p}L^{1/p}
  =B_0(L/B_0)^{1/p}\le eB_0,
\]
and
$p\le C\{1+\log(e+L/B_0)\}$.  Substitution of
$\norm C_{L^{2p}}\le H(2p)^a$ proves~\eqref{eq:Holder-optimized}.
\end{proof}

We now apply Lemma~\ref{lem:Holder-product} with
\begin{equation}\label{eq:Holder-application}
  C=\chi_h(X_{kh}),
  \qquad
  S=S_k,
  \qquad
  L=\lambda^{-1},
  \qquad
  B=CG\sqrt{d/h}.
\end{equation}
By~\eqref{eq:chi-Lp-input}, one may take $H=ChK$ and $a=2$.  Since
$A_k^g\succeq0$,
\begin{equation}\label{eq:quadratic-trace}
  \chi_h(X_{kh})^\top A_k^g\chi_h(X_{kh})
  \le\norm{\chi_h(X_{kh})}^2S_k.
\end{equation}
If $B_0=B$, then
$L/B_0\le C\sqrt h/(G\sqrt d\,\lambda)$; if $B_0=L$, then $L/B_0=1$.
Thus
\begin{equation}\label{eq:log-control}
  B_0\le CG\sqrt{d/h},
  \qquad
  1+\log\!\left(e+\frac{L}{B_0}\right)
  \le C\ell_{h,\lambda}.
\end{equation}
Equations~\eqref{eq:Holder-optimized}--\eqref{eq:log-control} yield
\begin{align*}
  \E\left[
  \chi_h(X_{kh})^\top A_k^g\chi_h(X_{kh})
  \right]
  &\le Ch^2K^2G\sqrt{d/h}\,\ell_{h,\lambda}^4\\
  &=CK^2G\sqrt d\,h^{3/2}\ell_{h,\lambda}^4.
\end{align*}
This proves Proposition~\ref{prop:feedback-input}.

\section{Conclusion}\label{sec:conclusion}

We established Wasserstein error bounds for classical MYULA with a smooth strongly convex component and a convex Lipschitz nonsmooth term.
A discrete Poisson corrector, combined with active-trace and shared-noise curvature estimates, yields an $O(h)+\widetilde O(h^{3/4})$ invariant-measure bias bound. Under the stated step-size restriction, the error coefficients depend only logarithmically on the inverse smoothing parameter. Combining this estimate with the Moreau approximation bias and Wasserstein contraction gives a sufficient $\widetilde O(\varepsilon^{-4/3})$ iteration complexity for $\sqrt m\,\Wtwo(\mu_N,\pi)\le\varepsilon$, with model parameters and initialization fixed.

\appendix

\section{Weak Hessians and Integration by Parts}
\label{app:weak-hessian}

We record the facts used in Lemmas~\ref{lem:stationary} and~\ref{lem:BV}; see \citet[Appendices~A--B]{XinZhang2026}, \citet{EvansGariepy2015}, and \citet{Leoni2017}.

\subsection{Weak derivatives and weak Hessians}

\begin{definition}[Weak derivative]\label{def:weak-derivative}
Let $u,v\in L^1_{\mathrm{loc}}(\R^d)$.  We say that $v$ is the weak partial
derivative $\partial_i u$ if
\begin{equation}\label{eq:weak-derivative}
  \int_{\R^d}u(x)\,\partial_i\phi(x)\dd x
  =-\int_{\R^d}v(x)\phi(x)\dd x
  \qquad\text{for every }\phi\in C_c^\infty(\R^d).
\end{equation}
The function $v$, if it exists, is unique up to equality on a Lebesgue-null
set.
\end{definition}

A Lipschitz function is differentiable almost everywhere by Rademacher's theorem.  Moreover, its almost-everywhere derivative is its weak derivative; see \citet[Sections~3.1--3.2]{EvansGariepy2015}.  Since $g_\lambda\in C^{1,1}$, each component of $\nabla g_\lambda$ is Lipschitz and therefore has a bounded weak gradient.

\begin{definition}[Weak Hessian]\label{def:weak-hessian}
Let $u\in C^1(\R^d)$ and assume that $\nabla u$ is locally Lipschitz.  The
weak Hessian $H_u=\nabla^2u$ is the matrix field whose $(i,j)$ entry is the
weak derivative $\partial_j(\partial_i u)$.  It agrees almost everywhere with
the classical derivative of $\nabla u$ supplied by Rademacher's theorem.
\end{definition}

Different measurable representatives of $H_u$ can differ on a null set, but
all integrals against an absolutely continuous probability law are the same.
This is why the choice of representative never affects the estimates in the
paper.

\begin{proposition}[Basic properties of a weak Hessian]
\label{prop:weak-hessian-properties}
Let $u\in C^{1,1}(\R^d)$ be convex and suppose
$\operatorname{Lip}(\nabla u)\le L$.  Then its weak Hessian $H_u$ can be
chosen measurable and satisfies
\begin{equation}\label{eq:weak-H-general-bounds}
  H_u(x)=H_u(x)^\top,
  \qquad
  0\preceq H_u(x)\preceq L\Id
  \quad\text{for a.e. }x.
\end{equation}
If $u=g_\lambda$, then $L=\lambda^{-1}$ and
$a_\lambda=\tr H_\lambda$ satisfies~\eqref{eq:weak-hessian-bounds}.
\end{proposition}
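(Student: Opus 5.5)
The plan is to reduce everything to pointwise statements about the a.e.\ derivative of the Lipschitz map $\nabla u$, supplied by Rademacher's theorem. By Definition~\ref{def:weak-hessian}, $H_u$ agrees a.e.\ with this classical derivative. Measurability I would get by writing the derivative as a pointwise limit of the difference quotients
\[
  \frac{\partial_i u(x+e_j/n)-\partial_i u(x)}{1/n}
\]
on the measurable set where the $\limsup$ and $\liminf$ coincide, and setting the field to $0$ elsewhere. Each quotient is continuous, so the limit is Borel, and it equals the weak Hessian a.e.

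For symmetry, I would use the weak derivative characterization~\eqref{eq:weak-derivative} twice. For $\phi\in C_c^\infty$,
\[
  \int\partial_j(\partial_iu)\,\phi
  =-\int\partial_iu\,\partial_j\phi
  =\int u\,\partial_i\partial_j\phi .
\]
This expression is symmetric in $(i,j)$ by Schwarz's theorem for $\phi$, so $\partial_j\partial_iu=\partial_i\partial_ju$ a.e. On the exceptional null set I would replace $H_u$ by its symmetric part, which is harmless.

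For the order bounds, fix a point $x$ of differentiability of $\nabla u$. For $v\in\R^d$ and $t>0$, convexity of $u$ gives monotonicity of the gradient, and the Lipschitz bound gives Cauchy--Schwarz control:
\[
  0\le\ip{\nabla u(x+tv)-\nabla u(x)}{tv}\le Lt^2\norm v^2 .
\]
Dividing by $t^2$ and letting $t\downarrow0$ yields $0\le v^\top H_u(x)v\le L\norm v^2$ for every fixed $v$. To avoid quantifying over uncountably many $v$ on a varying null set, I would note that at a differentiability point the limit exists for all $v$ at once, since Fréchet differentiability is a property of the point. So the bound holds for all $v$ simultaneously at each such $x$, which gives~\eqref{eq:weak-H-general-bounds}.

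For $u=g_\lambda$, convexity and $C^{1,1}$ regularity with $\operatorname{Lip}(\nabla g_\lambda)\le\lambda^{-1}$ follow from~\eqref{eq:cocoercivity} by Cauchy--Schwarz. Taking traces of $0\preceq H_\lambda\preceq\lambda^{-1}\Id$ gives $0\le a_\lambda\le d/\lambda$, which is~\eqref{eq:weak-hessian-bounds}.

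The only delicate point is choosing a single measurable representative that is symmetric with both bounds holding at the same a.e.\ points. The difference-quotient construction intersected with the Rademacher set handles this, so I do not expect a real obstacle.
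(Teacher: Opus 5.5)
Your proof is correct, but for the order bounds it takes a different route from the paper.

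\textbf{The paper's route.} The paper mollifies. The function $u_\delta=u*\rho_\delta$ is convex with $\operatorname{Lip}(\nabla u_\delta)\le L$, so $0\preceq\nabla^2u_\delta\preceq L\Id$ holds classically everywhere. These inequalities are then transferred to $H_u$ as $\delta\downarrow0$, using distributional convergence (and weak-$*$ $L^\infty_{\mathrm{loc}}$ convergence along a subsequence), by testing against nonnegative scalar test functions and fixed vectors $v$.

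\textbf{Your route.} You work pointwise at the Rademacher points of $\nabla u$. There you obtain $0\le v^\top H_u(x)v\le L\norm v^2$ directly, from gradient monotonicity and the Lipschitz bound applied to difference quotients.

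\textbf{What each route buys.} Yours is more elementary. Its exceptional null set, the non-differentiability set of $\nabla u$, does not depend on $v$, so the bounds hold for all $v$ simultaneously. In the paper's limit argument the null set a priori depends on the fixed test vector. Concluding for all $v$ off a single null set then needs a routine countable-dense-set-plus-continuity step, which the paper leaves implicit. You also construct an explicit Borel representative, whereas the paper just invokes Rademacher.

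The paper's mollification, for its part, never evaluates derivatives at individual points. It also matches the smoothing arguments used elsewhere in the paper: the footnote in Section~\ref{sec:related} and Step~2 of Proposition~\ref{prop:shared-trace}.

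The symmetry step is the same in both arguments, namely equality of mixed weak derivatives.

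\textbf{A small wording point.} Convexity of $g_\lambda$ does not come from Cauchy--Schwarz. It follows from the nonnegativity of the left-hand side of \eqref{eq:cocoercivity}, which gives a monotone gradient, or from the standard convexity of Moreau envelopes. Only the $\lambda^{-1}$-Lipschitz bound uses Cauchy--Schwarz.
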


\begin{proof}
Rademacher's theorem gives the almost-everywhere derivative of the Lipschitz
map $\nabla u$.  Equality of mixed weak derivatives implies symmetry almost
everywhere.  A convenient way to see the matrix inequalities is to mollify.
Let $u_\delta=u*\rho_\delta$, where $\rho_\delta$ is a standard nonnegative
smooth mollifier.  Convexity is preserved by convolution, so
$\nabla^2u_\delta\succeq0$.  Also
$\operatorname{Lip}(\nabla u_\delta)\le L$, hence
$\nabla^2u_\delta\preceq L\Id$.  The gradients
$\nabla u_\delta$ converge locally uniformly to $\nabla u$, and their
classical derivatives converge to $H_u$ in the sense of distributions and,
after passing to a subsequence, weakly-* in $L^\infty_{\mathrm{loc}}$.
Testing the inequalities against nonnegative scalar test functions and fixed
vectors shows that the same inequalities hold for $H_u$ almost everywhere.
\end{proof}

\subsection{Weighted integration by parts under the Gibbs law}

Let $x_\star$ be the minimizer of $U_\lambda$.  Strong convexity gives
\begin{equation}\label{eq:Gaussian-tail-from-convexity}
  U_\lambda(x)
  \ge U_\lambda(x_\star)+\frac m2\norm{x-x_\star}^2.
\end{equation}
Hence $\pi_\lambda$ has Gaussian tails.  In particular, every polynomial in
$\norm x$ is integrable.  Since $\nabla U_\lambda$ is globally Lipschitz and
vanishes at $x_\star$, it grows at most linearly.  These two observations
justify the cutoff limits below.

Choose $\eta\in C_c^\infty(\R^d)$ such that
$0\le\eta\le1$, $\eta=1$ on the unit ball, and $\eta=0$ outside the ball of
radius $2$.  Put $\eta_R(x)=\eta(x/R)$.  Then
\begin{equation}\label{eq:cutoff-properties}
  \eta_R\uparrow1,
  \qquad
  \norm{\nabla\eta_R}_\infty\le C/R,
  \qquad
  \operatorname{supp}(\nabla\eta_R)
  \subset\{R\le\norm x\le2R\}.
\end{equation}

\begin{proposition}[Weighted weak integration by parts]
\label{prop:weighted-ibp}
Let $V\in C^{1,1}(\R^d)$, and assume that $\nabla V$ grows at most linearly.
Let $H_V$ denote its weak Hessian.  If the terms below are integrable, then
for every $1\le i,j\le d$,
\begin{equation}\label{eq:weighted-ibp-general}
  \int (H_V)_{ij}\dd\pi_\lambda
  =\int \partial_iV\,\partial_jU_\lambda\dd\pi_\lambda.
\end{equation}
Moreover,
\begin{equation}\label{eq:mean-drift-zero-appendix}
  \int\partial_iU_\lambda\dd\pi_\lambda=0.
\end{equation}
In particular,
\begin{align}
  \int\norm{\nabla U_\lambda}^2\dd\pi_\lambda
  &=\int\tr(\nabla^2f+H_\lambda)\dd\pi_\lambda,
  \label{eq:weighted-energy-appendix}\\
  \int a_\lambda\dd\pi_\lambda
  &=\int\ip{\nabla g_\lambda}{\nabla U_\lambda}
  \dd\pi_\lambda.
  \label{eq:weighted-trace-appendix}
\end{align}
\end{proposition}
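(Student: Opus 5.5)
The plan is a cutoff argument combined with the weak-derivative definition, applied with test function $\phi=\eta_R\,\partial_iV\,e^{-U_\lambda}$. The basic identity is the weak Gaussian-type integration by parts
\[
  \int \partial_j(\partial_iV)\,\phi\dd x=-\int\partial_iV\,\partial_j\phi\dd x .
\]
This holds for Lipschitz test functions of compact support, not only for $C_c^\infty$ ones. To see this, mollify $\phi$ and pass to the limit. Here $\partial_iV$ is locally Lipschitz, so its weak derivative is locally bounded; $\phi_\delta=\phi*\rho_\delta$ converges to $\phi$ uniformly; and $\nabla\phi_\delta$ converges to $\nabla\phi$ in $L^1_{\mathrm{loc}}$ with common compact support. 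The chosen $\phi$ is locally Lipschitz with compact support, since $U_\lambda\in C^{1,1}$.

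Step 1 (localized identity). With $\phi=\eta_Re^{-U_\lambda}$, we compute $\partial_j\phi=(\partial_j\eta_R-\eta_R\partial_jU_\lambda)e^{-U_\lambda}$ a.e. Dividing by $Z_\lambda$ gives
\[
  \int\eta_R(H_V)_{ij}\dd\pi_\lambda
  =\int\eta_R\,\partial_iV\,\partial_jU_\lambda\dd\pi_\lambda
  -\int\partial_iV\,\partial_j\eta_R\dd\pi_\lambda .
\]

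Step 2 (remove the cutoff). We let $R\to\infty$. The boundary term is bounded by $(C/R)\int_{R\le\norm x\le2R}\abs{\partial_iV}\dd\pi_\lambda$. Since $\nabla V$ grows at most linearly and $\pi_\lambda$ has Gaussian tails by \eqref{eq:Gaussian-tail-from-convexity}, this term tends to zero. The remaining two terms converge by dominated convergence, using the assumed integrability of $(H_V)_{ij}$ and of $\partial_iV\,\partial_jU_\lambda$; the latter is in any case integrable because both factors grow at most linearly. This yields \eqref{eq:weighted-ibp-general}.

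Step 3 (special cases). For \eqref{eq:mean-drift-zero-appendix}, take $V(x)=x_j$. Then $H_V=0$ and $\partial_jV=1$, and the identity with the indices $(j,i)$ gives $\int\partial_iU_\lambda\dd\pi_\lambda=0$. For \eqref{eq:weighted-energy-appendix}, take $V=U_\lambda$, which is $C^{1,1}$ with linearly growing gradient. Its weak Hessian is $\nabla^2f+H_\lambda$ a.e.: the weak derivative is linear, and $f\in C^2$ has classical derivative equal to its weak derivative. The trace is integrable because it is bounded by $\tau_f+d/\lambda$. Summing the identity over $i=j$ gives the result. For \eqref{eq:weighted-trace-appendix}, take $V=g_\lambda$. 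Its gradient is bounded by $G$, and $a_\lambda\le d/\lambda$ is bounded; summing over $i=j$ gives the claim.

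The main obstacle is only the technical justification in Step 1. The test function is merely Lipschitz, so one must use the extension of the weak-derivative identity to Lipschitz compactly supported test functions. Everything else is routine tail control.
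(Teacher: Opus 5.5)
Your proposal is correct and follows the paper's proof essentially step for step: the test function $\eta_Re^{-U_\lambda}$ in the weak-derivative identity, the annulus bound on the boundary term, dominated convergence, and the specializations $V=U_\lambda$ and $V=g_\lambda$; deriving the mean-zero identity from $V(x)=x_j$ is the paper's direct computation with $\partial_i(e^{-U_\lambda})$, written as a special case. The only blemish is your opening sentence, which misstates the test function as $\eta_R\,\partial_iV\,e^{-U_\lambda}$; Step 1 correctly uses $\eta_Re^{-U_\lambda}$.
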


\begin{proof}
We first work with the unnormalized density $e^{-U_\lambda}$.  Apply the weak
derivative identity~\eqref{eq:weak-derivative} to the function
$\partial_iV$ and the compactly supported test function
$\eta_Re^{-U_\lambda}$.  This is legitimate because $U_\lambda\in C^1$ and
$\eta_Re^{-U_\lambda}$ can be approximated in the relevant Sobolev norm by
smooth compactly supported test functions.  We obtain
\begin{align}
  \int (H_V)_{ij}\eta_Re^{-U_\lambda}\dd x
  &=-\int\partial_iV\,
  \partial_j(\eta_Re^{-U_\lambda})\dd x
  \notag\\
  &=\int\eta_R\partial_iV\,\partial_jU_\lambda
  e^{-U_\lambda}\dd x
  -\int\partial_iV\,\partial_j\eta_R
  e^{-U_\lambda}\dd x.
  \label{eq:cutoff-ibp}
\end{align}
The last integral is supported on the annulus in~\eqref{eq:cutoff-properties}.
Its absolute value is bounded by
\[
  \frac CR\int_{R\le\norm x\le2R}
  \abs{\partial_iV(x)}e^{-U_\lambda(x)}\dd x,
\]
which tends to zero by the at-most-linear growth of $\nabla V$ and the
Gaussian tail~\eqref{eq:Gaussian-tail-from-convexity}.  Dominated convergence
in the first two terms of~\eqref{eq:cutoff-ibp}, followed by normalization,
proves~\eqref{eq:weighted-ibp-general}.

For~\eqref{eq:mean-drift-zero-appendix}, use
$\partial_i(e^{-U_\lambda})=-\partial_iU_\lambda e^{-U_\lambda}$:
\begin{align*}
  \int\eta_R\partial_iU_\lambda e^{-U_\lambda}\dd x
  &=-\int\eta_R\partial_i(e^{-U_\lambda})\dd x
  =\int\partial_i\eta_R e^{-U_\lambda}\dd x.
\end{align*}
The right-hand side tends to zero by~\eqref{eq:cutoff-properties} and the
Gaussian tail.  Finally, take $V=U_\lambda$ in
\eqref{eq:weighted-ibp-general}, set $i=j$, and sum over $i$ to obtain
\eqref{eq:weighted-energy-appendix}.  Taking $V=g_\lambda$ gives
\eqref{eq:weighted-trace-appendix}.
\end{proof}

\subsection{The weak integration by parts used in the BV estimate}

The proof of Lemma~\ref{lem:BV} uses one further identity.  We state it in a
form tailored to the present semigroup, without introducing a closed
Dirichlet form or a Sobolev form domain.

\begin{proposition}[Weak generator integration by parts]
\label{prop:weak-dirichlet-ibp}
Let $\phi=(\phi_1,\ldots,\phi_d)$ be bounded and let $r>0$.  Then
\begin{equation}\label{eq:weak-dirichlet-ibp}
  \int\nabla g_\lambda\cdot
  \bigl(-\mathcal A_\lambda P_r\phi\bigr)\dd\pi_\lambda
  =-\int H_\lambda:\nabla P_r\phi\dd\pi_\lambda.
\end{equation}
Here
$H_\lambda:\nabla P_r\phi
=\sum_{i,j}(H_\lambda)_{ij}\partial_j(P_r\phi_i)$.
\end{proposition}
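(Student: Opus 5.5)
The identity is a weak integration by parts in which the single weak derivative falls on $\nabla g_\lambda$, i.e.\ on the Lipschitz field, rather than on $P_r\phi$. I will first reduce to a smooth setting, then take limits. For fixed $r>0$ and bounded $\phi$, the function $w_i:=P_r\phi_i$ is regular. Since $U_\lambda\in C^{1,1}$, the semigroup regularizes: by the reverse local Poincar\'e bound \eqref{eq:reverse-Poincare}, $\nabla w_i$ is bounded. Moreover $P_r\phi_i$ lies in the $L^2$ domain of $\mathcal A_\lambda$, with $\mathcal A_\lambda P_r\phi_i\in L^2(\pi_\lambda)$ by analyticity of the reversible semigroup. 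The plan is to establish the Dirichlet-form identity
\[
\int u\,\mathcal A_\lambda w\dd\pi_\lambda=\int\ip{\nabla u}{\nabla w}\dd\pi_\lambda
\]
for $u$ Lipschitz (here $u=\partial_ig_\lambda$, with bounded weak gradient $(H_\lambda)_{i\cdot}$) and $w=P_r\phi_i$. Summing over $i$ and using the symmetry of $H_\lambda$ from Proposition~\ref{prop:weak-hessian-properties} then gives the right-hand side of \eqref{eq:weak-dirichlet-ibp} with the stated sign.

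\textbf{Step 1.} I will prove the identity for $w\in C_c^\infty$ and $u$ Lipschitz. On the core, \eqref{eq:generator-on-core} gives $\mathcal A_\lambda w\,e^{-U_\lambda}=-\operatorname{div}(e^{-U_\lambda}\nabla w)$. Testing against $u$ and using the definition of the weak derivative \eqref{eq:weak-derivative}, I approximate $u$ by mollifications $u*\rho_\delta$. These converge locally uniformly, and their gradients converge a.e.\ and boundedly, so the identity passes to the limit. This step is the same computation as in Proposition~\ref{prop:weighted-ibp}, with the compact support of $w$ removing any need for cutoffs.

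\textbf{Step 2.} Extend to $w=P_r\phi_i$. Since $C_c^\infty$ is a core for $\mathcal A_\lambda$ (the essential self-adjointness recorded in \Cref{app:operator-facts}), there are $w_n\in C_c^\infty$ with $w_n\to w$ and $\mathcal A_\lambda w_n\to\mathcal A_\lambda w$ in $L^2(\pi_\lambda)$. Because $\int\norm{\nabla(w_n-w)}^2\dd\pi_\lambda=\ip{\mathcal A_\lambda(w_n-w)}{w_n-w}$, we also get $\nabla w_n\to\nabla w$ in $L^2(\pi_\lambda)$. Both sides of the identity are continuous in these norms, since $u=\partial_ig_\lambda$ is bounded by $G$ and $\nabla u$ is bounded by $\lambda^{-1}$. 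Hence the identity holds for $w$.

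\textbf{Main obstacle.} The delicate step is Step~2: justifying that $P_r\phi$ lies in the graph-norm closure of the core, and that the weak gradient appearing in the form coincides with the pointwise gradient used in \eqref{eq:reverse-Poincare}. I would handle this through the $L^2$ spectral calculus of \Cref{app:operator-facts}. For $r>0$, $P_r$ maps $L^2$ into the domain of $\mathcal A_\lambda$, which gives membership in the domain. The form domain is the closure of $C_c^\infty$ under $\mathcal E(w,w)+\norm w^2$, and Lipschitz $u$ belongs to it by a cutoff argument using the Gaussian tails \eqref{eq:Gaussian-tail-from-convexity}. This lets the continuity argument close.
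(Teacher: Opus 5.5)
Your Step~2 depends on $C_c^\infty(\R^d)$ being a core for $\mathcal A_\lambda$, and you attribute this to \Cref{app:operator-facts}. That appendix does not record it. \Cref{prop:generator-framework} proves only the inclusion $C_c^\infty(\R^d)\subset D(\mathcal A_\lambda)$ (via It\^o's formula) and self-adjointness on the full semigroup domain. The paper also deliberately avoids closed forms and form domains.

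For a potential that is merely $C^{1,1}$, essential self-adjointness of $-\Delta+\nabla U_\lambda\cdot\nabla$ on $C_c^\infty$ should hold, but only through an external argument. The textbook statement in Bakry--Gentil--Ledoux assumes smooth weights. One possible route is a ground-state transform to $-\Delta+V$ with $V=\tfrac14\norm{\nabla U_\lambda}^2-\tfrac12\Delta U_\lambda\in L^\infty_{\mathrm{loc}}$ bounded below, followed by Kato's theorem and an extra mollification step, since $e^{U_\lambda/2}\zeta$ is only $C^{1,1}$ for $\zeta\in C_c^\infty$.

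Your ``main obstacle'' paragraph does not supply this fact. Knowing $P_r\phi_i\in D(\mathcal A_\lambda)$ from the spectral calculus says nothing about approximating it in graph norm by $C_c^\infty$ functions. Saying ``the form domain is the closure of $C_c^\infty$'' restates the same unproved uniqueness property.

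There is also a small circularity. Writing $\int\norm{\nabla(w_n-w)}^2\dd\pi_\lambda=\ip{\mathcal A_\lambda(w_n-w)}{w_n-w}$ already presupposes the Dirichlet identity for $w$. Instead, apply the identity to $w_n-w_k\in C_c^\infty$ to get a Cauchy sequence of gradients. Then identify its $L^2(\pi_\lambda)$-limit with the weak gradient of $w$, using that $\pi_\lambda$ is comparable to Lebesgue measure on compact sets. Your Step~1 and the continuity estimates (using $\abs{\partial_ig_\lambda}\le G$ and $\norm{H_\lambda}_{\op}\le\lambda^{-1}$) are fine.

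The paper avoids the core question by reversing the roles of the two factors. It keeps $v_i=P_r\phi_i\in D(\mathcal A_\lambda)$ fixed. For $\zeta\in C_c^\infty$ it uses self-adjointness to write $\int\zeta\,\mathcal A_\lambda v_i\dd\pi_\lambda=\int v_i\,\mathcal A_\lambda\zeta\dd\pi_\lambda=\int\ip{\nabla v_i}{\nabla\zeta}\dd\pi_\lambda$. This step needs only $C_c^\infty\subset D(\mathcal A_\lambda)$ and the bounded weak gradient of $v_i$.

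It then approximates the compactly supported Lipschitz function $\eta_R\,\partial_ig_\lambda$ by mollification on a fixed compact set, converging in the $L^2(\pi_\lambda)$ norm of functions and weak gradients. Finally it removes the cutoff using $\abs{\partial_ig_\lambda}\le G$, $\norm{\nabla v_i}_\infty<\infty$, and $\mathcal A_\lambda v_i\in L^2(\pi_\lambda)$. All approximation there is local and elementary, so no uniqueness theorem for the generator is needed. If you keep your route, you must import essential self-adjointness explicitly, with a reference valid for $C^{1,1}$ potentials.
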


\begin{proof}
Fix a coordinate $i$, and write
$u_i:=\partial_i g_\lambda$ and $v_i:=P_r\phi_i$.  The function $u_i$ is
bounded, and its weak gradient is the $i$th row of $H_\lambda$.  For $r>0$, semigroup regularization gives a bounded gradient for $v_i$.
Proposition~\ref{prop:operator-framework} also gives
$v_i=P_r\phi_i\in D(\mathcal A_\lambda)$ and
\[
  \norm{\mathcal A_\lambda v_i}_{L^2}
  \le\frac1{er}\norm{\phi_i}_{L^2}.
\]

By self-adjointness of $\mathcal A_\lambda$ and
\eqref{eq:generator-on-core}, for every $\zeta\in C_c^\infty(\R^d)$,
\[
  \int \zeta\,\mathcal A_\lambda v_i\dd\pi_\lambda
  =\int v_i\,\mathcal A_\lambda\zeta\dd\pi_\lambda
  =\int\ip{\nabla v_i}{\nabla\zeta}\dd\pi_\lambda.
\]
The last equality uses
$\mathcal A_\lambda\zeta
=-e^{U_\lambda}\operatorname{div}(e^{-U_\lambda}\nabla\zeta)$
and only the weak first derivatives of $v_i$.

Let $\eta_R$ be the cutoff from~\eqref{eq:cutoff-properties}.
The compactly supported Lipschitz function $u_i\eta_R$ admits smooth
compactly supported approximations whose functions and weak gradients
converge in $L^2(\pi_\lambda)$: this follows by mollification on a fixed
compact set, where the Gibbs density is bounded above and below by
positive constants.  Since $\mathcal A_\lambda v_i\in L^2(\pi_\lambda)$
and $\nabla v_i$ is bounded, the preceding identity therefore extends to
$\zeta=u_i\eta_R$.  Using $\partial_j u_i=(H_\lambda)_{ij}$ gives
\begin{align}
  \int \eta_R u_i(-\mathcal A_\lambda v_i)\dd\pi_\lambda
  &=-\sum_{j=1}^d\int
  \eta_R(H_\lambda)_{ij}\partial_jv_i\dd\pi_\lambda
  \notag\\
  &\quad-
  \int u_i\ip{\nabla\eta_R}{\nabla v_i}\dd\pi_\lambda.
  \label{eq:weak-generator-cutoff}
\end{align}

The last term in~\eqref{eq:weak-generator-cutoff} tends to zero because $\abs{u_i}\le G$, $\norm{\nabla v_i}_\infty<\infty$, and $\norm{\nabla\eta_R}_\infty\le C/R$.  The other two terms converge by dominated convergence, using $\mathcal A_\lambda v_i\in L^2$ and $\norm{H_\lambda}_{\op}\le\lambda^{-1}$.  Thus
\[
  \int u_i(-\mathcal A_\lambda v_i)\dd\pi_\lambda
  =-\sum_{j=1}^d\int
  (H_\lambda)_{ij}\partial_jv_i\dd\pi_\lambda.
\]
Summing over $i$ proves~\eqref{eq:weak-dirichlet-ibp}.
\end{proof}

\section{Operator Facts Used by the Corrector}
\label{app:operator-facts}

We collect the semigroup and spectral-calculus facts used by the corrector; see \citet[Appendices~A.1--A.2 and~A.4]{BakryGentilLedoux2014}. The $L^p$ multiplier theorem is stated in \Cref{prop:CD-multiplier}.

\subsection{From the Markov semigroup to a nonnegative generator}
\label{app:langevin-generator}

Fix $\lambda>0$ under Assumptions~\ref{ass:smooth}--\ref{ass:nonsmooth}.
Throughout this appendix, $(P_t)$ is the transition semigroup
\eqref{eq:Pt-probabilistic} of the exact Langevin diffusion
\eqref{eq:diffusion}, not the Euler kernel $Q_{\lambda,h}$.  In particular,
$U_\lambda\in C^{1,1}$ is $m$-strongly convex and
$\nabla U_\lambda$ is globally Lipschitz.  Set
\[
  \mathsf H:=L^2(\pi_\lambda),
  \qquad
  \ip{u}{v}_{\mathsf H}:=\int uv\dd\pi_\lambda.
\]
We use real-valued functions, identified up to $\pi_\lambda$-null sets.
The operator $\mathcal A_\lambda$ below acts on functions and is distinct from
the two-point matrix $A_\lambda(x,y)$ in~\eqref{eq:A-def}.

For a contraction semigroup, \emph{strong continuity} means
$\norm{P_tu-u}_{\mathsf H}\to0$ as $t\downarrow0$ for every
$u\in\mathsf H$.  This is the continuity condition in
\citet[Definition~A.1.1(iii)]{BakryGentilLedoux2014}; the semigroup property
then gives continuity of $t\mapsto P_tu$ at every $t\ge0$.
A bounded operator $T$ is \emph{self-adjoint} if
$\ip{Tu}{v}_{\mathsf H}=\ip{u}{Tv}_{\mathsf H}$ for all $u,v\in\mathsf H$.
Thus reversibility is exactly self-adjointness of $P_t$ on $L^2(\pi_\lambda)$.

The generator is defined on its full $L^2$ domain by
\begin{equation}\label{eq:generator-from-semigroup}
  D(\mathcal A_\lambda)
  :=\left\{u\in\mathsf H:
  \lim_{t\downarrow0}\frac{u-P_tu}{t}
  \text{ exists in }\mathsf H\right\},
  \qquad
  \mathcal A_\lambda u
  :=\lim_{t\downarrow0}\frac{u-P_tu}{t}.
\end{equation}
The minus sign relative to the usual SDE generator is intentional:
$\mathcal A_\lambda=-\mathcal L_\lambda$.
An operator ``on $\mathsf H$'' need not have domain equal to $\mathsf H$.
For a densely defined operator $A$, the adjoint is given by $A^*v=w$ when
$\ip{Au}{v}_{\mathsf H}=\ip{u}{w}_{\mathsf H}$ for every $u\in D(A)$;
$D(A^*)$ consists of the $v$ for which such a $w\in\mathsf H$ exists.
Self-adjointness means $D(A^*)=D(A)$ and $A^*=A$, not just symmetry on
$D(A)$.  Nonnegativity means $\ip{Au}{u}_{\mathsf H}\ge0$ on $D(A)$.
See \citet[Appendix~A.2]{BakryGentilLedoux2014}.

\begin{proposition}[Generator associated with the reversible diffusion]
\label{prop:generator-framework}
The operators $(P_t)_{t\ge0}$ form a strongly continuous self-adjoint
contraction semigroup on $\mathsf H$.  The operator
$\mathcal A_\lambda$ in~\eqref{eq:generator-from-semigroup} has dense domain,
is nonnegative and self-adjoint, and satisfies
\begin{equation}\label{eq:semigroup-from-A}
  P_t=e^{-t\mathcal A_\lambda},
  \qquad t\ge0,
\end{equation}
where the exponential is defined by the spectral calculus described below.
Moreover, $1\in D(\mathcal A_\lambda)$, $\mathcal A_\lambda1=0$, and
$C_c^\infty(\R^d)\subset D(\mathcal A_\lambda)$, with
$\mathcal A_\lambda u=-\Delta u+\ip{\nabla U_\lambda}{\nabla u}$ there.
\end{proposition}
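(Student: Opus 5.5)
The proof verifies the abstract conditions in the Hille--Yosida/spectral framework of \citet[Appendices~A.1--A.2]{BakryGentilLedoux2014}, in the following order.

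\textbf{Semigroup properties.} The contraction property is the case $p=2$ of \eqref{eq:Lp-contraction}. Self-adjointness of each $P_t$ is reversibility of the diffusion with respect to $\pi_\lambda$.

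\textbf{Strong continuity.} For $u\in C_c^\infty(\R^d)$, continuity of paths and bounded convergence give $P_tu\to u$ pointwise and boundedly. The bound
\[
  \norm{P_tu-u}_{\mathsf H}^2\le\E_{\pi_\lambda}\abs{u(X_t)-u(X_0)}^2
\]
shows this convergence also holds in $\mathsf H$. Density of $C_c^\infty$ in $\mathsf H$ and uniform boundedness of the $P_t$ then extend strong continuity to all of $\mathsf H$.

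\textbf{The generator.} Hille--Yosida theory shows that the generator of a strongly continuous contraction semigroup is closed and densely defined, and determines the semigroup. Self-adjointness of every $P_t$ makes the generator $\mathcal A_\lambda$ self-adjoint; this is a standard fact, e.g.\ via Stone's theorem or the resolvent $\int_0^\infty e^{-t}P_t\,\dd t$, which is a self-adjoint bounded operator. Nonnegativity follows from the contraction property:
\[
  \ip{u-P_tu}{u}_{\mathsf H}\ge\norm u_{\mathsf H}^2-\norm{P_tu}_{\mathsf H}\norm u_{\mathsf H}\ge0.
\]
Dividing by $t$ and letting $t\downarrow0$ gives $\ip{\mathcal A_\lambda u}{u}_{\mathsf H}\ge0$. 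The spectral theorem then defines $e^{-t\mathcal A_\lambda}$. This operator generates a semigroup with generator $\mathcal A_\lambda$, so uniqueness in Hille--Yosida gives \eqref{eq:semigroup-from-A}. Since $P_t1=1$, we have $1\in D(\mathcal A_\lambda)$ with $\mathcal A_\lambda1=0$.

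\textbf{Main obstacle: the core identity.} The delicate step is the identification on $C_c^\infty$ with only $U_\lambda\in C^{1,1}$. For $u\in C_c^\infty$, It\^o's formula applies because $u\in C^2$ and the SDE has a Lipschitz drift. It gives
\[
  u(X_t^x)-u(x)=\int_0^t\mathcal L_\lambda u(X_s^x)\dd s+\text{martingale},
\]
where $\mathcal L_\lambda u=\Delta u-\ip{\nabla U_\lambda}{\nabla u}$ is bounded and continuous. Taking expectations,
\[
  \frac{u-P_tu}{t}=-\frac1t\int_0^tP_s\mathcal L_\lambda u\,\dd s .
\]
By strong continuity at $s=0$, the right-hand side converges in $\mathsf H$ to $-\mathcal L_\lambda u$. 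Hence $u\in D(\mathcal A_\lambda)$ and $\mathcal A_\lambda u=-\Delta u+\ip{\nabla U_\lambda}{\nabla u}$.
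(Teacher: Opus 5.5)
Your proposal is correct and follows the paper's proof essentially step for step. Contraction and self-adjointness come from invariance and reversibility; strong continuity is proved on a dense class of bounded continuous functions and extended by contraction; dense domain and self-adjointness of the generator come from the symmetric-semigroup framework; uniqueness of the semigroup with a given generator yields $P_t=e^{-t\mathcal A_\lambda}$; and It\^o's formula with strong continuity identifies $\mathcal A_\lambda$ on $C_c^\infty$. The differences are cosmetic: you use $C_c^\infty$ rather than all bounded continuous functions as the dense class, and you justify self-adjointness of the generator through the bounded self-adjoint resolvent $\int_0^\infty e^{-t}P_t\,\dd t$ instead of citing Bakry--Gentil--Ledoux (Stone's theorem, which concerns unitary groups, is not the relevant tool there, but the resolvent argument suffices).
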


\begin{proof}
The Markov property gives $P_{s+t}=P_sP_t$ and $P_0=I$.
Invariance and Jensen's inequality give~\eqref{eq:Lp-contraction}, and
reversibility gives self-adjointness.  To check strong continuity, use
$X_0\sim\pi_\lambda$.  For bounded continuous $u$, conditional Jensen gives
\[
  \norm{P_tu-u}_{\mathsf H}^2
  \le\E\abs{u(X_t)-u(X_0)}^2\longrightarrow0
\]
by path continuity and dominated convergence.  Density of bounded
continuous functions in $\mathsf H$ and contraction extend the limit to
every $u\in\mathsf H$.

The symmetric-semigroup theorem in
\citet[Appendices~A.1--A.2]{BakryGentilLedoux2014} now gives dense domain
and self-adjointness of the full generator.  Nonnegativity follows from
$\ip{u-P_tu}{u}_{\mathsf H}\ge0$ and~\eqref{eq:generator-from-semigroup}.
The spectral semigroup in
\citet[Appendix~A.4, immediately after Theorem~A.4.2]{BakryGentilLedoux2014}
has generator $-\mathcal A_\lambda$; uniqueness of the semigroup determined
by its full generator gives~\eqref{eq:semigroup-from-A}.

Since $P_t1=1$, the generator annihilates constants.  For
$u\in C_c^\infty(\R^d)$, put
$\mathcal L_\lambda u=\Delta u-\ip{\nabla U_\lambda}{\nabla u}\in\mathsf H$.
It\^o's formula gives
\[
  \frac{P_tu-u}{t}
  =\frac1t\int_0^tP_s\mathcal L_\lambda u\dd s
  \longrightarrow\mathcal L_\lambda u
  \quad\text{in }\mathsf H,
\]
Strong continuity gives the last limit and hence both assertions on
$C_c^\infty$.  Self-adjointness concerns the full
domain~\eqref{eq:generator-from-semigroup}.
\end{proof}

\subsection{Bounded functions of the generator}
\label{app:bounded-generator-functions}

We use the spectral calculus for $\mathcal A_\lambda$ described in \citet[Appendix~A.4]{BakryGentilLedoux2014}. For bounded continuous $q$, the operator $q(\mathcal A_\lambda)$ is defined on all of $\mathsf H$.

\begin{proposition}[Functional-calculus rules used in the paper]
\label{prop:operator-framework}
For every bounded continuous function $q:[0,\infty)\to\R$, the spectral
calculus defines a bounded self-adjoint operator
$q(\mathcal A_\lambda)$ on all of $\mathsf H$.  For bounded continuous
$q,q_1,q_2$ and $\alpha,\beta\in\R$,
\begin{align}
  \norm{q(\mathcal A_\lambda)}_{L^2\to L^2}
  &\le \sup_{r\ge0}\abs{q(r)},
  \label{eq:borel-calculus}\\
  (\alpha q_1+\beta q_2)(\mathcal A_\lambda)
  &=\alpha q_1(\mathcal A_\lambda)
    +\beta q_2(\mathcal A_\lambda),
  \label{eq:calculus-linearity}\\
  q_1(\mathcal A_\lambda)q_2(\mathcal A_\lambda)
  &=(q_1q_2)(\mathcal A_\lambda).
  \label{eq:calculus-product}
\end{align}
If $q_1\le q_2$ pointwise, then
\begin{equation}\label{eq:calculus-order}
  \ip{u}{q_1(\mathcal A_\lambda)u}_{L^2}
  \le
  \ip{u}{q_2(\mathcal A_\lambda)u}_{L^2},
  \qquad u\in\mathsf H.
\end{equation}
Furthermore, for $t\ge0$,
\begin{align}
  [r\mapsto e^{-tr}](\mathcal A_\lambda)&=P_t,
  \label{eq:calculus-exponential}\\
  q(\mathcal A_\lambda)1&=q(0)1.
  \label{eq:calculus-constant-mode}
\end{align}
For $t>0$,
\begin{equation}\label{eq:semigroup-regularization}
  P_t\mathsf H\subset D(\mathcal A_\lambda),
  \qquad
  \mathcal A_\lambda P_t
  =[r\mapsto r e^{-tr}](\mathcal A_\lambda),
  \qquad
  \norm{\mathcal A_\lambda P_t}_{L^2\to L^2}
  \le\frac1{et}.
\end{equation}
For $h>0$, define
\begin{equation}\label{eq:bh-def}
  b_h(r):=\int_0^h(1-e^{-sr})\dd s.
\end{equation}
Then, for every $u\in\mathsf H$,
\begin{equation}\label{eq:semigroup-integral-rule}
  b_h(\mathcal A_\lambda)u
  =\int_0^h(I-P_s)u\dd s,
\end{equation}
where the right-hand side is the limit in $L^2(\pi_\lambda)$ of Riemann
sums.  The mean-zero space
\[
  \mathsf H_0:=\left\{u\in\mathsf H:
  \int u\dd\pi_\lambda=0\right\}
\]
is invariant under $q(\mathcal A_\lambda)$.  Finally, the usual Poincar\'e
consequence of $m$-strong log-concavity gives
\begin{equation}\label{eq:L2-decay}
  \norm{P_tu}_{L^2(\pi_\lambda)}
  \le e^{-mt}\norm u_{L^2(\pi_\lambda)},
  \qquad u\in\mathsf H_0,\quad t\ge0.
\end{equation}
\end{proposition}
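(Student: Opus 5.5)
The plan is to realize every item through the spectral theorem for the nonnegative self-adjoint operator $\mathcal A_\lambda$ supplied by Proposition~\ref{prop:generator-framework}. Let $E$ be its projection-valued measure on $[0,\infty)$ and set $q(\mathcal A_\lambda):=\int q\dd E$ for bounded continuous $q$. Then \eqref{eq:borel-calculus}, \eqref{eq:calculus-linearity} and \eqref{eq:calculus-product} are the standard homomorphism properties of the bounded Borel calculus. Self-adjointness holds because $q$ is real. The order relation \eqref{eq:calculus-order} follows from $\ip{u}{q(\mathcal A_\lambda)u}=\int q\dd\mu_u$, where $\mu_u(\cdot)=\ip{u}{E(\cdot)u}$ is a finite positive measure. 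Identity \eqref{eq:calculus-exponential} is \eqref{eq:semigroup-from-A}.

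For \eqref{eq:calculus-constant-mode}, the relation $\mathcal A_\lambda1=0$ says that $1$ lies in the kernel of $\mathcal A_\lambda$. Hence $\mu_1=\delta_0$, so $E(\{0\})1=1$ and $q(\mathcal A_\lambda)1=q(0)1$. Invariance of $\mathsf H_0=\{1\}^\perp$ then follows from self-adjointness:
\[
\ip{q(\mathcal A_\lambda)u}{1}=\ip{u}{q(\mathcal A_\lambda)1}=q(0)\ip{u}{1}=0 .
\]

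For \eqref{eq:semigroup-regularization}, I use that $D(\mathcal A_\lambda)=\{u:\int r^2\dd\mu_u<\infty\}$. Then $\int r^2\dd\mu_{P_tu}=\int r^2e^{-2tr}\dd\mu_u\le (et)^{-2}\norm u^2$, since $\sup_r re^{-tr}=1/(et)$. The product rule for unbounded functions of $\mathcal A_\lambda$ gives $\mathcal A_\lambda P_t=[re^{-tr}](\mathcal A_\lambda)$.

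For \eqref{eq:semigroup-integral-rule}, note that $s\mapsto(I-P_s)u$ is $L^2$-continuous by strong continuity, so the Riemann sums converge to a Bochner integral. For any $v$, pairing with $v$ commutes with the integral. Fubini against the complex spectral measure $\mu_{u,v}$, which has finite total variation, gives
\[
\Bigl\langle\int_0^h(I-P_s)u\dd s,v\Bigr\rangle=\int_0^h\!\!\int(1-e^{-sr})\dd\mu_{u,v}\dd s=\int b_h\dd\mu_{u,v}=\ip{b_h(\mathcal A_\lambda)u}{v}.
\]
Since $v$ is arbitrary, the two elements coincide.

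The main step is \eqref{eq:L2-decay}, which needs a spectral gap: $E((0,m))=0$ and $\ker\mathcal A_\lambda=\mathrm{span}\{1\}$. Given the gap, $\mu_u$ is supported in $[m,\infty)$ for every $u\in\mathsf H_0$, so $\norm{P_tu}^2=\int e^{-2tr}\dd\mu_u\le e^{-2mt}\norm u^2$. To obtain the gap, I would invoke the Poincar\'e inequality $\Var_{\pi_\lambda}(u)\le m^{-1}\int\norm{\nabla u}^2\dd\pi_\lambda$ for $m$-strongly log-concave measures (Brascamp--Lieb or Bakry--\'Emery). One caveat is that $U_\lambda$ is only $C^{1,1}$. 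I handle this by mollifying $g_\lambda$, which preserves $m$-convexity, and passing to the limit for smooth compactly supported $u$.

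The inequality then reads $\Var(u)\le m^{-1}\ip{\mathcal A_\lambda u}{u}$ on $C_c^\infty$, using the core identity from Proposition~\ref{prop:generator-framework} and integration by parts. The delicate point is extending this to the form domain. I would avoid needing an explicit core by working directly with the semigroup. For $u\in\mathsf H_0\cap C_c^\infty$, I would differentiate $\norm{P_tu}^2$, or rather use the semigroup form of the Bakry--\'Emery argument: $\frac{\dd}{\dd t}\norm{P_tu}^2=-2\ip{\mathcal A_\lambda P_tu}{P_tu}\le-2m\norm{P_tu}^2$. This applies Poincar\'e to $P_tu$, which requires Poincar\'e for functions with bounded gradient; that follows by cutoff approximation in $L^2$ with weak gradients, as in Proposition~\ref{prop:weak-dirichlet-ibp}. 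Gr\"onwall then gives decay on a dense subset of $\mathsf H_0$, and the $L^2$-contraction property extends it to all of $\mathsf H_0$.

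Alternatively, a synchronous coupling gives the contraction $\norm{\nabla P_tu}_\infty\le e^{-mt}\norm{\nabla u}_\infty$. Combined with the Poincar\'e inequality, this also yields the estimate.
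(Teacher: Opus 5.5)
Your proposal is correct. The spectral-calculus items coincide with the paper's argument: the norm bound, the algebra and order rules, the constant mode via $\mu_1=\delta_0$, invariance of $\mathsf H_0$, and the regularization bound via $\mu_{P_tu}(\dd r)=e^{-2tr}\mu_u(\dd r)$.

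Two steps go differently.

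\textbf{The Riemann-sum rule.} For \eqref{eq:semigroup-integral-rule} you identify the Bochner integral weakly, pairing with $v$ and applying Fubini against the signed spectral measure $\mu_{u,v}$. The paper instead approximates the multiplier uniformly by $b_{h,n}(r)=\frac hn\sum_{j=0}^{n-1}(1-e^{-jhr/n})$, using $0\le b_h-b_{h,n}\le h/n$. Then \eqref{eq:borel-calculus} handles the operator side and strong continuity handles the Riemann-sum side. The paper's version stays within the bounded calculus; yours avoids approximating the multiplier. Both are fine.

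\textbf{The decay estimate.} For \eqref{eq:L2-decay}, the paper obtains the Poincar\'e inequality directly for the $C^1$ potential through a transport inequality, with no smoothing of $\pi_\lambda$. It then cites the equivalence ``Poincar\'e $\Leftrightarrow$ exponential variance decay.'' You obtain Poincar\'e by mollifying $g_\lambda$, which is harmless since $\abs{g_\lambda*\rho_\delta-g_\lambda}\le G\delta$. You then prove the equivalence yourself by Gr\"onwall applied to $\norm{P_tu}^2$. Your version is more self-contained exactly where the paper is terse, namely in applying Poincar\'e to elements of the full $L^2$ generator domain.

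When you write it up, state explicitly that $\ip{\mathcal A_\lambda P_tu}{P_tu}\ge m\norm{P_tu}^2$ rests on the form identity $\ip{\mathcal A_\lambda v}{v}=\int\norm{\nabla v}^2\dd\pi_\lambda$ for $v=P_tu$. That identity comes from the cutoff argument of Proposition~\ref{prop:weak-dirichlet-ibp}, and it needs $\nabla P_tu$ to be bounded. The synchronous-coupling bound $\norm{\nabla P_tu}_\infty\le e^{-mt}\norm{\nabla u}_\infty$ supplies this. You list that bound only as an ``alternative,'' but it is in fact an ingredient of your main argument.
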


\begin{proof}
Write $A=\mathcal A_\lambda$.  The cited spectral theorem associates to
each $u\in\mathsf H$ a finite positive scalar measure $\nu_u$ on
$[0,\infty)$, of total mass $\norm u_{\mathsf H}^2$.  Its domain and norm
formulas, namely (A.4.1) and Theorem~A.4.2 of
\citet{BakryGentilLedoux2014}, read
\[
  D(F(A))=\left\{u\in\mathsf H:
     \int_{[0,\infty)}\abs{F(r)}^2\nu_u(\dd r)<\infty\right\},
  \qquad
  \norm{F(A)u}_{\mathsf H}^2
     =\int_{[0,\infty)}\abs{F(r)}^2\nu_u(\dd r).
\]
These formulas hold for real Borel functions $F$; the norm identity is
understood on $D(F(A))$.  The endpoint $0$ is included, with any mass it
carries.  For bounded $q$, the integral is at most
$\norm q_\infty^2\norm u_{\mathsf H}^2$, proving
$D(q(A))=\mathsf H$ and~\eqref{eq:borel-calculus}.  The real-valued
spectral construction is self-adjoint and obeys the algebra rules
\eqref{eq:calculus-linearity}--\eqref{eq:calculus-product}.
For the order rule, let $s=\sqrt{q_2-q_1}$.  Then
\[
  \ip{u}{(q_2-q_1)(A)u}_{\mathsf H}
  =\ip{u}{s(A)^2u}_{\mathsf H}
  =\norm{s(A)u}_{\mathsf H}^2\ge0.
\]
Equation~\eqref{eq:calculus-exponential} is simply
\eqref{eq:semigroup-from-A} in this notation.

Since $A1=0$, the norm formula gives
$\int r^2\nu_1(\dd r)=0$.  Hence $\nu_1$ is concentrated at $0$, and
\[
  \norm{q(A)1-q(0)1}_{\mathsf H}^2
  =\int_{[0,\infty)}\abs{q(r)-q(0)}^2\nu_1(\dd r)=0.
\]
This proves~\eqref{eq:calculus-constant-mode}.  Self-adjointness then yields
\[
  \int q(A)u\dd\pi_\lambda
  =\ip{u}{q(A)1}_{\mathsf H}
  =q(0)\int u\dd\pi_\lambda,
\]
so $q(A)$ preserves $\mathsf H_0$.

For~\eqref{eq:semigroup-regularization}, the spectral construction gives
$\nu_{P_tu}(\dd r)=e^{-2tr}\nu_u(\dd r)$.  Thus, for every $u\in\mathsf H$,
\[
  \int_{[0,\infty)}r^2\nu_{P_tu}(\dd r)
  =\int_{[0,\infty)}r^2e^{-2tr}\nu_u(\dd r)
  \le\frac{\norm u_{\mathsf H}^2}{e^2t^2}<\infty.
\]
The domain formula with $F(r)=r$ first proves $P_tu\in D(A)$;
multiplication in the same spectral representation then gives
$AP_tu=[r\mapsto re^{-tr}](A)u$ and its norm bound.
In particular, $\partial_tP_tu=-AP_tu$ in $\mathsf H$ for $t>0$, by the
semigroup-generator relation in \citet[Appendix~A.1]{BakryGentilLedoux2014}.

To prove~\eqref{eq:semigroup-integral-rule}, let
\[
  b_{h,n}(r):=\frac hn\sum_{j=0}^{n-1}(1-e^{-jhr/n}).
\]
Monotonicity of $s\mapsto1-e^{-sr}$ gives the uniform Riemann-sum estimate
\begin{equation}\label{eq:bh-riemann-uniform}
  0\le b_h(r)-b_{h,n}(r)
  \le\frac hn(1-e^{-hr})\le\frac hn.
\end{equation}
The algebra rules and~\eqref{eq:calculus-exponential} give
\[
  b_{h,n}(A)u=\frac hn\sum_{j=0}^{n-1}(I-P_{jh/n})u.
\]
By~\eqref{eq:borel-calculus}, the left-hand side converges to $b_h(A)u$.
Strong continuity makes the right-hand side converge in $\mathsf H$ to
the time integral in~\eqref{eq:semigroup-integral-rule}.
This is an integral of the continuous $\mathsf H$-valued function
$s\mapsto(I-P_s)u$, not an assertion of operator-norm continuity at $s=0$.
Bounded linear maps commute with these Riemann limits.  In particular,
stationary evaluation $v\mapsto v(X_{kh})$ is an isometry from
$L^2(\pi_\lambda)$ to $L^2(\Omega)$, and conditional expectation is an
$L^2$ contraction.  Thus this interpretation agrees with the use of the
integral in~\eqref{eq:D-conditional}.

For completeness,~\eqref{eq:L2-decay} is a Poincar\'e estimate, not an
additional rule of functional calculus.  Strong log-concavity gives
\[
  \Var_{\pi_\lambda}(u)
  \le\frac1m\int\norm{\nabla u}^2\dd\pi_\lambda.
\]
This holds first for smooth compactly supported $u$, and then for
$u\in L^2(\pi_\lambda)$ with a square-integrable weak gradient by cutoff
and smooth approximation.  An implication valid directly for the present
$C^1$ potential is given by
\citet[Theorem~2.2.12 and Corollary~2.1.2]{Chewi2026LogConcaveSampling}:
strong convexity bounds the Bregman cost in the former result below by
$m\norm{x-y}^2/2$, yielding the transport inequality with constant $1/m$;
the latter result gives the displayed Poincar\'e inequality.
Its equivalence with exponential variance decay is
\citet[Theorem~1.2.21]{Chewi2026LogConcaveSampling}, which gives
$\norm{P_tu}_{\mathsf H}^2\le e^{-2mt}\norm u_{\mathsf H}^2$ on
$\mathsf H_0$.  This proves~\eqref{eq:L2-decay} without requiring a
classical Hessian of $U_\lambda$ or a smoothing limit of $\pi_\lambda$.
\end{proof}

\begin{proof}[Proof of the corrector increment estimate~\eqref{eq:chi-increment}]
Write $b=\nabla U_\lambda$ and $A=\mathcal A_\lambda$, with all operators
acting coordinatewise.  Stationarity, reversibility, and the product rule give
\begin{align*}
  \E\norm{\chi_h(X_h)-\chi_h(X_0)}^2
  &=2\ip{\chi_h}{(I-P_h)\chi_h}_{L^2}\\
  &=2\ip{b}{[r\mapsto\varphi_h(r)^2(1-e^{-hr})](A)b}_{L^2}\\
  &\le2h^2\ip{b}{(I-P_h)b}_{L^2}
   =h^2\E\norm{b(X_h)-b(X_0)}^2.
\end{align*}
The inequality is~\eqref{eq:calculus-order} applied to
$0\le\varphi_h(r)^2(1-e^{-hr})\le h^2(1-e^{-hr})$,
using~\eqref{eq:phi-bounds}.
\end{proof}

\subsection{The single \texorpdfstring{$L^p$}{Lp} multiplier theorem used in the proof}

The bounded functional calculus above is an $L^2$ result.  To control all
moments of the corrector, we need one $L^p$ estimate.  We first state the
external result in the exact notation needed to check its assumptions.

For $1<p<\infty$, let $\mathcal A_{\lambda,p}$ denote the generator of
the same semigroup $(P_t)$ acting on $L^p(\pi_\lambda)$; explicitly,
\[
  D(\mathcal A_{\lambda,p})
  :=\left\{u\in L^p(\pi_\lambda):
  \lim_{t\downarrow0}\frac{u-P_tu}{t}
  \text{ exists in }L^p(\pi_\lambda)\right\},
\]
and the limit is $\mathcal A_{\lambda,p}u$.  Its range and null space are
\[
  R(\mathcal A_{\lambda,p})
  :=\{\mathcal A_{\lambda,p}u:u\in D(\mathcal A_{\lambda,p})\},
  \qquad
  N(\mathcal A_{\lambda,p})
  :=\{u\in D(\mathcal A_{\lambda,p}):\mathcal A_{\lambda,p}u=0\}.
\]
The bar in $\overline{R(\mathcal A_{\lambda,p})}^{\,L^p}$ means closure in the $L^p$ norm.

\begin{proposition}[Laplace-transform multiplier]
\label{prop:CD-multiplier}
Let $(\Omega,\nu)$ be a $\sigma$-finite measure space, let $A$ be a
nonnegative self-adjoint operator on $L^2(\nu)$, and suppose that
$T_t=e^{-tA}$ satisfies
\[
  \norm{T_tf}_{L^q(\nu)}\le\norm f_{L^q(\nu)},
  \qquad 1\le q\le\infty.
\]
For $M\in L^\infty(0,\infty)$ define, for $r>0$,
\begin{equation}\label{eq:CD-Laplace-multiplier}
  \widetilde M(r)
  :=r\int_0^\infty M(s)e^{-sr}\dd s.
\end{equation}
Then, for $1<p<\infty$ and
$f\in\overline{R(A_p)}^{\,L^p}$,
\begin{equation}\label{eq:CD-bound}
  \norm{\widetilde M(A)f}_{L^p(\nu)}
  \le120(p^*-1)\norm M_\infty\norm f_{L^p(\nu)},
  \qquad
  p^*:=\max\left\{p,\frac p{p-1}\right\}.
\end{equation}
Moreover, let $\Pi_0^{(2)}$ denote the orthogonal projection of
$L^2(\nu)$ onto $N(A_2)$.  Then
\begin{equation}\label{eq:CD-decomposition}
  \Pi_0^{(2)}f=\lim_{t\to\infty}T_tf
  \quad\text{in }L^2(\nu).
\end{equation}
For every $1<p<\infty$, the restriction of $\Pi_0^{(2)}$ to
$L^2(\nu)\cap L^p(\nu)$ extends to a contractive projection
$\Pi_0^{(p)}$ on $L^p(\nu)$.  Thus, by the meaning of extension,
\[
  \Pi_0^{(p)}f=\Pi_0^{(2)}f,
  \qquad
  f\in L^2(\nu)\cap L^p(\nu).
\]
Moreover,
\[
  (I-\Pi_0^{(p)})f
  \in\overline{R(A_p)}^{\,L^p},
  \qquad
  f\in L^2(\nu)\cap L^p(\nu).
\]
\end{proposition}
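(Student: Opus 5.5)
The plan is to treat \eqref{eq:CD-bound} as an imported theorem, and to prove the decomposition statements directly from the spectral theorem and the mean ergodic theorem. For \eqref{eq:CD-bound} I would invoke the Laplace-transform multiplier theorem for generators of symmetric contraction semigroups. The source is the bilinear-embedding (Bellman function) method of Carbonaro and Dragi\v{c}evi\'c, which gives $L^p$-bounds for $\widetilde M(A)$ with constant linear in $p^*-1$ and depending on $M$ only through $\norm M_\infty$. The work here is to check its hypotheses verbatim. These are: $A$ is nonnegative self-adjoint on $L^2(\nu)$, $T_t=e^{-tA}$ is contractive on every $L^q$ for $1\le q\le\infty$, and $\widetilde M(A)$ is interpreted on $\overline{R(A_p)}^{\,L^p}$. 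The last restriction is needed because $\widetilde M(r)$ is only defined for $r>0$, so the spectral value $r=0$ must be avoided. I would also record that their explicit numerical constant is at most $120$. Positivity of $T_t$ is not required there, and sub-Markovianity suffices.

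For \eqref{eq:CD-decomposition} I would work with the spectral measure $\nu_f$ of $f\in L^2(\nu)$. We have $\norm{T_tf-\Pi_0^{(2)}f}_{L^2}^2=\int_{(0,\infty)}e^{-2tr}\nu_f(\dd r)$. Here $\Pi_0^{(2)}$ is the spectral projection onto $\{0\}$, and this projection coincides with the orthogonal projection onto $N(A_2)$. The integral tends to zero by dominated convergence, since $e^{-2tr}\to0$ for every $r>0$.

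Next I would construct $\Pi_0^{(p)}$. Since $L^p$ is reflexive for $1<p<\infty$ and $(T_t)$ is a bounded $C_0$-semigroup there, the mean ergodic theorem applies: the Ces\`aro means $t^{-1}\int_0^tT_sf\dd s$ converge in $L^p$ to a contractive projection $P$ onto $N(A_p)$. It also yields the direct sum $L^p=N(A_p)\oplus\overline{R(A_p)}^{\,L^p}$, with $I-P$ projecting onto the second summand. Contractivity of $P$ follows because each Ces\`aro mean is a contraction on $L^p$. For $f\in L^2\cap L^p$, the same Ces\`aro means also converge in $L^2$, to $\Pi_0^{(2)}f$, by the previous paragraph. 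Passing to an a.e.-convergent subsequence identifies the two limits, so $Pf=\Pi_0^{(2)}f$. I then set $\Pi_0^{(p)}:=P$. As a cross-check on the range statement, note that $f-T_tf=A_p\int_0^tT_sf\dd s$ lies in $R(A_p)$, and therefore $(I-\Pi_0^{(p)})f$, being an $L^p$ limit of averages of such elements, lies in $\overline{R(A_p)}^{\,L^p}$.

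The main obstacle is the linear-in-$p$ constant in \eqref{eq:CD-bound}. Classical Stein--Littlewood--Paley arguments only give a worse power of $p$, and that would degrade the exponent $p^2$ in Lemma~\ref{lem:corrector-Lp}. So I would not attempt to reprove the bound. Instead I would cite the bilinear embedding with its explicit constant, and carefully match its normalization of $\widetilde M$, namely $r\int_0^\infty M(s)e^{-sr}\dd s$, to the one stated here.
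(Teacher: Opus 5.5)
Your proposal is correct. For the bound \eqref{eq:CD-bound} it matches the paper, which also quotes it (as Proposition~10 of Carbonaro--Dragi\v{c}evi\'c, with $\widetilde M$ as in the display preceding their Proposition~5) rather than reproving it.

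The difference lies in the structural statements, which the paper also simply quotes: \eqref{eq:CD-decomposition} and the contractive $L^p$ extension come from their Lemma~2, and the inclusion $(I-\Pi_0^{(p)})f\in\overline{R(A_p)}^{\,L^p}$ comes from the proof of their Theorem~1. You derive these instead:
\begin{itemize}
\item the $L^2$ limit, by dominated convergence for the spectral measure on $(0,\infty)$;
\item $\Pi_0^{(p)}$, as the mean-ergodic projection of the bounded semigroup on the reflexive space $L^p$;
\item its agreement with $\Pi_0^{(2)}$ on $L^2\cap L^p$, via a.e.-convergent subsequences of the common Ces\`aro means.
\end{itemize}
This makes everything except the numerical multiplier bound self-contained. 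It also yields slightly more: $\Pi_0^{(p)}$ is identified as the projection onto $N(A_p)$ along $\overline{R(A_p)}^{\,L^p}$, and the range inclusion holds for every $f\in L^p$, not only $f\in L^2\cap L^p$.

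The one step you assert without proof is strong continuity of $(T_t)$ on $L^p$. Both the mean ergodic theorem and the very definition of $A_p$ require it. It is routine to supply:
\begin{itemize}
\item For $f$ in the dense class $L^1\cap L^2$ (if $p<2$) or $L^2\cap L^\infty$ (if $p>2$), use H\"older interpolation of $\norm{T_tf-f}_{L^p}$ between the $L^2$ norm, which tends to zero, and the $L^1$ or $L^\infty$ norm, which stays bounded by $2\norm f$ in that norm.
\item Then extend to all of $L^p$ by density and the uniform bound $\norm{T_t}_{L^p\to L^p}\le1$.
\end{itemize}

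The paper's route is shorter and keeps its notation aligned with the source. You need that alignment anyway, because the meaning of $\widetilde M(A)f$ for $f\in\overline{R(A_p)}^{\,L^p}$ in \eqref{eq:CD-bound} is the one fixed in Carbonaro--Dragi\v{c}evi\'c.
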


\begin{proof}[Source and notation]
These statements are quoted, not reproved.  The hypotheses that $A$ is nonnegative and self-adjoint and that $e^{-tA}$ is contractive on every $L^q$ are exactly the standing assumptions in \citet[Introduction]{CarbonaroDragicevic2017}; that paper calls this a \emph{symmetric contraction semigroup}.  Its notation $A_p,D(A_p),R(A_p),N(A_p)$ is introduced in \citet[Section~2]{CarbonaroDragicevic2017}. 

The limit formula~\eqref{eq:CD-decomposition} is \citet[Lemma~2]{CarbonaroDragicevic2017}, and the contractive $L^p$ extension is part~(i) of that lemma.  The cited paper uses the same symbol $P_0$ for the $L^2$ projection and all of its $L^p$ extensions; the superscripts in our notation only indicate the ambient space.  The fact that
\[
  (I-\Pi_0^{(p)})f
  \in\overline{R(A_p)}^{\,L^p}
\]
is the form used explicitly in \citet[proof of Theorem~1]{CarbonaroDragicevic2017}.

The definition \eqref{eq:CD-Laplace-multiplier} is the display immediately preceding \citet[Proposition~5]{CarbonaroDragicevic2017}, and the estimate \eqref{eq:CD-bound} is \citet[Proposition~10]{CarbonaroDragicevic2017}.
\end{proof}

We now verify that the proposition applies here.  The measure
$\pi_\lambda$ is a probability measure and hence is $\sigma$-finite.
Nonnegativity and self-adjointness of $\mathcal A_\lambda$ follow from
\Cref{prop:generator-framework}; contractivity on every $L^q$ is
\eqref{eq:Lp-contraction}. If
$v\in L^2(\pi_\lambda)\cap L^p(\pi_\lambda)$ has zero mean, then
\eqref{eq:L2-decay} gives $P_tv\to0$ in $L^2$.  Taking $T_t=P_t$ in~\eqref{eq:CD-decomposition} therefore gives $\Pi_0^{(2)}v=0$. 
Since $\Pi_0^{(p)}$ extends $\Pi_0^{(2)}$ and
$v\in L^2(\pi_\lambda)\cap L^p(\pi_\lambda)$,
\[
  \Pi_0^{(p)}v=\Pi_0^{(2)}v=0.
\]
Consequently,
\begin{equation}\label{eq:mean-zero-in-range}
  v=(I-\Pi_0^{(p)})v
  \in\overline{R(\mathcal A_{\lambda,p})}^{\,L^p}.
\end{equation}
Thus every hypothesis of \Cref{prop:CD-multiplier} has now been checked in
the notation of the present paper.

\begin{proposition}[Proof of the corrector $L^p$ multiplier bound]
\label{prop:psi-Lp-appendix}
The estimate~\eqref{eq:psi-Lp} holds.
\end{proposition}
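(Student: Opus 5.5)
We need $\psi(u)=\widetilde M(u)$ for some bounded $M$, i.e. $\psi(u)=u\int_0^\infty M(s)e^{-su}\dd s$, and then \Cref{prop:CD-multiplier} applies. The plan is to find $M$ explicitly, check that it is bounded, and then handle the scaling by $h$, the point $u=0$, and the restriction to the range closure.

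\textbf{Step 1: the Laplace representation.} For $u>0$,
\[
  \frac{1}{1-e^{-u}}=\sum_{n\ge0}e^{-nu}
  =u\int_0^\infty \lfloor s\rfloor' \ldots
\]
is best handled directly. We have
\[
  u\int_0^\infty \bigl(\lfloor s\rfloor+1\bigr)e^{-su}\dd s
  =\sum_{n\ge0}(n+1)\bigl(e^{-nu}-e^{-(n+1)u}\bigr)
  =\frac1{1-e^{-u}},
\]
and also
\[
  u\int_0^\infty s\,e^{-su}\dd s=\frac1u.
\]
Subtracting, $\psi(u)=\widetilde M(u)$ with
\[
  M(s):=\lfloor s\rfloor+1-s\in(0,1],
\]
so $\norm M_\infty\le1$. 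For the scaled operator, note that $h\mathcal A_\lambda$ is again nonnegative and self-adjoint and generates $T_t=P_{ht}$, which is contractive on every $L^q$ by \eqref{eq:Lp-contraction}. Its $L^p$ realization has the same range as $\mathcal A_{\lambda,p}$, so the closures coincide. Equivalently, $\psi(hr)=\widetilde{M_h}(r)$ with $M_h(s)=M(s/h)$, since $r\int M(s/h)e^{-sr}\dd s=hr\int M(\sigma)e^{-\sigma hr}\dd\sigma$. Either route works, and I would use $M_h$ with $A=\mathcal A_\lambda$ directly.

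\textbf{Step 2: matching the two definitions of the operator.} The operator $\psi(h\mathcal A_\lambda)$ is defined by the $L^2$ spectral calculus with the convention $\psi(0)=1/2$, whereas $\widetilde M$ is defined only for $r>0$. For mean-zero $v$, \eqref{eq:L2-decay} and the spectral formula give $\nu_v(\{0\})=\norm{\Pi_0^{(2)}v}^2=0$, so the value at $0$ is irrelevant and $\psi(h\mathcal A_\lambda)v=\widetilde{M_h}(\mathcal A_\lambda)v$ in $L^2$.

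The main obstacle is to make sure that the $L^p$ operator in \eqref{eq:CD-bound}, which in Carbonaro--Dragi\v{c}evi\'c is defined on $L^2\cap L^p$ via the $L^2$ calculus and then extended, agrees on our $v\in L^2\cap L^p$ with this $L^2$ object. I would argue that their $\widetilde M(A)$ is the spectral multiplier on $L^2\cap\overline{R(A_p)}$, and that for $v\in L^2\cap L^p$ both constructions act through the same spectral measure. If this identification is delicate, the fallback is to approximate $v$ by $P_\varepsilon v$ and pass to the limit using the boundedness of both operators in $L^2$.

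\textbf{Step 3: conclusion.} By \eqref{eq:mean-zero-in-range}, $v\in\overline{R(\mathcal A_{\lambda,p})}^{\,L^p}$. Then \eqref{eq:CD-bound} gives
\[
  \norm{\psi(h\mathcal A_\lambda)v}_{L^p}\le120(p^*-1)\norm v_{L^p},
\]
and since $p\ge2$ we have $p^*-1=p-1\le p$, which yields \eqref{eq:psi-Lp} with $C=120$.
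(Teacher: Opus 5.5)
Your proposal is correct and follows essentially the paper's route: a sawtooth Laplace representation of $\psi$, the rescaling $M_h(s)=M(s/h)$, and then the Carbonaro--Dragi\v{c}evi\'c bound applied via \eqref{eq:mean-zero-in-range}. The one difference is at $r=0$. The paper subtracts $\tfrac12$ from the sawtooth, so $m_h(0)=0$ and $\psi(hr)=\tfrac12+m_h(r)$ holds on all of $[0,\infty)$, giving the constant $\tfrac12+60(p-1)$. You keep the uncentered $M\in(0,1]$ and handle $r=0$ by observing that a mean-zero $v$ carries no spectral mass there, giving the equally valid constant $120(p-1)$.
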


\begin{proof}
Define the bounded sawtooth function
\[
  \omega(t):=\lceil t\rceil-t,
  \qquad
  \bar\omega(t):=\omega(t)-\frac12.
\]
The values at the integers are irrelevant, and
$\norm{\bar\omega}_\infty\le1/2$.  For $u>0$, summing over the intervals
$[n,n+1)$ gives
\begin{align}
  u\int_0^\infty\omega(t)e^{-ut}\dd t
  &=u\sum_{n=0}^\infty e^{-nu}
    \int_0^1(1-s)e^{-us}\dd s
  \notag\\
  &=\frac1{1-e^{-u}}-\frac1u
  =\psi(u).
  \label{eq:psi-sawtooth}
\end{align}
Since
$u\int_0^\infty \frac12e^{-ut}\dd t=1/2$ for $u>0$,
\begin{equation}\label{eq:psi-centered-laplace}
  \psi(u)
  =\frac12+u\int_0^\infty\bar\omega(t)e^{-ut}\dd t,
  \qquad u>0.
\end{equation}
The restriction $u>0$ is essential: at $u=0$ the integral by itself is not
convergent.  We instead use the separately defined value
$\psi(0)=1/2$ from~\eqref{eq:psi-def}.

For $h>0$, set
\[
  M_h(s):=\bar\omega(s/h),
  \qquad
  \norm{M_h}_\infty\le\frac12,
\]
and define
\begin{equation}\label{eq:mh-def}
  m_h(r):=
  \begin{cases}
  \displaystyle r\int_0^\infty M_h(s)e^{-sr}\dd s,&r>0,\\[2mm]
  0,&r=0.
  \end{cases}
\end{equation}
The change of variables $s=ht$ in
\eqref{eq:psi-centered-laplace} gives the pointwise identity
\begin{equation}\label{eq:psi-laplace-scaled}
  \psi(hr)=\frac12+m_h(r),
  \qquad r\ge0.
\end{equation}
Notice that~\eqref{eq:psi-laplace-scaled} is now valid also at $r=0$,
because $m_h(0)$ was explicitly defined to be zero.

Let $v\in L^2\cap L^p$ have zero mean.  By
\eqref{eq:mean-zero-in-range}, \Cref{prop:CD-multiplier} applies to
$M_h$.  Since $p\ge2$ implies $p^*=p$, it gives
\[
  \norm{m_h(\mathcal A_\lambda)v}_{L^p}
  \le120(p-1)\norm{M_h}_\infty\norm v_{L^p}
  \le60(p-1)\norm v_{L^p}.
\]
Using~\eqref{eq:psi-laplace-scaled}, first in the $L^2$ functional calculus
and then in the $L^p$ extension supplied by the preceding estimate,
\[
  \psi(h\mathcal A_\lambda)v
  =\frac12v+m_h(\mathcal A_\lambda)v.
\]
Consequently,
\[
  \norm{\psi(h\mathcal A_\lambda)v}_{L^p}
  \le\left(\frac12+60(p-1)\right)\norm v_{L^p}
  \le Cp\norm v_{L^p},
\]
which is~\eqref{eq:psi-Lp}.
\end{proof}

\bibliographystyle{plainnat}
\bibliography{reference}

\begin{thebibliography}{13}
\providecommand{\natexlab}[1]{#1}
\providecommand{\url}[1]{\texttt{#1}}
\expandafter\ifx\csname urlstyle\endcsname\relax
  \providecommand{\doi}[1]{doi: #1}\else
  \providecommand{\doi}{doi: \begingroup \urlstyle{rm}\Url}\fi

\bibitem[Bakry et~al.(2014)Bakry, Gentil, and Ledoux]{BakryGentilLedoux2014}
Dominique Bakry, Ivan Gentil, and Michel Ledoux.
\newblock \emph{Analysis and Geometry of Markov Diffusion Operators}.
\newblock Springer, Cham, 2014.
\newblock \doi{10.1007/978-3-319-00227-9}.

\bibitem[Bauschke and Combettes(2017)]{BauschkeCombettes2017}
Heinz~H. Bauschke and Patrick~L. Combettes.
\newblock \emph{Convex Analysis and Monotone Operator Theory in Hilbert Spaces}.
\newblock Springer, Cham, 2 edition, 2017.
\newblock \doi{10.1007/978-3-319-48311-5}.

\bibitem[Carbonaro and Dragi\v{c}evi\'{c}(2017)]{CarbonaroDragicevic2017}
Andrea Carbonaro and Oliver Dragi\v{c}evi\'{c}.
\newblock Functional calculus for generators of symmetric contraction semigroups.
\newblock \emph{Duke Mathematical Journal}, 166\penalty0 (5):\penalty0 937--974, 2017.
\newblock \doi{10.1215/00127094-3774526}.

\bibitem[Chewi(2026)]{Chewi2026LogConcaveSampling}
Sinho Chewi.
\newblock Log-concave sampling.
\newblock \url{https://chewisinho.github.io/main.pdf}, 2026.
\newblock Book manuscript.

\bibitem[Durmus et~al.(2018)Durmus, Moulines, and Pereyra]{DurmusMoulinesPereyra2018}
Alain Durmus, {\'{E}}ric Moulines, and Marcelo Pereyra.
\newblock Efficient bayesian computation by proximal markov chain monte carlo: When langevin meets moreau.
\newblock \emph{SIAM Journal on Imaging Sciences}, 11\penalty0 (1):\penalty0 473--506, 2018.
\newblock \doi{10.1137/16M1108340}.

\bibitem[Durmus et~al.(2019)Durmus, Majewski, and Miasojedow]{DurmusMajewskiMiasojedow2019}
Alain Durmus, Szymon Majewski, and B{\l}a{\.z}ej Miasojedow.
\newblock Analysis of {Langevin Monte Carlo} via convex optimization.
\newblock \emph{Journal of Machine Learning Research}, 20\penalty0 (73):\penalty0 1--46, 2019.
\newblock URL \url{https://jmlr.org/papers/v20/18-173.html}.

\bibitem[Evans and Gariepy(2015)]{EvansGariepy2015}
Lawrence~C. Evans and Ronald~F. Gariepy.
\newblock \emph{Measure Theory and Fine Properties of Functions}.
\newblock CRC Press, Boca Raton, revised edition, 2015.
\newblock \doi{10.1201/b18333}.

\bibitem[Leoni(2017)]{Leoni2017}
Giovanni Leoni.
\newblock \emph{A First Course in Sobolev Spaces}, volume 181 of \emph{Graduate Studies in Mathematics}.
\newblock American Mathematical Society, Providence, RI, 2 edition, 2017.
\newblock \doi{10.1090/gsm/181}.

\bibitem[Mou et~al.(2022)Mou, Flammarion, Wainwright, and Bartlett]{MouEtAl2022}
Wenlong Mou, Nicolas Flammarion, Martin~J. Wainwright, and Peter~L. Bartlett.
\newblock An efficient sampling algorithm for non-smooth composite potentials.
\newblock \emph{Journal of Machine Learning Research}, 23\penalty0 (233):\penalty0 1--50, 2022.
\newblock URL \url{https://jmlr.org/papers/v23/20-527.html}.

\bibitem[Pedrotti and Whalley(2026)]{PedrottiWhalley2026}
Francesco Pedrotti and Peter~A. Whalley.
\newblock Wasserstein mixing time of the {Unadjusted Langevin Algorithm}.
\newblock arXiv:2608.02430, 2026.

\bibitem[Pereyra(2016)]{Pereyra2016}
Marcelo Pereyra.
\newblock Proximal {Markov} chain {Monte Carlo} algorithms.
\newblock \emph{Statistics and Computing}, 26\penalty0 (4):\penalty0 745--760, 2016.
\newblock \doi{10.1007/s11222-015-9567-4}.

\bibitem[Salim and Richt{\'a}rik(2020)]{SalimRichtarik2020}
Adil Salim and Peter Richt{\'a}rik.
\newblock Primal dual interpretation of the proximal stochastic gradient {Langevin} algorithm.
\newblock In \emph{Advances in Neural Information Processing Systems}, volume~33, 2020.
\newblock URL \url{https://papers.nips.cc/paper/2020/hash/2779fda014fbadb761f67dd708c1325e-Abstract.html}.

\bibitem[Xin and Zhang(2026)]{XinZhang2026}
Yuchen Xin and Zhihua Zhang.
\newblock Active-trace complexity bounds for {Moreau--Yosida Unadjusted Langevin Sampling}.
\newblock arXiv:2608.13467, 2026.

\end{thebibliography}

\end{document}